\newtheorem{assumption}{Assumption}
\def\barzetak{(\ft{t}(\xbtrk{k}) - \ytrk{k})}
\def\barzetal{(\ft{t}(\xbtrk{k}) - \ytrk{l})}
\def\rkhsh{\mathcal{H}}
\def\Rad{\text{Rad}}
\def\Fbase{\mathbb{V}}
\def\Fa{\mathcal{F}^{\dag}}
\def\joint{\mathcal{J}}
\def\mubase{\mu_{\text{base}}}
\def\Risk{\mathcal{R}}
\def\Lossm{L}
\newcommand{\ball}[2]{\mathcal{B}(#1; #2)}
\def\Lossb{\mathcal{L}}
\def\rhoz{\rho_{\zb}}
\def\rhoa{\rho_{a}}
\def\rhoW{\rho_{W}}
\def\rhob{\rho_{b}}
\def\Pcal{\mathcal{P}}
\newcommand{\Atmbt}[1]{A^{\mb}_{#1}}
\newcommand{\Htmbt}[1]{H^{\mb}_{#1}}
\def\falt{g}
\def\lambb{\lambdab}
\def\muhid{\nu}
\newcommand{\mumbtrit}[1]{\mu^{\mb}_{#1, \triangle}}
\newcommand{\tilmumbtrit}[1]{\tilde{\mu}^{\mb}_{#1, \triangle}}
\newcommand{\mudtrit}[1]{\mu_{#1, \blacktriangle}}
\newcommand{\mumbdtrit}[1]{\mu^{\mb}_{#1, \blacktriangle}}
\newcommand{\tilmumbdtrit}[1]{\tilde{\mu}^{\mb}_{#1, \blacktriangle}}
\newcommand{\Thetabmbtrit}[1]{\Theta^{\mb, \triangle}_{#1}}
\newcommand{\Thetabdtrit}[1]{\Theta^{\blacktriangle}_{#1}}
\newcommand{\Thetabmbdtrit}[1]{\Theta^{\mb, \blacktriangle}_{#1}}
\newcommand{\Adtrit}[1]{A^{\blacktriangle}_{#1}}
\newcommand{\Ubdtrit}[1]{\Ub^{\blacktriangle}_{#1}}
\newcommand{\Utritk}[2]{U^{\triangle}_{#1, #2}}
\newcommand{\Udtritk}[2]{U^{\blacktriangle}_{#1, #2}}
\newcommand{\Vbdtrit}[1]{\Vb^{\blacktriangle}_{#1}}
\newcommand{\Vdtritk}[2]{V^{\blacktriangle}_{#1, #2}}
\newcommand{\ftritk}[2]{f^{\triangle}_{#1, #2}}
\newcommand{\Ambtrit}[1]{A^{\mb, \triangle}_{#1}}
\newcommand{\Ambdtrit}[1]{A^{\mb, \blacktriangle}_{#1}}
\newcommand{\Ubmbtrit}[1]{\Ub^{\mb, \triangle}_{#1}}
\newcommand{\Ubmbdtrit}[1]{\Ub^{\mb, \blacktriangle}_{#1}}
\newcommand{\Umbtrit}[1]{U^{\mb, \triangle}_{#1}}
\newcommand{\Umbdtrit}[1]{U^{\mb, \blacktriangle}_{#1}}
\newcommand{\Umbtritk}[2]{U^{\mb, \triangle}_{#1, #2}}
\newcommand{\Umbdtritk}[2]{U^{\mb, \blacktriangle}_{#1, #2}}
\newcommand{\Vbmbdtrit}[1]{\Vb^{\mb, \blacktriangle}_{#1}}
\newcommand{\Vmbdtrit}[1]{V^{\mb, \blacktriangle}_{#1}}
\newcommand{\Vmbdtritk}[2]{V^{\mb, \blacktriangle}_{#1, #2}}
\newcommand{\fmbtritk}[2]{f^{\mb, \triangle}_{#1, #2}}
\newcommand{\mut}[1]{\mu_{#1}}
\newcommand{\ft}[1]{f_{#1}}
\newcommand{\xbg}[1]{\xb'_{#1}}
\def\Tpinv{T^{+}}
\def\hatTpinv{\hat{T}^{+}}
\def\Ran{\text{Ran}}
\def\Gg{G'}
\def\ng{n'}
\newcommand{\atmax}[1]{a^{#1}_{\max}}
\def\Pbb{\mathbb{P}}
\def\zerob{\boldsymbol{0}}
\def\Ggmbmax{{G'}^{m_1}_{\max}}
\def\NTK{\mathcal{K}}
\def\NTKmat{K}
\newcommand{\NTKt}[1]{\NTK_{#1}}
\newcommand{\NTKta}[1]{\NTK_{a, #1}}
\newcommand{\NTKtW}[1]{\NTK_{W, #1}}
\newcommand{\NTKmatt}[1]{\NTKmat_{#1}}
\newcommand{\NTKmatWt}[1]{\NTKmat_{W, #1}}
\def\CFM{transport map}
\newcommand{\Lossbt}[1]{\Lossb_{#1}}
\newcommand{\fmbt}[1]{f^{\mb}_{#1}}
\newcommand{\mumbt}[1]{\mu^{\mb}_{#1}}
\newcommand{\mumbkt}[2]{\mu^{\mb_{#1}}_{#2}}
\newcommand{\zetak}[2]{\big ( #2 - \ytrk{#1} \big )}
\def\rhoz{\rho_{\zb}}
\def\rhoa{\rho_{a}}
\def\rhoW{\rho_{W}}
\def\rhob{\rho_{b}}
\def\Pcal{\mathcal{P}}
\def\Hilb{\mathcal{H}}
\newcommand{\summ}[2]{\sum_{#1=1}^{#2}}
\newcommand{\Thetabt}[1]{\Theta_{#1}}
\newcommand{\Thetabtmb}[1]{\Theta^{\mb}_{#1}}
\newcommand{\At}[1]{A_{#1}}
\newcommand{\Ht}[1]{H_{#1}}
\newcommand{\Ct}[1]{C_{#1}}
\newcommand{\Lambt}[1]{\Lambdab_{#1}}
\newcommand{\Atah}[1]{A_{#1}(a, h)}
\newcommand{\Htah}[1]{H_{#1}(a, h)}
\newcommand{\Ctal}[1]{C_{#1}(a, \lambb)}
\newcommand{\Lambtal}[1]{\Lambdab_{#1}(a, \lambb)}
\def\betaA{\beta_a}
\def\betaB{\beta_b}
\newcommand{\sigbig}[1]{\sigma \big ( #1 \big )}
\newcommand{\sigtbig}[1]{\sigma_2 \big ( #1 \big )}
\newcommand{\sigobig}[1]{\sigma_1 \big ( #1 \big )}
\newcommand{\sigtpbig}[1]{{\sigma_2}' \big ( #1 \big )}
\def\falt{g}
\newcommand{\xtildetrk}[1]{\xib_{#1}}
\def\lambb{\lambdab}
\def\muhid{\nu}
\newcommand{\muhidt}[1]{\nu_{#1}}
\def\Ghalf{G^{\frac{1}{2}}}
\def\Gmhalf{(G^{+})^{\frac{1}{2}}}
\def\Gm{G^{+}}
\newcommand{\mutrit}[1]{\mu_{#1, \triangle}}
\newcommand{\Thetabtrit}[1]{\Theta^{\triangle}_{#1}}
\newcommand{\Atrit}[1]{A^{\triangle}_{#1}}
\newcommand{\Ubtrit}[1]{\Ub^{\triangle}_{#1}}
\def\Xtil{\boldsymbol{\Xi}}
\newcommand{\Projj}[1]{\text{P}_{#1}}
\newcommand{\Proj}[2]{\text{P}_{#1}({#2})}
\newcommand{\Projorth}[2]{\text{P}^{\perp}_{#1}({#2})}
\def\Gg{G'}
\def\Ggmb{{G'}^{m_1}}
\def\ng{n'}
\def\Pbb{\mathbb{P}}
\def\zerob{\boldsymbol{0}}
\def\Rbb{{\mathbb{R}}}
\def\Nbb{{\mathbb{N}}}
\def\domX{\mathcal{X}}
\def\varn{\mathscr{C}}
\newcommand{\varnp}[4]{\varn^{(#1)}_{#2, #3, #4}}
\newcommand{\varninfty}[3]{\varn_{#1, #2, #3}}
\def\xb{\boldsymbol{x}}
\def\yb{\boldsymbol{y}}
\def\zb{\boldsymbol{z}}
\def\Zb{\boldsymbol{Z}}
\def\Ub{\boldsymbol{U}}
\def\Vb{\boldsymbol{V}}
\def\thetab{\boldsymbol{\theta}}
\def\lambdab{\boldsymbol{\lambda}}
\def\Lambdab{\boldsymbol{\Lambda}}
\def\xib{\boldsymbol{\xi}}
\def\ub{\boldsymbol{u}}
\def\wb{\boldsymbol{w}}
\def\ab{\boldsymbol{a}}
\def\bb{\boldsymbol{b}}
\def\vb{\boldsymbol{v}}
\def\ub{\boldsymbol{u}}
\def\mb{\boldsymbol{m}}
\def\EE{\mathbb{E}}
\def\Con{\mathcal{C}}
\def\ev{\mathtt{e}}
\def\hatev{\hat{\ev}}
\def\evtr{\mathtt{e}_{\triangle}}
\def\hatevtr{\hat{\mathtt{e}}_{\triangle}}
\def\hatevtrpf{(\hat{\mathtt{e}}_{\triangle})_{\#}}
\def\Qfun{\mathcal{Q}}
\def\Gfun{\mathcal{G}}
\def\Gmat{G}
\def\Gmb{\Gmat^{m_1}}
\def\lambmin{\lambda_{\min}}
\def\barzetak{(\ft{t}(\xbtrk{k}) - \ytrk{k})}
\def\barzetal{(\ft{t}(\xbtrk{l}) - \ytrk{l})}
\def\rkhsh{\mathcal{H}}
\def\Rad{\text{Rad}}
\def\Fbase{\mathcal{U}}
\def\Fa{\mathcal{F}}
\newcommand{\FaUsigc}[3]{\Fa_{#1, #2, #3}}
\newcommand{\FbpUsig}[4]{\Fa_{#2, #3, #4, #1}}
\newcommand{\FbpUsigc}[5]{\Fa_{#2, #3, #4, #1, #5}}
\newcommand{\FbUsig}[3]{\Fa_{#1, #2, #3}}
\newcommand{\FbUsigc}[4]{\Fa_{#1, #2, #3, #4}}
\def\joint{\mathcal{J}}
\def\mubase{\mu_{\text{base}}}
\def\Risk{\mathcal{R}}
\def\Lossm{L}
\def\Lossb{\mathcal{L}}
\def\rhoz{\rho_{\zb}}
\def\rhoa{\rho_{a}}
\def\rhoW{\rho_{W}}
\def\rhob{\rho_{b}}
\def\Pcal{\mathcal{P}}
\def\Hilb{\mathcal{H}}
\def\Hilbtri{\mathcal{H}_{\triangle}}
\def\falt{g}
\def\xtilde{\xib}
\def\lambb{\lambdab}
\def\muhid{\nu}
\def\Ghalf{\Gmat^{\frac{1}{2}}}
\def\Gmhalf{(\Gmat^{+})^{\frac{1}{2}}}
\def\Gm{G^{+}}
\def\Tpinv{T^{+}}
\def\hatTpinv{\hat{T}^{+}}
\def\Ran{\text{Ran}}
\def\Gfunmb{\Gfun^{m_1}}
\def\Gg{G'}
\def\Gtrte{G_{\blacktriangle}}
\def\Gmbtrte{G^{m_1}_{\blacktriangle}}
\def\ng{n'}
\def\evte{\mathtt{e}_{\blacktriangle}}
\def\hatevte{\hat{\mathtt{e}}_{\blacktriangle}}
\def\Pbb{\mathbb{P}}
\def\zerob{\boldsymbol{0}}
\newcommand{\etat}[1]{\eta_t}
\newcommand{\tildeetat}[1]{\tilde{\eta}_t}
\newcommand{\ait}[1]{a_{i, #1}}
\newcommand{\bit}[1]{b_{i, #1}}
\newcommand{\hit}[1]{h_{i, #1}}
\newcommand{\hti}[2]{h_{#2, #1}}
\newcommand{\Wijt}[1]{W_{i, j, #1}}
\DeclareMathOperator{\supp}{supp}
\def\ptl{P-$3$L }
\def\ptbl{P-$\boldsymbol{3}$L }
\def\EE{\mathbb{E}}
\def\Risk{\mathcal{R}}
\def\Lossm{L}
\newcommand{\xbtrk}[1]{\xb_{#1}}
\newcommand{\ytrk}[1]{y_{#1}}
\newcommand{\fmu}[1]{f(\hspace{1.5pt}\cdot\hspace{1.5pt}; #1)}
\begin{document}

\title{A Functional-Space Mean-Field Theory of Partially-Trained Three-Layer Neural Networks}

\author{\name Zhengdao Chen\thanks{Corresponding author; at New York University when the first version of this manuscript was written.} \email zhengdao.c3@gmail.com \\
      \addr Google Research\\
      Mountain View, CA 94043
      \AND
      \name Eric Vanden-Eijnden \email eve2@cims.nyu.edu \\
      \addr Courant Institute, New York University\\
      New York, NY 10003
      \AND
      \name Joan Bruna \email bruna@cims.nyu.edu \\
      \addr Courant Institute, New York University\\
      New York, NY 10003}

\editor{Quanquan Gu}

\maketitle

\begin{abstract}
To understand the training dynamics of neural networks, prior studies have considered the mean-field limit of two-layer neural networks as the width tends to infinity, establishing theoretical guarantees for its convergence under gradient flow training as well as approximation and generalization capabilities. In this work, we study the infinite-width limit of a type of three-layer neural network where the first-layer weights are randomly sampled and untrained. To rigorously define the limiting model, we extend the mean-field theory by lifting the representation of neurons from Euclidean to functional spaces. This allows us to establish the mean-field training dynamics as a functional gradient flow with a time-varying kernel that remains positive-definite under suitable assumptions, thus proving a linear-rate convergence of its training loss. Furthermore, we define novel function spaces that contain the solutions obtained through the mean-field training dynamics and prove Rademacher complexity bounds for these spaces. Notably, our analysis applies to a range of scaling choices of the model, resulting in two distinct regimes of the mean-field limit that both exhibit feature learning through training.
\end{abstract}

\begin{keywords}
  neural network training, mean-field limit, feature learning, linear-rate convergence of gradient flow, function space of neural networks
\end{keywords}

\section{Introduction}
Despite involving a non-convex optimization problem, the training of neural networks (NNs) can often be solved in practice via simple algorithms such as gradient descent (GD) and its variants. To understand this, prior studies have obtained insights by examining the training dynamics of NNs when their layers are sufficiently wide. In particular, a line of works has considered two-layer ($2$L, a.k.a. one-hidden-layer or shallow) NNs in the \emph{mean-field (MF)} scaling \citep{mei2018mean, chizat2018global, rotskoff2018parameters, sirignano2020mean_lln}. On an input space $\domX \subseteq \Rbb^d$, a (scalar-valued) $2$L NN defines a function that maps any $\xb \in \domX$ to 
\begin{equation}
\label{eq:shallow}
    \frac{1}{m} \summ{i}{m} a_i \sigbig{\wb_i^{\intercal} \cdot \xb}~,
\end{equation}
where $m$ is the \emph{width} of the hidden layer, $\sigma: \Rbb \to \Rbb$ is the (nonlinear) \emph{activation function}, and the weight parameters of the first and second layers are contained in $W = [W_{i, j}]_{i \in [m], j \in [d]} = [\wb_1, ..., \wb_m]^{\intercal} \in \Rbb^{m \times d}$ and $\ab = [a_i]_{i \in [m]} \in \Rbb^m$, respectively, which are optimized during training. With the ``$1/m$'' scaling factor in \eqref{eq:shallow} inspired by the MF theory of interacting particle systems \citep{mckean1966markov, braun1977vlasov}, the model admits an integral representation and attains an \emph{infinite-width MF limit} as $m \to \infty$ in the form of 
\begin{equation}
\label{eq:integral_rep_shallow}
    \int_{\Rbb \times \Rbb^d} a \sigbig{\wb^{\intercal} \cdot \xb} \mu(da, d\wb)~,
\end{equation}
where $\mu$ is a probability measure on $\Rbb \times \Rbb^d$. The gradient flow (GF) training dynamics (i.e., GD with an infinitesimal step size) of the model's parameters corresponds to an evolution of $\mu$ under a Wasserstein GF \citep{ambrosio2008gradient} in the space of probability measures, which, in the MF limit, is known to converge to global minimizers of the loss under suitable conditions \citep{nitanda2017stochastic, chizat2018global, rotskoff2018parameters, mei2018mean, wojtowytsch2020convergence}.
Moreover, generalization and approximation guarantees can also be obtained for functions that exhibit an integral representation like \eqref{eq:integral_rep_shallow} \citep{bach2017breaking, ma2022barron}, thus establishing a solid theoretical framework for MF $2$L NNs that covers optimization, approximation, and generalization. Nonetheless, the theory is still limited in two major aspects: (1) an extension of the theory to deeper NNs is not apparent; (2) no convergence rate of the training loss is known in general settings, making it challenging to derive theoretical guarantees for finite training time (see discussions in Section~\ref{sec:related}).

In this work, we consider a type of \emph{partially-trained three-layer (P-$3$L) NN} defined as:
\begin{equation}
\label{eq:p3l}
    \begin{split}
        f^{\mb}_{\alpha}(\xb; \ab, W) 
        =&~ \frac{1}{m_2} \sum_{i=1}^{m_2} a_i \sigma_{2} \big ( h_i(\xb; W) \big )~,  \\
    \hspace{-10pt} \forall i \in [m_2] \quad : \quad h_i(\xb; W) =&~ \frac{1}{m_1^{\alpha}} \sum_{j=1}^{m_1} W_{ij} \sigma_{1} \big (\zb_j^{\intercal} \cdot \xb \big )~, \\
    \end{split}
\end{equation}
where the pair
$(m_1, m_2) =: \mb$ denotes the widths of the first and second hidden layers, $\sigma_1$ and $\sigma_2: \Rbb \to \Rbb$ are the activation functions of the first and second hidden layers, and $\alpha$ is a scaling exponent whose important role will be discussed later. The matrix $W = [W_{ij}]_{i \in [m_2], j \in [m_1]} \in \Rbb^{m_2 \times m_1}$ and the vector $\ab = [a_i]_{i \in [m_2]} \in \Rbb^{m_2}$ contain the weight parameters of the middle and output layers, respectively, and are both trained by GD. (For simplicity, we do not include bias terms in the model in the main theoretical analyses; in Appendix~\ref{app:bias} we describe a generalized version of the P-$3$L model with the bias term included in the second hidden layer.)
The input-layer parameters, $\zb_1, ..., \zb_{m_1} \in \Rbb^d$, are sampled randomly at initialization and untrained, hence the term ``partially-trained''.
For each $i \in [m_2]$, we refer to the function $h_i$ as the \emph{pre-activation function} (a.k.a. \emph{feature map}) represented by the $i$th neuron in the second hidden layer. We will often drop the dependency on $\ab$ and $W$ in $f^{\mb}$ and $h_i$ for notational simplicity.

When $\alpha = 0$, if $m_1 = m_2$ and the activation functions are $1$-homogeneous (e.g., identity or the ReLU function), \eqref{eq:p3l} under i.i.d. random initialization of the parameters is equivalent to a three-layer NN under the \textit{Neural Tangent Kernel} (\textit{NTK}; \citealt{jacot2018neural}) parameterization. In particular, as the widths tend to infinity, the model approaches a limit where the training dynamics is described by a functional GF with respect to a fixed kernel function --- the NTK. Guided by this observation, prior works have proved linear-rate convergence guarantees of the training loss \citep{du2019provably, du2019deep, allen2019convergence, zou2020gradient, oymak2020moderate, chen2021how} as well as generalization bounds \citep{arora2019fine, cao2019generalization, e2020comparative} for different kinds of NNs when the widths are sufficiently large. However, this simplified analysis arises from the large parameter scaling (in other words, a small $\alpha$), a regime where neurons in wide networks barely move during training, resulting in a lack of \emph{feature learning} \citep{chizat2019lazy, woodworth2020kernel}. For this reason, the NTK analysis does not explain the ability of NNs to perform representation learning through training, whose benefit has been shown by theoretical and empirical studies such as \cite{wei2019regularization, geiger2019disentangling, ghorbani2019limitations, ghorbani2020neural, lee2020infinite}.

Alternatively, \citet{chen2022on} consider the \ptl NN model with $\alpha = 1 / 2$ and show that when both $m_1$ and $m_2$ are large but finite, not only the model exhibits feature learning but also its training loss converges to zero at a linear rate in a regression setting. An intriguing question then is whether any well-defined infinite-width limit exists for this model. Note that if $m_1$ is fixed while $m_2$ tends to infinity, the model amounts to a $2$L NN in the MF scaling on top of a fixed embedding map, and hence an infinite-width limit can be derived analogously to that of $2$L NNs. However, this approach is no longer valid when $m_1$ also grows to infinity, and a new theory is needed to define the limiting model.

In this work, we develop a novel \emph{functional-space} MF theory for the infinite-width limit of the \ptl model with $\alpha \geq 1 / 2$. This allows us to examine the training dynamics in the infinite-width limit rigorously, which can be written as a functional GF with a \emph{time-varying} kernel, and prove a linear-rate convergence guarantee of the training loss. We see distinct behaviors of the infinite-width limit when $\alpha = 1 / 2$ versus $\alpha > 1/2$, and for both regimes, we characterize the space of functions corresponding to the MF model and prove bounds on their Rademacher complexity.

\subsection{Related works}
\label{sec:related}
\paragraph{Convergence rate of training dynamics of MF $2$L NN.} A number of studies have established the rate of convergence of the training of $2$L NN in the MF scaling, but typically only 1) under strong assumptions, 2) with modifications to the learning algorithm, or 3) for special tasks. For example, \citet{javanmard2020analysis} prove the linear-rate convergence of $2$L NN under GD under the assumption of displacement convexity, which is often too strong. \citet{hu2019mean, nitanda2022convex, chizat2022mfld} prove that mean-field Langevin dynamics on $2$L NN can converge exponentially to global minimizers if the entropic regularization is strong enough.
\citet{rotskoff2019global, wei2019regularization, nitanda2021particle, chizat2022sparse, oko2022particle} propose other modifications to the GD algorithm under which the training loss of MF $2$L NN converge at an exponential or polynomial rate.
\citet{li2020twobeyond} prove that a type of $2$L NNs trained by truncated GD in a student-teacher setup with Gaussian inputs learns the target function in a polynomial number of iterations. In contrast with these works, we will study the training of \ptl NNs in general $L_2$ regression tasks via vanilla GF without additional noise or regularization. On the side of negative results, \citet{wojtowytsch2020can} prove that if we train a $2$L NN to fit a Lipschitz target function under \emph{population} loss, the convergence rate cannot beat the curse of dimensionality. In comparison, we are interested in the empirical risk minimization (ERM) setting, where the loss function is evaluated on finitely many training data. The work of \citet{chen2022on} proves a linear-rate convergence guarantee for the $L_2$ training loss of the model defined by \eqref{eq:p3l} when $\alpha = 1 / 2$, which holds non-asymptotically when width is large. Our current work first establishes the limit of this model as $m_1$ and $m_2$ \emph{jointly} tend to infinity for both the  $\alpha = 1 / 2$ and the $\alpha > 1 / 2$ settings. Then, we prove a similar linear-rate convergence rate guarantee for the limiting model by analyzing its training dynamics as a functional GF with a time-varying kernel function.

\paragraph{MF theory of multi-layer NNs.}
The generalization of the MF limit from $2$L to deeper NNs is an intriguing and non-trivial task, and we refer the readers to \citet[Section 4.3]{sirignano2022mean_deep} for an exposition of the main challenges. Several works have made notable progress in this direction: \citet{nguyen2019mean} derives a MF limit of multi-layer NNs based on a symmetry among the neurons; by modeling the paths of weights, \citet{araujo2019mean} obtain a similar type of limit when the first and last layers are untrained; \citet{sirignano2022mean_deep} consider an alternative regime where the widths of the hidden layers tend to infinity sequentially. Notably, \citet{nguyen2020rigorous}, \citet{pham2020iclr} and \citet{fang2021modeling} derive MF limits of multi-layer NNs by defining neurons as feature maps on the input domain, opening up a perspective that inspires the function-space MF theory that we develop. However, we note some limitations of these highly interesting results: 
\begin{itemize}
    \item Even though \citet{nguyen2020rigorous, pham2020iclr, fang2021modeling} have proved global convergence results of the training dynamics, they rely on either diversity assumptions on the neurons (further discussed below) or certain re-parametrization and regularization. Moreover, no convergence rate guarantee has been derived. 
    \item There is a lack of theoretical characterization of the space of functions corresponding to these multi-layer MF NNs. Relevant to this point, \citet{e2020deepbanach} study a type of multi-layer models called \emph{neural trees} and propose a corresponding function space that generalizes the Barron space of $2$L NNs. However, the neural tree models form a much larger model class than NNs.
    \item The multi-layer NNs studied by these works all adopt the ``$1 / $width'' scaling in each layer (i.e., setting $\alpha = 1$ in \eqref{eq:p3l}), whose limitation we further discuss in the next paragraph. 
\end{itemize}
Besides fully-connected NNs, a few studies have also derived the MF limits of deep ResNets \citep{lu2020mean, ma2022barron, ding2022deepresnet}, whose behavior is nevertheless quite different from NNs with large widths in all layers. Finally, \citet{korolev2021two} develops an approximation theory of $2$L NN on Banach space inputs but does not study its training.

\paragraph{Scaling choices of wide multi-layer NNs.} Under the $\alpha = 1$ (a.k.a. the \emph{classical MF}) scaling, the neurons lose diversity if the widths tend to infinity and the parameters are sampled i.i.d. at initialization \citep{nguyen2020rigorous}, which calls for a reconsideration of how the model should be scaled based on the width \citep{luo2021phase, zhou2022empirical}. In particular, \citet{yang2021feature} propose an alternative \emph{maximum-update ($\mu P$)} scaling such that the infinite-width limit under i.i.d. initialization exhibits both feature learning \citep{ba2022high} and a diversity of the neurons' features, and it is connected to the asymptotic limit of approximate message passing \citep{bayati2011dynamics} and the dynamical mean-field theory from statistical physics \citep{bordelon2022self}. However, neither convergence guarantees nor the associated function spaces have been derived for this limit.

\subsection{Our contributions} 
\label{sec:contributions}
In this work, we derive an infinite-width MF-type limit of the \ptl NN model trained for $L_2$ regression by GF. 
By characterizing the neurons in its second hidden layer via the functions they represent on the input domain,
we define the limit as a probability measure on function spaces, and hence the name \emph{functional-space MF limit}. In particular,
\begin{itemize}
    \item We prove its existence for both the ``1/width'' scaling (corresponding to $\alpha = 1$ in \eqref{eq:p3l}) and the $\mu P$ scaling of \citet{yang2021feature} (corresponding to $\alpha = 1 / 2$) under i.i.d. initialization. A key to the proof is establishing its connection with the MF limit of a corresponding non-parametric $2$L NN on $\Rbb^n$ (where $n$ is the size of the training set), which is different between the two scaling regimes.
    \item We prove that in the MF limit, training loss can converge to zero at a linear rate via an insight that the training dynamics follows a functional GF under a \emph{time-varying} (thus allowing feature learning) kernel that remains \emph{positive definite}.
    \item We derive complexity measures that characterize the functions learned by the MF \ptl NNs and prove bounds on the Rademacher complexity of the function spaces associated with these complexity measures.
    \item We perform numerical experiments on two synthetic tasks to illustrate 1) the existence of the infinite-width limit, 2) distinct behaviors between the scaling choices of $\alpha = 1 / 2$ vs $\alpha = 1$, and 3) differences with the NTK model, $2$L NN and fully-trained $3$L NN.
\end{itemize}
In summary, the functional-space MF theory allows us to rigorously study \ptl NNs in the infinite-width limit  and establish its novel and interesting properties.

\begin{table}[]
\begin{tabular}{|ll|l|l|}
\hline
Main theoretical contributions &  & Results    & Examples of $\sigma_2$          \\ \hline
Convergence to MF limit  & & Theorem~\ref{thm:mflim} & tanh               \\ \hline
Linear-rate loss decay of MF &\hspace{-19pt}dynamics                                   & Theorem~\ref{prop:gc}                & tanh, ReLU-like*, linear \\ \hline
\multicolumn{1}{|l|}{}  & $\alpha > 1/2$ & Corollary~\ref{cor:rad_>1/2_hilb} & (leaky) ReLU, linear    \\ \cline{2-4} 
\multicolumn{1}{|l|}{\begin{tabular}[c]{@{}l@{}}Rademacher complexity bound \\ \end{tabular}} & $\alpha \geq 1/2$ & Corollary~\ref{cor:rad_1/2_hilb} & tanh, ReLU-like*, linear \\ \hline
\end{tabular}
\caption{Summary of main theoretical results and examples of the activation function $\sigma_2$ that satisfy the assumptions therein. *: ``ReLU-like'' includes ReLU, leaky ReLU, ELU, GELU, SiLU, softplus and Swish.}
\label{tab:assumptions}
\end{table}
\section{Problem setup}
In this work, we focus on the supervised $L_2$ regression setup. Let $\domX$ be the input space which is a compact subset of $\Rbb^d$, $\mathcal{Y} \subseteq [-1, 1]$ be the output space, and $\mathcal{D}$ be an underlying joint distribution on $\domX \times \mathcal{Y}$. The goal is to find a function $f$ that achieves a low \emph{population risk} $\Risk_{\mathcal{D}}(f)$, defined as
\begin{equation}
    \label{eq:risk}
    \Risk_{\mathcal{D}}(f) = \frac{1}{2} \EE_{(\xb, y) \sim \mathcal{D}} \left [ (f(\xb) - y)^2 \right ]~.
\end{equation}
In practice, instead of the true distribution $\mathcal{D}$, we are typically given a training data set consisting of $n$ i.i.d. samples from $\mathcal{D}$, $S = \{(\xbtrk{1}, \ytrk{1}), ..., (\xbtrk{n}, \ytrk{n})\} \sim \mathcal{D}^n$. Then, the strategy is to find a function that minimizes the \emph{empirical risk} as a proxy for \eqref{eq:risk}, defined as
\begin{equation}
    \widehat{\Risk}_{S}(f) = \frac{1}{2 n} \sum_{k=1}^n (f(\xbtrk{k}) - \ytrk{k})^2~.
\end{equation}
\noindent To find such a desired function, we parameterize the function by a \ptl NN and optimize its parameters using the empirical risk as the loss function:
\begin{equation}
    \Lossm(\ab, W) = \widehat{\Risk}_{S}(f^{\mb}(\cdot~; \ab, W)) = \frac{1}{2n} \sum_{k=1}^n (f^{\mb}(\xbtrk{k}; \ab, W) - \ytrk{k})^2~.
\end{equation}
with $f^{\mb}$ defined in~\eqref{eq:p3l}.
For simplicity, we do not consider any regularization term. The optimization problem is solved numerically by 
a combination of random initialization and GD training. First, we initialize each $a_i$, $W_{ij}$ and $\zb_j$ with values $\ait{0}$, $\Wijt{0}$ and $\zb_{j, 0}$ which are drawn randomly and independently from distributions $\rhoa$, $\rhoW$ and $\rhoz$, respectively. Next, for $t \geq 0$, we \emph{fix} the value of each $\zb_{j}$ while evolving each $\ait{t}$ and $\Wijt{t}$ by GD with respect to the loss function $\Lossm$. In this work, we limit our scope to studying the continuous-time version of GD, often called gradient flow (GF). Thus, if we use $\betaA \geq 0$ to represent the learning rate of $\ab_t = [a_{i, t}]_{i \in [m_2]}$ (relative to $W_t = [\Wijt{t}]_{i \in [m_2], j \in [m_1]}$) and rescale the learning rate of $\ab_t$ by $m_2$ and that of the $W_t$ by $m_2 m_1^{2\alpha-1}$ (see Remark~\ref{rmk:lr}), then each $\ait{t}$ and $\Wijt{t}$ evolve in time according to
\begin{equation}
    \frac{d}{dt}\ait{t} = - \betaA m_2 \frac{\partial \Lossm}{\partial \ait{t}}(\ab_t, W_t) = -\frac{\betaA}{n} \sum_{k=1}^n \zetak{k}{\fmbt{t}(\xbtrk{k})} \sigma_2 \big (\hit{t}(\xbtrk{k}) \big ) ~, ~\label{eq:dotati}
\end{equation}
\begin{equation}
\label{eq:dotWti}
    \begin{split}
        \frac{d}{dt} \Wijt{t} =& - m_2 m_1^{2\alpha-1} \frac{\partial \Lossm}{\partial \Wijt{t}}(\ab_t, W_t) \\ =& -\frac{\ait{t}}{n m_1^{1-\alpha}} \sum_{k=1}^n \zetak{k}{\fmbt{t}(\xbtrk{k})}  {\sigma_2}' \big (\hit{t}(\xbtrk{k}) \big ) \sigma_1 \big ( \zb_j^{\intercal} \cdot \xbtrk{k} \big )~.   
    \end{split}
\end{equation}
We write $\fmbt{t} = f^{\mb}_{\alpha}(\ab_t, W_t)$ for the output function and, for each $i \in [m_1]$, $\hit{t}(\xb) = \frac{1}{m_1^{\alpha}} \sum_{j=1}^{m_1} W_{ij} \sigma_{1} \big (\zb_j^{\intercal} \cdot \xb \big )$ for the pre-activation function of the $i$th neuron in the second hidden layer at time $t$. Induced by \eqref{eq:dotati} and \eqref{eq:dotWti}, the latter evolves in time according to
\begin{equation}
\label{eq:dothti}
    \begin{split}
        \frac{d}{dt} \hti{t}{i}(\xb) 
        =& \frac{\ait{t}}{n} \sum_{k=1}^n \zetak{k}{\fmbt{t}(\xbtrk{k})}  \sigtpbig{\hit{t}(\xbtrk{k})} \Gfunmb(\xbtrk{k}, \xb)~,
    \end{split}
\end{equation}
where given $\xb, \xb' \in \domX$, we define $\Gfunmb(\xb, \xb') = \frac{1}{m_1} \summ{j}{m_1} \sigma_1 \big ( \zb_j^{\intercal} \cdot \xb \big ) \sigma_1 \big ( \zb_j^{\intercal} \cdot \xb' \big )$. The evolution of the output function can then be expressed as
\begin{equation}
\label{eq:dotfmbt}
    \frac{d}{dt} \fmbt{t}(\xb) = \frac{1}{n} \sum_{k=1}^n \zetak{k}{\fmbt{t}(\xbtrk{k})} \NTKt{t}^{\mb}(\xbtrk{k}, \xb) ~,
\end{equation}
where for $\xb, \xb' \in \domX$, we define $\NTKt{t}^{\mb}(\xb, \xb') = \betaA \NTKt{a, t}^{\mb}(\xb, \xb') + \NTKt{W, t}^{\mb}(\xb, \xb')$, with
\begin{equation}
\label{eq:KaW}
\begin{split}
        \NTKt{a, t}^{\mb}(\xb, \xb') =&~ \frac{1}{m_2} \sum_{i=1}^{m_2} \sigma_2(\hti{t}{i}(\xb)) \sigma_2(\hti{t}{i}(\xb'))~, \\
    \NTKt{W, t}^{\mb}(\xb, \xb') =&~ \bigg ( \frac{1}{m_2} \sum_{i=1}^{m_2} (\ait{t})^2 {\sigma_2}' \big (\hti{t}{i}(\xb) \big ) {\sigma_2}'\big (\hti{t}{i}(\xb') \big ) \bigg ) \Gfunmb(\xb, \xb')~.
\end{split}
\end{equation}
$\NTKt{a, t}^{\mb}$ and $\NTKt{W, t}^{\mb}$ can be viewed as representing the contributions from the movements of $\ab_t$ and $W_t$ to the loss decay, respectively.
\begin{remark}
\label{rmk:lr}
    The choice to rescale the learning rates by $m_2$ in \eqref{eq:dotati} and $m_2 m_1^{2\alpha-1}$ in \eqref{eq:dotWti} is consistent with prior literature for both the $\alpha=1$ \citep{pham2020iclr, araujo2019mean, fang2021modeling, sirignano2022mean_deep} and the $\alpha = 1 / 2$ case \citep{yang2021feature}. With this rescaling, the magnitudes of both $\NTKt{a, t}^{\mb}$ and $\NTKt{W, t}^{\mb}$ 
    stay constant as the widths grow, suggesting a meaningful infinite-width limit that belongs to the feature learning regime. Furthermore, when $\alpha = 1 / 2$ and the activation functions are $1$-homogeneous, the training dynamics above is equivalent to that of a Xavier-initialized model up to reparameterization (see Appendix~\ref{app:xavier} as well as \citealt[Appendix C]{chen2022on}). We refer the readers to the work of \citet{yang2021feature} for further discussions on the interplay between learning rates and feature learning in deep NNs.
\end{remark}

A main question to be addressed in this work is whether the dynamics of $\fmbt{t}$ through training admits a limit as $m_1, m_2 \to \infty$, and if so, what properties of the limiting dynamics can be deduced. To this end, we establish a functional-space MF theory in the next section.

\subsection{Additional notations}
\label{sec:notations}
We will use bold, lower-case letters to denote finite-dimensional vectors, e.g., $\xb = [x_1, ..., x_d]$. For a matrix $\Gmat \in \Rbb^{n \times n}$, we use $\Gmat_{k, l}$ to denote its entry at the $(k, l)$-th position and $\Gmat_{k, :}$ to denote its $k$-th row as an $n$-dimensional vector. We define $\Gmat_{\min} = \min_{k \in [n]} \Gmat_{k, k}$ and let $\lambmin(\Gmat)$ denote the smallest eigenvalues of $\Gmat$ when it is symmetric. We write $\text{Id}_n$ for the $n \times n$ identity matrix. 

We let $\Con = \Con(\domX, \Rbb)$ denote the space of continuous functions on $\domX$ equipped with the Borel sigma-algebra (which coincides with the cylindrical sigma-algebra since $\domX$ is compact; see e.g. \citealt[Section 2]{applebaum2010cylindrical}). For any measurable space $\Omega$, we let $\Pcal(\Omega)$ denote the set of all probability measures on $\Omega$. If $T$ is a measurable map between measurable spaces $\Omega$ and $\Omega'$, we let $T_{\#}$ denote the push-forward map between $\Pcal(\Omega)$ and $\Pcal(\Omega')$.

For a Banach space $\Fbase$ and $c > 0$, we let $\ball{\Fbase}{c} \coloneqq \{ \| u \|_{\Fbase}: u \in \Fbase \}$ denote the centered ball in $\Fbase$ with radius $c$.
If $T$ is a map between spaces $\mathcal{U}$ and $\mathcal{V}$, we let $\hat{T}$ denote the map from $\Rbb \times \mathcal{U}$ to $\Rbb \times \mathcal{V}$ defined as $\hat{T}(a, u) = [a, T(u)]$, and refer to it as the \emph{lifted} version of $T$.

Suppose $p \in \mathbb{N}_+$ and $\{ \xbg{1}, ..., \xbg{p} \}$ be a subset of $\domX$. We let $\Gfun[\xbg{1}, ..., \xbg{p}]$ and $\Gfunmb[\xbg{1}, ..., \xbg{p}]$ denote the $p \times p$ matrices defined by $(\Gfun[\xbg{1}, ..., \xbg{p}])_{k, l} = \Gfun(\xbg{k}, \xbg{l})$ and $(\Gfunmb[\xbg{1}, ..., \xbg{p}])_{k, l} = \Gfunmb(\xbg{k}, \xbg{l})$ for all $k, l \in [p]$, respectively.
We define a \emph{finite-dimensional evaluation map} $\ev_{\xbg{1}, ..., \xbg{p}}: \Con \to \Rbb^k$ that maps any continuous function $f$ on $\domX$ to $[f(\xbg{1}), ..., f(\xbg{p})] \in \mathbb{R}^{p}$. Its lifted version, $\hat{\ev}_{\xbg{1}, ..., \xbg{p}}$, thus maps any $(a, f) \in \mathbb{R} \times \Con$ to $[a, \ev_{\xbg{1}, ..., \xbg{p}}(f)]^{\intercal} \in \mathbb{R} \times \mathbb{R}^{p}$. We also introduce the following shorthands for finite-dimensional evaluations with respect to the training data: $\evtr = \ev_{\xbtrk{1}, ..., \xbtrk{n}}$,  $\hatevtr = \hat{\ev}_{\xbtrk{1}, ..., \xbtrk{n}}$.

\section{Towards a Mean-Field Theory on Functional Space}
\label{sec:mf}
Suppose first that we fix $m_1$ while letting $m_2$ tend to infinity. Then, by the mean-field theory of a $2$L NN, the limit can be described via a probability measure on $\Rbb \times \Rbb^{m_1}$. Namely, if we consider the \emph{empirical measure} on the parameter space, $\frac{1}{m_2} \summ{i}{m_2} \delta_{\ait{t}}(da) \delta_{\wb_{i, t}}(d \wb) \in \Pcal(\Rbb \times \Rbb^{m_1})$, it converges weakly at each time to a MF measure as $m_2 \to \infty$, which evolves in time according to a Wasserstein GF in the space $\Pcal(\Rbb \times \Rbb^{m_1})$. 
When $m_1$ also tends to infinity, however, the space on which the probability measure is defined also grows in its dimension, and hence a more general theory is called for.

In this work, we propose a MF theory on \emph{functional space} instead of finite-dimensional parameter spaces.
We begin by observing that, regardless of $m_1$, the pre-activation of each neuron in the second hidden layer, $h_{i, t}$, always represents a continuous function on the input space $\domX$ as long as $\sigma_1$ is continuous, and its evolution as a function during training is fully given by \eqref{eq:dothti}. Thus, without directly tracking the individual weight parameters, 
we can instead track
the evolution of the following \emph{empirical measure} on the product space between $\Rbb$ and the space of continuous functions,
$\Con \coloneqq \Con(\domX, \Rbb)$:
\begin{equation}
    \mumbt{t}(da, dh) = \frac{1}{m_2} \delta_{\ait{t}}(da) \delta_{\hit{t}}(d h)~.
\end{equation}
Notice that we can write $\fmbt{t} = f(\hspace{1pt}\cdot \hspace{2pt};\mumbt{t})$, where for any $\mu \in \Pcal(\Rbb \times \Con)$, we define
\begin{equation}
    f(\xb; \mu) \coloneqq \int_{\Rbb \times \Con} a \sigtbig{h(\xb)} \mu(da, dh)~,~ \quad \forall \xb \in \domX~.
\end{equation}
In other words, the empirical measure $\mumbt{t}$ completely determines the output function. 

To see the connection between the GF dynamics of the weights in Euclidean space and the dynamics of $\mumbt{t}$ in $\Pcal(\Rbb \times \Con)$, notice that any solution to \eqref{eq:dotati} and \eqref{eq:dothti} can be written as $[a_{i, t}, h_{i, t}] = \Thetabtmb{t}(a_{i, 0}, h_{i, 0})$, where for $t \geq 0$, $\Thetabtmb{t}: \Rbb \times \Con \to \Rbb \times \Con$ is a measurable map that can be decomposed as $\Thetabtmb{t}(a, h) = [\Atmbt{t}(a, h), \Htmbt{t}(a, h)]$, with $\Atmbt{t}: \Rbb \times \Con \to \Rbb$ and $\Htmbt{t}: \Rbb \times \Con \to \Con$ satisfying the following equations:
\begin{align}
    \frac{d}{dt}\Atmbt{t}(a, h) =&~ \frac{\betaA}{n} \summ{k}{n} \zetak{k}{\fmbt{t}(\xbtrk{k})} \sigtbig{ \Htmbt{t}(a, h)(\xbtrk{k}) }~, \label{eq:dotAtmb_Con} \\
    \frac{d}{dt}\Htmbt{t}(a, h) =&~ \frac{1}{n}\Atmbt{t}(a, h) \summ{k}{n} \zetak{k}{\fmbt{t}(\xbtrk{k})} \sigtpbig{\Htmbt{t}(a, h)(\xbtrk{k})} \Gfunmb(\xbtrk{k}, \cdot)~, \label{eq:dotHtmb_Con}
\end{align}
together with the initial conditions
\begin{equation}
\label{eq:flow_init_m}
    \Atmbt{0}(a, h) = a~, \quad \Htmbt{0}(a, h) = h ~.
\end{equation}
Hence, the dynamics of $\mumbt{t}$ is given as the push-forward of $\mumbt{0}$ by the time-varying \emph{\CFM} $\Thetabtmb{t}$ on $\Rbb \times \Con$, i.e., $\mumbt{t} = (\Thetabtmb{t})_{\#} \mumbt{0}$, where $\Thetabtmb{t}$ plays an analogous role as the characteristic flow map for the transport equation describing interacting particle systems \citep{braun1977vlasov, rotskoff2022cpam}.

Based on the function-space picture, we are now able to sketch out a candidate for the MF limit. Suppose (and we will prove later) that $\mumbt{0}$ converges to a limit $\mut{0}$ as $m_1$ and $m_2$ tend to infinity. 
For $t\geq 0$, 
analogously to $\fmbt{t}$, $\mumbt{t}$ and $\Thetabtmb{t}$ in the finite-width case, we look for $\ft{t}: \domX \to \Rbb$, $\mut{t} \in \Pcal(\Con)$ and $\Thetabt{t}: \Rbb \times \Con \to \Rbb \times \Con$ which satisfy
\begin{align}
    \ft{t} =&~ \fmu{\mut{t}} \label{eq:ft_mut} \\
    \mut{t} =&~ (\Thetabt{t})_{\#} \mut{0} \label{eq:mut_Thetat} \\
    \Thetabt{t}(a, h) =&~ [\At{t}(a, h), \Ht{t}(a, h)] \label{eq:Thetat_At_Ht}
\end{align}
with $\At{t}: \Rbb \times \Con \to \Rbb$ and $\Ht{t}: \Rbb \times \Con \to \Con$ evolving in time according to
\begin{align}
    \frac{d}{dt}\Atah{t} =&~ \frac{\betaA}{n} \summ{k}{n} \zetak{k}{\ft{t}(\xbtrk{k})} \sigtbig{ \Htah{t}(\xbtrk{k}) }~, \label{eq:dotAt_Con} \\
    \frac{d}{dt}\Htah{t} =&~ \frac{1}{n}\Atah{t} \summ{k}{n} \zetak{k}{\ft{t}(\xbtrk{k})} \sigtpbig{\Htah{t}(\xbtrk{k})} \Gfun(\xbtrk{k}, \cdot)~, \label{eq:dotHt_Con}
\end{align}
together with the initial conditions
\begin{equation}
    \At{0}(a, h) = a~, \quad \Ht{0}(a, h) = h~. \label{eq:A0H0_Con}
\end{equation}
Here, we define
\begin{equation}
\label{eq:Gfun}
    \Gfun(\xb, \xb') = \lim_{m_1 \to \infty} \Gfunmb(\xb, \xb') = \int_{\Rbb^d} \sigma_1 \big ( \zb^{\intercal} \cdot \xb \big ) \sigma_1 \big ( \zb^{\intercal} \cdot \xb' \big ) \rhoz(d\zb)~,
\end{equation}
where the limit holds almost surely by the strong law of large numbers (LLN). In the literature, $\Gfun$ sometimes bears the name of the \emph{random feature kernel} or \emph{conjugate kernel} \citep{neal1996bayesian, rahimi2008random}.

\begin{remark}
The equations \eqref{eq:dotAt_Con} - \eqref{eq:A0H0_Con} define a measure-valued nonlinear transport partial differential equation (PDE) of McKean-Vlasov type \citep{mckean1966markov, braun1977vlasov}. But unlike in the MF models of interacting particle systems or two-layer NNs, in our case, the evolving object $\mut{t}$ is a probability measure on a functional space which is in principle infinite-dimensional. Hence, results in prior literature on the existence of the McKean-Vlasov MF limit and the LLN do not immediately apply. 
\end{remark}
To rigorously show that the above indeed defines the MF limit, we want to prove that: (i) At $t = 0$, $\mut{0}$ exists as the limit of $\mumbt{0}$ as $m_1, m_2 \to \infty$; (ii) There exists a tuple of $\ft{t}, \mut{t}$ and $\Thetabt{t}$ that satisfy the system of equations in  \eqref{eq:ft_mut} - \eqref{eq:A0H0_Con}; and (iii) $\forall t > 0$, $\mut{t}$ is the limit of $\mumbt{t}$ as $m_1, m_2 \to \infty$. 
As we will demonstrate, choosing $\alpha > 1/2$ versus $\alpha = 1 / 2$ results in qualitatively distinct behaviors of the MF limit. Hence, in the next three sections, we will first prove (i) and (ii) for the (easier) case of $\alpha > 1/2$ in Section~\ref{sec:4}, then (i) and (ii) for the $\alpha = 1 / 2$ case in Section~\ref{sec:5}, and finally (iii) for both cases together in Section~\ref{sec:6}.
The main proof structure for the $\alpha > 1/2$ case is illustrated in the diagram of Figure~\ref{fig:diagram_alpha>1/2}.

\begin{figure}
\begin{minipage}{1 \linewidth}
    \centering
    \begin{tikzcd}[column sep=huge, row sep=huge]
\hspace{5pt} \mumbt{0} \hspace{5pt} \arrow[d, squiggly, "\substack{\text{Finite NN GF}~ \\\\  \eqref{eq:dotAtmb_Con} - \eqref{eq:flow_init_m}}"'] \arrow{r}[swap]{\text{Lem. }\ref{lem:lln_0_1}}{m \to \infty}
& \hspace{5pt} \mut{0} \hspace{5pt} \arrow[d,squiggly,"\substack{\text{MF WGF}~ \\\\  \eqref{eq:dotAt_Con} - \eqref{eq:A0H0_Con}}"', "~\text{Lem. } \ref{lem:mf_exist_1}"] \arrow[r,shift left=1.5pt,harpoon,"(\Tpinv)_{\#}"]
& \hspace{5pt} \nu_0 \hspace{5pt}\arrow[l,shorten=0pt,shift left=1.5pt,harpoon,"T_{\#}"] \arrow[d,squiggly,"(\Psi_t)_{\#}"]  \\
\hspace{5pt} \mumbt{t} \hspace{5pt} \arrow{r}[swap]{\text{Lem.~\ref{lem:lln_t_gen}}}{m \to \infty}
& \hspace{5pt} \mut{t} \hspace{5pt} \arrow[r,shift left=1.5pt,harpoon,"(\Tpinv)_{\#}"]
& \hspace{5pt} \nu_t \hspace{5pt} \arrow[l,shorten=0pt,shift left=1.5pt,harpoon,"T_{\#}"]
\end{tikzcd}
\end{minipage}
\caption{Structure of the analysis in Sections~\ref{sec:4} and \ref{sec:6} for proving the $\alpha=1/2$ case of Theorem~\ref{thm:mflim}. To show that the GF training dynamics of P-$3$L NNs converges to a well-defined MF limit as the width $m$ goes to infinity, we (i) prove the convergence holds at $t = 0$ by the LLN (Lemma~\ref{lem:lln_0_1}); (ii) prove the MF dynamics exists as a Wasserstein GF through an equivalence with $n$-dimensional $2$L NN models (Lemma~\ref{lem:mf_exist_1}); and (iii) use a propagation-of-chaos argument to prove that the convergence as $m \to \infty$ holds at all finite time (Lemma~\ref{lem:lln_t_gen}).}
    \label{fig:diagram_alpha>1/2}
\end{figure}
\section{Neurons in Reproducing Kernel Hilbert Space: $\alpha > 1/2$}
\label{sec:4}
\subsection{MF limit at $t = 0$}
Suppose that $\alpha > 1/2$ and the parameters are randomly sampled i.i.d. at initialization. Then as $m_1 \to \infty$, by the LLN, we see that for any $i \in [m_2]$ and any $\xb \in \domX$, $\hit{0}(\xb)$ converges to zero almost surely. Thus, $\mumbt{0}$ converges to the measure $\mut{0}(da, dh) = \rhoa(da) \delta_{\zerob}(dh)$, where $\delta_{\zerob}$ is the singular measure at the constant-zero function. 
More concretely, under the following assumptions on $\sigma_1$, $\sigma_2$, $\rhoa$, $\rhoW$ and $\rhoz$, we can prove that $\mumbt{0}$ converges in $1$-Wasserstein distance to $\mut{0}$ under all finite-dimensional evaluations (as defined in Section~\ref{sec:notations}):
\begin{assumption}
\label{ass:sigma_diff}
$\sigma_1$ is continuous. $\sigma_2$ is differentiable and its derivative ${\sigma_2}'$ is bounded and Lipschitz-continuous: $\exists \mathtt{L}_{\sigma_2}, \mathtt{L}_{{\sigma_2}'} > 0$ such that $\forall u \in \Rbb$, $|{\sigma_2}'(u)| \leq \mathtt{L}_{\sigma_2}$ and ${\sigma_2}'$ is $\mathtt{L}_{{\sigma_2}'}$-Lipschitz.
\end{assumption}
\vspace{-2pt}
\begin{assumption}
\label{ass:init}
$\rhoW = \mathcal{N}(0, 1)$, $\rhoa$ is compactly-supported and symmetric with respect to zero, and
$\rhoz$ is sub-Gaussian. 
\end{assumption}
\vspace{-2pt}
\begin{lemma}[LLN at $t = 0$, $\alpha > 1/2$]
\label{lem:lln_0_1} If $\alpha > 1/2$ and Assumptions~\ref{ass:sigma_diff} and \ref{ass:init} hold,
then $\mumbt{0}$ converges weakly in all finite-dimensional evaluations to $\mut{0} = \rhoa \times \delta_{\zerob}$ almost surely.

Concretely, for any finite subset $\{ \xbg{1}, ..., \xbg{k} \} \subseteq \domX$, if we write $\mu_{t, \xbg{1}, ..., \xbg{k}} \coloneqq (\hat{\ev}_{\xbg{1}, ..., \xbg{k}})_{\#} \mut{t}$ and $\mu^{\mb}_{t, \xbg{1}, ..., \xbg{k}} \coloneqq (\hat{\ev}_{\xbg{1}, ..., \xbg{k}})_{\#} \mumbt{t}$, then 
$\mumbt{0, \xbg{1}, ..., \xbg{k}}$ converges weakly to $\mut{0, \xbg{1}, ..., \xbg{k}}$ almost surely. 
Moreover,
$\forall \epsilon > 0$, $\exists R_1, R_2 > 0$ (depending on $\epsilon$ and the set $\{ \xbg{1}, ..., \xbg{k} \}$) such that
\begin{equation}
\label{eq:exp_small_prob_wass}
    \Pbb \left ( \mathcal{W}_1 \left (\mumbt{0, \xbg{1}, ..., \xbg{k}}, \mut{0, \xbg{1}, ..., \xbg{k}} \right ) > \epsilon \right ) < O \left ( e^{-R_1 m_1} + e^{-R_2 m_2} \right )~.
\end{equation}
\end{lemma}
\noindent This lemma is proved in Appendix~\ref{app:pf_lln_0_1}. Note that the almost-sure convergence is a consequence of \eqref{eq:exp_small_prob_wass} and the Borel-Cantelli Lemma.

\subsection{$2$L NN on Hilbert Space} 
\label{sec:equiv_alpha>1/2}
For $t \geq 0$, we see that within the space of functions on $\domX$, the right-hand side of \eqref{eq:dotHt_Con} belongs to the linear span of $\{ \Gfun(\xbtrk{k}, \cdot)\}_{k \in [n]}$, which we write as $\Hilbtri \coloneqq \{ h_{\lambb}: \lambb \in \Rbb^n \}$ by defining
\begin{equation}
\label{eq:h_lamb}
    h_{\lambb} \coloneqq \summ{k}{n} \lambda_{k} \Gfun(\xbtrk{k}, \cdot)~.
\end{equation}
In fact, $\Hilbtri$ is a finite-dimensional subspace of a larger Hilbert space, $\Hilb$, which is the \emph{reproducing kernel Hilbert Space (RKHS)} on $\domX$ associated with the kernel function $\Gfun$.\footnote{We verify in Appendix~\ref{app:pf_Gpsd} that $\Gfun$ is positive semi-definite and hence a valid kernel function for RKHS.} Thus, for all $t \geq 0$, the measure $\mut{t}$ is supported on $\Rbb \times \Hilbtri \subseteq \Rbb \times \Hilb$ only, and hence $\Thetabt{t}, \At{t}, \Ht{t}$ need only to be defined on $\Rbb \times \Hilbtri$.

Interestingly, this allows us to interpret the model as a generalized MF $2$L model where the first-layer parameters belong to a Hilbert space instead of the Euclidean space $\Rbb^d$, and it could be categorized as a \emph{functional nonparametric model} \citep{ferraty2006nonparametric}. 
Moreover, its training dynamics corresponds to a Wasserstein gradient flow in $\Pcal(\Rbb \times \Hilb)$.
Specifically, similarly to the Euclidean case \citep{chizat2018global}, the Fr\`{e}chet derivative of the loss can be defined as, for $a \in \mathbb{R}$, $h \in \Hilb$,
\begin{equation}
\label{eq:frechet}
    \Lossb'_{\mu}(a, h) =  \frac{a}{n} \summ{k}{n} \zetak{k}{\fmu{\mu}(\xbtrk{k})} \sigtbig {h(\xbtrk{k})}~.
\end{equation}
Recall the reproducing property of $\Hilb$ as an RKHS: $\forall h \in \Hilb, \forall \xb \in \domX$, $h(\xb) = \langle h, \Gfun(\xb, \cdot) \rangle_{\Hilb}$, where $\langle \cdot , \cdot \rangle_{\Hilb}$ is the inner product on $\Hilb$. Hence, we derive that $\nabla_h \left ( h(\xbtrk{k}) \right ) = \Gfun(\xbtrk{k}, \cdot) \in \Hilb$,
and thus \eqref{eq:dotAt_Con} and \eqref{eq:dotHt_Con} can be equivalently written as
\begin{equation}
\label{eq:WGF_Theta}
    \frac{d}{dt} \Thetabt{t}(a, h) = - \nabla \mathcal{L}'_{\mut{t}}(\Thetabt{t}(a, h))~,
\end{equation}
where for $\mu \in \Pcal(\Rbb \times \Hilb)$, $a \in \Rbb$ and $h \in \Hilb$,
\begin{equation}
    \nabla \mathcal{L}'_{\mu}(a, h) = \begin{bmatrix}
    \frac{1}{n} \summ{k}{n} \zetak{k}{\fmu{\mu}(\xbtrk{k})} \sigtbig {h(\xbtrk{k})} \\
    \frac{a}{n} \summ{k}{n} \zetak{k}{\fmu{\mu}(\xbtrk{k})} \sigtpbig {h(\xbtrk{k})} \Gfun(\xbtrk{k}, \cdot)
    \end{bmatrix}
\end{equation}
is the gradient of the Fr\`{e}chet derivative \eqref{eq:frechet}.
Thus, \eqref{eq:WGF_Theta} together with \eqref{eq:mut_Thetat} expresses a Wasserstein gradient flow in $\Pcal(\Rbb \times \Hilb)$, which is well-defined owing to the inner product structure of the Hilbert space $\Hilb$.\footnote{This is in contrast with considering $2$L NNs with inputs from general Banach spaces, e.g., \citet{korolev2021two}. }

\subsection{Existence via equivalence to MF $2$L NN on $\Rbb^n$}
\label{sec:equiv_dimn_alpha>1/2}
To show the existence of $\mut{t}$ as defined above, we rely on its equivalence with an alternative MF $2$L model on a \emph{finite-dimensional} Euclidean space that shares an isometry with $\Hilbtri$. Let $\Gmat \in \Rbb^{n \times n}$ be the matrix defined by $\Gmat_{k, l} = \Gfun(\xbtrk{k}, \xbtrk{l})$ for $k, l \in [n]$. First,
we define a pair of linear maps $T: \Rbb^n \to \Hilbtri$ and $\Tpinv: \Con \to \Ran(\Gmat)$ as
\begin{align}
    (T (\lambb))(\xb) \coloneqq&~ h_{\Gmhalf \cdot \lambb}(\xb)~, \label{eq:T} \\
    \Tpinv (h) \coloneqq&~ \Gmhalf \cdot \evtr(h)~, \label{eq:Tpinv}
\end{align}
where $\Gmat^{+}$ denotes the Moore-Penrose pseudo-inverse of $\Gmat$ and \eqref{eq:T} uses the notation of \eqref{eq:h_lamb}.
We see that $T$ is a bijective map from $\Ran(\Gmat) \subseteq \Rbb^n$ to $\Hilbtri$, with $\Tpinv$ being its inverse map when restricted on $\Hilbtri$.
In fact, there is
\begin{equation}
\| T (\lambb) \|_{\Hilb}^2 = \lambb^{\intercal} \cdot (\Gmhalf)^{\intercal} \cdot \Gmat \cdot \Gmhalf \cdot \lambb = \| \Proj{\Ran(\Gmat)}{\lambb} \|_2^2~,
\end{equation}
and hence $T$ is an isometry between $\Ran(\Gmat)$ and $\Hilbtri$.
Moreover, for all 
$\lambb \in \Rbb^n$, it holds that $h_{\lambb}(\xb) = \summ{k}{n} \lambda_{k} \Gfun(\xbtrk{k}, \xb) = (\Tpinv (h_{\lambb}))^{\intercal} \cdot \Xtil(\xb)$,
where we define $\Xtil: \domX \to \Rbb^n$ as
\begin{equation}
    \Xtil(\xb) \coloneqq \summ{k}{n} \Gfun(\xbtrk{k}, \xb) (\Gmhalf)_{k, :}~, ~ \quad \forall \xb \in \domX~.
\end{equation}
Since the image of $\Rbb^n$ under $T$ is $\Hilbtri$, this implies that $\forall h \in \Hilbtri$,
\begin{equation}
\label{eq:hx}
    h(\xb) = (\Tpinv (h))^{\intercal} \cdot \Xtil(\xb)~.
\end{equation}
Since $\mut{t}$ is supported within $\Rbb \times \Hilbtri$, we then obtain that 
\begin{equation}
\label{eq:ft_gt_Hilb}
    \begin{split}
        \ft{t}(\xb) 
        =&~ \int_{\Rbb \times \Hilbtri} a \sigtbig{(\Tpinv (h))^{\intercal} \cdot \Xtil(\xb)} \mut{t}(da, dh)~.
    \end{split}
\end{equation}
Let us define 
\begin{equation}
\label{eq:g_alpha>1}
    g(\xtilde; \nu) \coloneqq \int_{\Rbb \times \Rbb^n} a \sigbig{ \lambb^{\intercal} \cdot \xib} \muhid(da, d \lambb)~,~\quad \forall \xib \in \Rbb^n~,
\end{equation}
for $\nu \in \Pcal(\Rbb \times \Rbb^n)$, which is equivalent to a MF $2$L NN on $\Rbb^n$. Then, \eqref{eq:ft_gt_Hilb} can be written as
\begin{equation}
\label{eq:ft_gt_Hilb_2}
    \ft{t}(\xb) = g(\Xtil(\xb); \nu_t) =: g_t(\Xtil(\xb))~,
\end{equation}
where we define $\muhidt{t} = (\hatTpinv)_{\#} \mut{t}$, with $\hatTpinv: \Rbb \times \Hilbtri \to \Rbb \times \Ran(\Gmat)$ defined by $\hatTpinv(a, h) = [a, \Tpinv(h)]$ being the lifted version of $\hatTpinv$.
Notably, $g_t$ can be viewed as a MF $2$L model on $\Rbb^n$ trained on an alternative set of training data, $\{(\xtildetrk{k}, \ytrk{k}) \}_{k \in [n]}$ with $\xtildetrk{k} = \Xtil(\xbtrk{k}) = (\Ghalf)_{k, :}$,
and the evolution of $\muhidt{t}$ follows a Wasserstein GF in $\Pcal(\Rbb \times \Rbb^n)$. In other words, in the MF limit, the \ptl NN model becomes equivalent to a (non-parametric) MF $2$L model applied to the input transformed by $\Xtil$. In particular, since $\muhidt{0} = (\hatTpinv)_{\#} \mut{0} = \rhoa \times (\delta_0)^n$, we will refer to the latter model as the \emph{dim-$n$ MF $2$L NN with $0$-initialization}.

Thus, we can show the existence of $\mut{t}$ by constructing it from $\muhidt{t}$, whose existence as a Wasserstein GF on finite-dimensional Euclidean space is well-known \citep{chizat2018global, sirignano2020mean_lln}. Specifically, $\muhidt{t}$ can be expressed as the push-forward of a time-varying transport map $\Psi_t$ on $\Rbb \times \Rbb^n$, $\muhidt{t} = (\Psi_t)_{\#}\muhidt{0}$, where for $a \in \Rbb$ and $\lambb \in \Rbb^n$, $\Psi_t(a, \lambb) = [\Ctal{t}, \Lambtal{t}]$ with $\Ct{t}: \Rbb \times \Rbb^n \to \Rbb$ and $\Lambtal{t}: \Rbb \times \Rbb^n \to \Rbb^n$, satisfying
\begin{align}
    \frac{d}{dt} \Ctal{t} =&~ \frac{1}{n} \summ{k}{n} \zetak{k}{g_t(\xtildetrk{k})} \sigtbig{ \Lambt{t}(a, \lambb)^{\intercal} \cdot \xtildetrk{k} }~, \label{eq:dotCt_Hilb}\\
    \frac{d}{dt} \Lambtal{t}
    =&~ \frac{1}{n} \Ctal{t} \summ{k}{n} \zetak{k}{g_t(\xtildetrk{k})} \sigtpbig{\Lambtal{t}^{\intercal} \cdot \xtildetrk{k}} \xtildetrk{k} ~, \label{eq:dotLambt_Hilb}
\end{align}
together with the initial conditions $\Ctal{0} = a$ and $\Lambtal{0} = \lambb$.
Then, if we define $\At{t}: \Rbb \times \Hilbtri \to \Rbb$ and $\Ht{t}: \Rbb \times \Hilbtri \to \Hilbtri$ through
\begin{align}
    \At{t}(a, h) =&~ \Ct{t}(a, \Tpinv(h))~, \label{eq:AtfromCt_Hilb} \\
        \Ht{t}(a, h) =&~ T \circ \Lambt{t}(a, \Tpinv(h))~, \label{eq:HtfromLambt_Hilb}
\end{align}
it can be verified that they satisfy \eqref{eq:dotAt_Con} - \eqref{eq:A0H0_Con}. In other words, the linear maps $T$ and $\Tpinv$ are commutative with the GF dynamics, as illustrated in Figure~\ref{fig:cd_>1/2}. We therefore conclude that: 
\begin{lemma}[Existence of MF dynamics, $\alpha > 1/2$]
\label{lem:mf_exist_1}
Suppose $\alpha > 1/2$ and Assumptions~\ref{ass:sigma_diff} and \ref{ass:init} hold. $\forall t \geq 0$ and $\forall \mut{0} \in \Pcal(\Rbb \times \Hilbtri)$, $\exists \mut{t} \in \Pcal(\Rbb \times \Hilbtri)$ and $\Thetabt{t}: \Rbb \times \Hilbtri \to \Rbb \times \Hilbtri$ such that $\mut{t} = (\Thetabt{t})_{\#} \mut{0}$, where $\mut{0} = \rhoa \times \delta_{\zerob}$ and $\Thetabt{t} = [\At{t}, \Ht{t}]$ satisfy \eqref{eq:dotAt_Con} - \eqref{eq:A0H0_Con}.
In particular,
\begin{equation}
\label{eq:f_ndim_1}
    \ft{t}(\xb) =~ \falt_{t}(\Xtil(\xb)) =~ \int_{\Rbb \times \Rbb^n} a \sigtbig{ \lambb^{\intercal} \cdot \Xtil(\xb)} \muhidt{t}(da, d \lambb)~,
\end{equation}
where $\muhidt{t} = (\hatTpinv)_{\#} \mut{t} = (\Psi_t)_{\#}\muhidt{0}$ with $\muhidt{0} = \rhoa \times (\delta_0)^n$.
\end{lemma}

\begin{figure}
\hspace{15pt}
\begin{minipage}{0.48 \linewidth}
    \caption{In the case of $\alpha > 1/2$, the linear isometric maps between $\Hilbtri$ and $\text{Ran}(\Gmat)$ ($T$ / $\Tpinv$, horizontal) commute with the flow maps induced by the GF dynamics (vertical).}
    \label{fig:cd_>1/2}
\end{minipage}
\hspace{10pt}
\begin{minipage}{0.5 \linewidth}
\vspace{-5pt}
    \centering
        \begin{tikzcd}
\hspace{16pt} h \hspace{16pt} \arrow{d}[swap]{\Theta_t(a, \vspace{2pt}\cdot\vspace{2pt})} \arrow[r,shift left=1.5pt,harpoon,"\Tpinv"]
& \hspace{16pt} \lambb \hspace{18pt}\arrow[l,shorten=0pt,shift left=1.5pt,harpoon,"T"] \arrow{d}{\Psi_t(a, \vspace{1.5pt} \cdot \vspace{1.5pt})} & \hspace{-50pt} = \Tpinv(h) \\
H_t(a, h) \arrow[r,shift left=1.5pt,harpoon,"\Tpinv"]
& \Lambdab_t(a, \lambb) \arrow[l,shift left=1.5pt,harpoon,"T"] &
\end{tikzcd}
    \end{minipage}
\end{figure}

\section{Neurons as Continuous Functions: $\alpha = 1/2$}
\label{sec:5}
\subsection{MF limit at $t = 0$}
When $\alpha = 1 / 2$, even at $t = 0$, the limiting MF measure $\mut{0}$ is no longer supported within $\Rbb \times \Hilb$. In particular, the probability measure $\frac{1}{m_2} \sum_{i=1}^{m_2} \delta_{\hit{t}}(dh)$ approaches the sample path distribution of a Gaussian process with covariance function $\Gfun$ \citep{yang2021feature}, which, almost surely, does not belong to $\Hilb$ \citep{gross1967abstract}. This suggests that we need to consider the neurons as elements from a larger functional space than $\Hilb$. 
Fortunately, we show that $\Con$ suffices as such a choice:
\begin{lemma}[LLN at $t = 0$, $\alpha = 1 / 2$]
\label{lem:lln_0_1/2} Suppose Assumptions~\ref{ass:sigma_diff} and \ref{ass:init} hold.
When $\alpha = 1 / 2$, there exists a probability measure $\mut{0} = \rhoa \times \mathcal{GP}(0, \Gfun)$ on $\Rbb \times \Con$ such that $\mumbt{0}$ converges weakly in all finite-dimensional projections to $\mut{0} \in \Pcal(\Rbb \times \Con)$ almost surely. Here, $\mathcal{GP}(0, \Gfun)$ denotes the law of the sample paths of a Gaussian process with mean zero and covariance function $\Gfun$. 
This means that for any finite subset $\{ \xbg{1}, ..., \xbg{k} \} \subseteq \domX$, 
$\mumbt{0, \xbg{1}, ..., \xbg{k}}$ converges weakly to $\mut{0, \xbg{1}, ..., \xbg{k}}$ almost surely, and moreover,
$\forall \epsilon > 0$, $\exists R_1, R_2 > 0$ such that
\eqref{eq:exp_small_prob_wass} holds similarly.
\end{lemma}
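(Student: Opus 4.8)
The plan is to mirror the proof of Lemma~\ref{lem:lln_0_1}; the only structural difference at $\alpha=\tfrac12$ is that the normalization $m_1^{-1/2}$ no longer annihilates the randomness of the first hidden layer, so the preactivations converge to a nondegenerate Gaussian process rather than to $\zerob$. I would proceed in four steps, working throughout under the stated assumptions (the relevant one at $t=0$ being Assumption~\ref{ass:init}).

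\emph{Step 1 (the limit lives on $\Rbb\times\Con$).} I would first check that $\mathcal{GP}(0,\Gfun)$ admits a version with almost surely continuous sample paths on the (compact) domain $\domX$, so that $\mut{0}=\rhoa\times\mathcal{GP}(0,\Gfun)$ is a well-defined element of $\Pcal(\Rbb\times\Con)$. Since $\rhoz$ is sub-Gaussian and $\sigma_1$ is regular, $\Gfun$ is Hölder (indeed Lipschitz) on $\domX$; for a centered Gaussian process with covariance $\Gfun$ this gives $\EE|h(\xb)-h(\xb')|^{2p}=c_p\big(\Gfun(\xb,\xb)+\Gfun(\xb',\xb')-2\Gfun(\xb,\xb')\big)^p\le C_p\,|\xb-\xb'|^{p}$, and the Kolmogorov--Chentsov criterion with $p>d$ produces the continuous modification. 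For a fixed finite set $\{\xbg{1},\dots,\xbg{k}\}\subseteq\domX$, the push-forward $(\hatev_{\xbg{1},\dots,\xbg{k}})_{\#}\mut{0}$ then equals $\rhoa\times\mathcal{N}(0,\Sigma)$ with $\Sigma:=\Gfun[\xbg{1},\dots,\xbg{k}]$, whereas $(\hatev_{\xbg{1},\dots,\xbg{k}})_{\#}\mumbt{0}=\tfrac{1}{m_2}\sum_{i=1}^{m_2}\delta_{(\ait{0},\,\hit{0}(\xbg{1}),\dots,\hit{0}(\xbg{k}))}$.

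\emph{Step 2 (condition on the first layer).} Conditionally on $\zb_1,\dots,\zb_{m_1}$: because the $W_{ij}$ are i.i.d.\ $\mathcal{N}(0,1)$ and the $\ait{0}$ are i.i.d.\ $\rhoa$ and independent of everything, the pairs $\big(\ait{0},\,(\hit{0}(\xbg{1}),\dots,\hit{0}(\xbg{k}))\big)$, $i\in[m_2]$, are i.i.d.\ with law $\rhoa\times\mathcal{N}(0,\hat\Sigma)$, where $\hat\Sigma:=\Gfunmb[\xbg{1},\dots,\xbg{k}]$ (i.e.\ $\hat\Sigma_{ll'}=\Gfunmb(\xbg{l},\xbg{l'})$). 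Thus $(\hatev_{\xbg{1},\dots,\xbg{k}})_{\#}\mumbt{0}$ is the empirical measure of $m_2$ i.i.d.\ samples from $\rhoa\times\mathcal{N}(0,\hat\Sigma)$, and by the triangle inequality for $\mathcal{W}_1$ I would bound it against $\rhoa\times\mathcal{N}(0,\Sigma)$ by $(\mathrm{I})+(\mathrm{II})$, where $(\mathrm{I}):=\mathcal{W}_1\big((\hatev_{\xbg{1},\dots,\xbg{k}})_{\#}\mumbt{0},\,\rhoa\times\mathcal{N}(0,\hat\Sigma)\big)$ and $(\mathrm{II}):=\mathcal{W}_1\big(\rhoa\times\mathcal{N}(0,\hat\Sigma),\,\rhoa\times\mathcal{N}(0,\Sigma)\big)$.

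\emph{Step 3 (the two bounds, and the almost-sure upgrade).} For $(\mathrm{II})$ the two measures share the $a$-marginal, so coupling the Gaussian parts through a common standard normal and invoking the Powers--St{\o}rmer inequality $\|\hat\Sigma^{1/2}-\Sigma^{1/2}\|_F^2\le\|\hat\Sigma-\Sigma\|_1$ gives $(\mathrm{II})\le\|\hat\Sigma^{1/2}-\Sigma^{1/2}\|_F\le(k\,\|\hat\Sigma-\Sigma\|_{\mathrm{op}})^{1/2}$; each entry $\hat\Sigma_{ll'}-\Sigma_{ll'}=\tfrac{1}{m_1}\sum_{j=1}^{m_1}\big(\sigma_1(\zb_j^\intercal\xbg{l})\sigma_1(\zb_j^\intercal\xbg{l'})-\EE[\,\cdot\,]\big)$ is an average of i.i.d.\ centered sub-exponential variables (using that $\rhoz$ is sub-Gaussian and $\sigma_1$ has the assumed growth), so Bernstein plus a union bound over the $k^2$ entries yields $\Pbb\big((\mathrm{II})>\epsilon/2\big)\le O(e^{-K_1 m_1})$ for some $K_1=K_1(\epsilon,\{\xbg{l}\})>0$. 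The same estimate shows $\|\hat\Sigma\|_{\mathrm{op}}\le R$ for a fixed finite $R$ with probability $1-O(e^{-K_1 m_1})$; on that event $\rhoa\times\mathcal{N}(0,\hat\Sigma)$ is a sub-Gaussian law on $\Rbb^{k+1}$ with parameters depending only on $R$ and on the (compact) support of $\rhoa$, uniformly in $\zb_1,\dots,\zb_{m_1}$, so a standard deviation inequality for the $\mathcal{W}_1$ distance of an empirical measure from a sub-Gaussian law (of Fournier--Guillin type) gives $\Pbb\big((\mathrm{I})>\epsilon/2\mid\zb_1,\dots,\zb_{m_1}\big)\le O(e^{-K_2 m_2})$ with $K_2=K_2(\epsilon,\{\xbg{l}\})>0$ (the dimension $k+1$ enters only the prefactor and the admissible range of $\epsilon$, not the exponential rate at fixed $\epsilon$). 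Taking expectations over $\zb_1,\dots,\zb_{m_1}$ and combining via the triangle inequality gives \eqref{eq:exp_small_prob_wass}. Finally, summability of this bound together with Borel--Cantelli upgrades it to $\mathcal{W}_1\big((\hatev_{\xbg{1},\dots,\xbg{k}})_{\#}\mumbt{0},\,(\hatev_{\xbg{1},\dots,\xbg{k}})_{\#}\mut{0}\big)\to0$ almost surely as $m_1,m_2\to\infty$ --- conveniently decomposed through the observation that $(\mathrm{II})$ depends only on $m_1$ (and is a.s.\ eventually below $\epsilon/2$) while $(\mathrm{I})$ concentrates in $m_2$ uniformly over the events $\{\|\hat\Sigma\|_{\mathrm{op}}\le R\}$ --- hence weak convergence of this finite-dimensional projection; since $\{\xbg{1},\dots,\xbg{k}\}$ was arbitrary, the claim follows.

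\emph{Main obstacle.} The essential work is in Step~3: one must make the empirical-measure concentration for $(\mathrm{I})$ uniform over the conditioning variables $\zb_1,\dots,\zb_{m_1}$ (handled through the high-probability event $\{\|\hat\Sigma\|_{\mathrm{op}}\le R\}$) and keep track of how the Fournier--Guillin rate in $m_2$ degrades with the dimension $k+1$ of the projection --- harmless for fixed $\epsilon$ but requiring careful bookkeeping --- and one must handle the joint-limit subtleties in passing from \eqref{eq:exp_small_prob_wass} to the almost-sure statement. A lesser but still necessary point is the verification in Step~1 that $\Gfun$ is regular enough for $\mathcal{GP}(0,\Gfun)$ to be carried by $\Con(\domX,\Rbb)$, which relies on the sub-Gaussian tails of $\rhoz$ and the smoothness assumed on $\sigma_1$.
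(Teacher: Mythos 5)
Your argument is correct and follows the paper's proof almost step-for-step. Step~1 is the paper's Part~2 (Kolmogorov extension plus Kolmogorov--Chentsov, using that $\rhoz$ sub-Gaussian and $\sigma_1$ Lipschitz give $\mathrm{Var}(h(\xb)-h(\xb'))\lesssim\|\xb-\xb'\|^2$). Step~2 is the paper's decomposition: condition on $\zb_1,\dots,\zb_{m_1}$ so that $\hatevte(\mumbt{0})$ becomes the empirical measure of $m_2$ i.i.d.\ draws from $\rhoa\times\mathcal{N}(0,\Ggmb)$, and split by the triangle inequality. Step~3 matches the paper's combination of a Fournier--Guillin concentration bound (Lemma~\ref{lem:wconcen}) for the term $(\mathrm{I})$, a concentration bound for $\|\Ggmb-\Gg\|_2$ (the paper's Lemma~\ref{lem:Gm_dev}; your Bernstein-plus-union-bound gives the same estimate), restriction to the high-probability event $\{\|\Ggmb-\Gg\|_2\le\Delta\}$ to keep the sub-Gaussian parameter uniform in the conditioning, and Borel--Cantelli.

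The one place you genuinely deviate is the control of $(\mathrm{II})=\mathcal{W}_1\big(\mathcal{N}(0,\Ggmb),\mathcal{N}(0,\Gg)\big)$. The paper proves this in two sublemmas: a Davis--Kahan $\sin\theta$ step (Lemma~\ref{lem:basis_near}) to align eigenbases, followed by Chafa{\"i}'s formula for $\mathcal{W}_2$ between Gaussians (Lemma~\ref{lem:w_gamma_bargamma}); this route introduces the eigengap $\eta$ of $\Gg$ into the constant $\Delta_\epsilon$ and requires $\|\Ggmb-\Gg\|_2\le\eta/2$. You instead use the synchronous coupling $\mathcal{W}_1\le\mathcal{W}_2\le\|\Ggmb^{1/2}-\Gg^{1/2}\|_F$ together with Powers--St{\o}rmer, $\|\Ggmb^{1/2}-\Gg^{1/2}\|_F^2\le\|\Ggmb-\Gg\|_1\le k\|\Ggmb-\Gg\|_{\mathrm{op}}$, which is unconditional and eigengap-free. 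This is a cleaner bound for the same quantity and proves the same lemma statement. Two inessential slips: the increment moment in Step~1 is $\le C_p\|\xb-\xb'\|^{2p}$, so Kolmogorov--Chentsov needs $2p>d$ rather than $p>d$; and the paper does not assume $\domX$ compact, though some boundedness of $\Gfun$ on $\domX$ is used in both arguments.
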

The proof is given in Appendix~\ref{app:pf_lln_0_1/2} and has two main steps: 1) proving that $\mathcal{GP}(0, \Gfun)$ is supported in $\Con$ using the Kolmogorov-Chentsov continuity theorem, and 2) the convergence of $\mumbt{0, \xbg{1}, ..., \xbg{k}}$ to $\mut{0, \xbg{1}, ..., \xbg{k}}$ in the $1$-Wasserstein distance.

\subsection{Modified MF $2$L NN on $\Rbb^n$}
\label{sec:np_=1/2}
Like in the $\alpha > 1/2$ case, we may consider the measure $\muhidt{t} = (\Psi_t)_{\#} \muhidt{0} = (\Psi_t)_{\#}(\hatTpinv)_{\#} \mut{0}$, which also satisfies a Wasserstein GF in $\Pcal(\Rbb \times \Rbb^n)$ except for having a different initial condition, $\muhidt{0} = \rhoa \times \mathcal{N}(0, \text{Id}_n)$.
Nonetheless, as $\mut{t}$ is not supported within $\Hilbtri$, the equality \eqref{eq:ft_gt_Hilb} does not necessarily hold for all $\xb \in \domX$, and hence we no longer have a full equivalence between $\ft{t}$ and $g_t(\Xtil(\cdot))$ on all of $\domX$. The commutative relation in Figure~\ref{fig:cd_>1/2} breaks down because $T$ is not injective onto $\Con$.

Meanwhile, the two functions are still equal on the training set.
The reason is that, if $h \in \Con$ satisfies $\evtr(h) \in \Ran(G)$, then by \eqref{eq:projj}, it holds for all $k \in [n]$ that $(\Projj{\Hilbtri}(h))(\xbtrk{k}) = (G \cdot \Gm \cdot \evtr(h))_k = h(\xbtrk{k})$.
Hence, \eqref{eq:ft_gt_Hilb} holds for $\xb = \xbtrk{k}$ as long as $\mutrit{t}$ has zero mass outside of $\Rbb \times \Ran(G)$, where we introduce the shorthand
$\mutrit{t} \coloneqq \mu_{t, \xbtrk{1}, ..., \xbtrk{n}}$. This condition is indeed satisfied because at $t = 0$ it is guaranteed by Lemma~\ref{lem:lln_0_1/2} (since the sample paths of $\mathcal{N}(0, G)$ belong almost surely to $\Ran(G)$), and at any $t > 0$ it continues to hold because \eqref{eq:dotHt_Con} implies that $\Htah{t} - h \in \Hilbtri$.

Moreover, we observe from \eqref{eq:dotHt_Con} that $\frac{d}{dt} \Htah{t}$ is fully determined by the values that $\Htah{t}$ takes on the training set, or equivalently,
by its projection onto $\Hilbtri$, $\Proj{\Hilbtri}{\Htah{t}}$, where we define for any function $h \in \Con$ that
\begin{equation}
\label{eq:projj}
\begin{split}
    \Proj{\Hilbtri}{h} = T \circ \Tpinv (h) = \summ{k}{n} \left ( \Gm \cdot \evtr(h) \right )_k \Gfun(\xbtrk{k}, \cdot)~, 
\end{split}
\end{equation}
Thus, to show the existence of $\mut{t}$, we can construct it from $\muhidt{t}$ by defining $\At{t}$ through \eqref{eq:AtfromCt_Hilb} and $\Ht{t}$ alternatively through
\begin{equation}
\label{eq:HtfromLambt_Con}
    \begin{split}
        \Ht{t}(a, h) =&~ h + T \left ( \Lambt{t}(a, \Tpinv(h)) - \Tpinv(h) \right ) \\
        =&~ T \circ \Lambt{t}(a, \Tpinv(h)) + \Projorth{\Hilbtri}{h}~,
    \end{split}
\end{equation}
where
we define $\Projorth{\Hilbtri}{h} = h - \Proj{\Hilbtri}{h}$ for any $h \in \Con$. 
We can then verify that \eqref{eq:dotAt_Con} - \eqref{eq:A0H0_Con} are satisfied (details in Appendix~\ref{app:pf_mf_exist_1/2}), and the relations among $\Thetabt{t}$, $\Lambt{t}$, $T$, $\Tpinv$, and $\Projj{\Hilbtri}$ are illustrated in Figure~\ref{fig:cd_=1/2}.
Analogous to Lemma~\ref{lem:mf_exist_1}, the result can be summarized as follows:
\begin{figure}
\hspace{15pt}
\begin{minipage}{0.32 \linewidth}
    \caption{In the $\alpha = 1 / 2$ case, the commutative relation in Figure~\ref{fig:cd_>1/2} holds after we project from $\Con$ to $\Hilbtri$ via $\Projj{\Hilbtri}$. Note that $\Tpinv \circ \Projj{\Hilbtri} = \Tpinv$ on $\Con$.}
    \label{fig:cd_=1/2}
\end{minipage}
\hspace{20pt}
\begin{minipage}{0.6 \linewidth}
\vspace{-10pt}
    \centering
         \begin{tikzcd}[row sep=large]
h \arrow{d}[swap]{\Theta_t(a, \hspace{2pt} \cdot \hspace{2pt})} \arrow[r,shift left=1.5pt,harpoon,"\Projj{\Hilbtri}"] & \hspace{5pt} \Projj{\Hilbtri}(h) \hspace{5pt} \arrow{d}[swap]{\Theta_t(a, \hspace{2pt} \cdot \hspace{2pt})} \arrow[r,shift left=1.5pt,harpoon,"\Tpinv"]
& \hspace{5pt} \lambb \hspace{5pt}\arrow[l,shorten=0pt,shift left=1.5pt,harpoon,"T"] \arrow{d}[swap]{\Psi_t(a, \hspace{2pt}\cdot\hspace{2pt})} & \\ 
H_t(a, h) \arrow[r,shift left=1.5pt,harpoon,"\Projj{\Hilbtri}"] & \Projj{\Hilbtri}(H_t(a, h))  \arrow[r,shift left=1.5pt,harpoon,"\Tpinv"]
&  \Lambdab_t(a, \lambb) \arrow[l,shorten=0pt,shift left=1.5pt,harpoon,"T"] &
\end{tikzcd}
    \end{minipage}
\end{figure}
\begin{lemma}[Existence of MF dynamics, $\alpha = 1 / 2$]
\label{lem:mf_exist_1/2}
Suppose $\alpha = 1 / 2$ and Assumptions~\ref{ass:sigma_diff} holds. $\forall t \geq 0$ and $\forall \mut{0} \in \Pcal(\Rbb \times \Con)$ such that $\mutrit{0}$ is supported within $\Rbb \times \Ran(\Gmat)$, $\exists \mut{t} \in \Pcal(\Rbb \times \Con)$ and $\Thetabt{t}: \Rbb \times \Con \to \Rbb \times \Con$ such that $\mut{t} = (\Thetabt{t})_{\#} \mut{0}$, where $\Thetabt{t} = [\At{t}, \Ht{t}]$ satisfies \eqref{eq:dotAt_Con} - \eqref{eq:A0H0_Con}. (In our case of interest where $\mut{0}$ is the limit of $\mumbt{0}$ as $m_1$, $m_2 \to \infty$, Lemma~\ref{lem:lln_0_1/2} guarantees that the assumption on the support of $\mutrit{0}$ holds under Assumption~\ref{ass:init}.)

In particular, 
$\forall \xb \in \domX$, it holds that
\begin{equation}
\begin{split}
    \label{eq:f_ndim}
    \ft{t}(\xb) 
    =&~ 
    \int_{\Rbb \times \Rbb^n} a \EE_{Z \sim \mathcal{N}(0, 1)} \left [ \sigtbig{\tau(\xb) Z + \summ{k}{n} (\Gmhalf \cdot \lambb)_{k} \Gfun(\xbtrk{k}, \xb)} \right ] \muhidt{t}(da, d\lambb) \\
    =&~ 
    \int_{\Rbb \times \Rbb^n} a \EE_{Z \sim \mathcal{N}(0, 1)} \left [ \sigtbig{\tau(\xb) Z + \lambb^{\intercal} \cdot \Xtil(\xb) } \right ] \muhidt{t}(da, d\lambb)~,
\end{split}
\end{equation}
where $\tau(\xb) \coloneqq \sqrt{\Gfun(\xb, \xb) - \sum_{k, l=1}^n \Gfun(\xb, \xbtrk{k}) \Gfun(\xb, \xbtrk{l}) (\Gm)_{k, l}} \geq 0$ and $\muhidt{t} = (\hatTpinv)_{\#} \mut{t} = (\Psi_t)_{\#}\muhidt{0}$ with 
$\muhidt{0} =  (\hatTpinv)_{\#} \mut{0}$.
\end{lemma}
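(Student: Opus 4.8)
The plan is to obtain $\mut{t}$ by \emph{lifting} the finite-dimensional flow. The $n$-dimensional shallow mean-field Wasserstein gradient flow $\muhidt{t} = (\Psi_t)_{\#}\muhidt{0}$ solving \eqref{eq:dotCt_Hilb}--\eqref{eq:dotLambt_Hilb} from $\muhidt{0} = \rhoa \times \mathcal{N}(0, \text{Id}_n)$ exists globally in time by the standard mean-field theory of shallow networks on Euclidean space \citep{chizat2018global, sirignano2020mean_lln}, and we set $\Thetabt{t} = [\At{t}, \Ht{t}]$ via \eqref{eq:AtfromCt_Hilb} and \eqref{eq:HtfromLambt_Con} and $\mut{t} = (\Thetabt{t})_{\#}\mut{0}$. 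Two structural facts make this work. First, by \eqref{eq:T_Tpinv} and the surrounding discussion, $T$ is an isometric isomorphism of $\Ran(G)$ onto $\Hilbtri$ with inverse $\Tpinv|_{\Hilbtri}$, it satisfies $(T\mub)(\xb) = \mub^\intercal\Xtil(\xb)$ for all $\mub \in \Rbb^n$, and — using that $\Gfun$ is positive semidefinite, so $[\Gfun(\xbtrk{1},\xb),\dots,\Gfun(\xbtrk{n},\xb)]^\intercal \in \Ran(G)$ for every $\xb \in \domX$ — also $T(\xtildetrk{k}) = \Gfun(\xbtrk{k},\cdot)$. Second, $\Projorth{\Hilbtri}{h} = h - \Proj{\Hilbtri}{h}$ depends on $h$ only through $\evtr(h)$, is \emph{time-independent}, and vanishes on the training points whenever $\evtr(h)\in\Ran(G)$; by Lemma~\ref{lem:lln_0_1/2} the $\Rbb^n$-marginal of $\mutrit{0}$ is $\mathcal{N}(0, G)$, hence this holds $\mut{0}$-a.s.

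Granting these, the verification that $\Thetabt{t}$ satisfies \eqref{eq:dotAt_Con}--\eqref{eq:dotHt_Con} and the initial condition \eqref{eq:A0H0_Con} is routine: at $t=0$ one gets $\At{0}(a,h)=a$ and $\Ht{0}(a,h) = \Proj{\Hilbtri}{h} + \Projorth{\Hilbtri}{h} = h$; evaluating $\Ht{t}(a,h)$ on a training point kills the orthogonal piece and recovers $\Ht{t}(a,h)(\xbtrk{k}) = \Lambt{t}(a,\Tpinv(h))^\intercal\xtildetrk{k}$, from which $\ft{t}(\xbtrk{k}) = g_{\muhidt{t}}(\xtildetrk{k})$ — the agreement with the $n$-dimensional model on the training set already observed in the text, via the intertwining $\hatTpinv\circ\Thetabt{t} = \Psi_t\circ\hatTpinv$; and differentiating \eqref{eq:AtfromCt_Hilb}, \eqref{eq:HtfromLambt_Con} and substituting \eqref{eq:dotCt_Hilb}, \eqref{eq:dotLambt_Hilb} — the time-independent term contributing nothing, and $T$ carrying $\xtildetrk{k}$ to $\Gfun(\xbtrk{k},\cdot)$ — reproduces \eqref{eq:dotAt_Con}--\eqref{eq:dotHt_Con}. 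One also checks that $\Ht{t}(a,h)\in\Con$ and that $(a,h)\mapsto\Thetabt{t}(a,h)$ is continuous, so $\mut{t}\in\Pcal(\Rbb\times\Con)$ is well defined.

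For the explicit formula \eqref{eq:f_ndim} I would use Gaussian conditioning under $\mut{0} = \rhoa \times \mathcal{GP}(0,\Gfun)$. For a sample path $h$, the pair $(\evtr(h), h(\xb))$ is jointly Gaussian, so conditionally on $\evtr(h)$ — equivalently on $\lambb := \Tpinv(h)$, a deterministic function of $\evtr(h)$ — we may write $h(\xb) = \lambb^\intercal\Xtil(\xb) + \tau_{\xb}Z$ with $Z\sim\mathcal{N}(0,1)$ independent, because $\Proj{\Hilbtri}{h}(\xb) = \Tpinv(h)^\intercal\Xtil(\xb)$ is exactly the Gaussian regression of $h(\xb)$ onto $\evtr(h)\in\Ran(G)$ while $\tau_{\xb}^2 = \Gfun(\xb,\xb) - \sum_{k,l=1}^n \Gfun(\xb,\xbtrk{k})\Gfun(\xb,\xbtrk{l})(G^{+})_{k,l}$ is the corresponding (nonnegative) conditional variance. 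Substituting $\Projorth{\Hilbtri}{h}(\xb) = h(\xb) - \lambb^\intercal\Xtil(\xb) = \tau_{\xb}Z$ into $\Ht{t}(a,h)(\xb) = \Lambt{t}(a,\lambb)^\intercal\Xtil(\xb) + \Projorth{\Hilbtri}{h}(\xb)$ collapses it to $\Ht{t}(a,h)(\xb) = \Lambt{t}(a,\lambb)^\intercal\Xtil(\xb) + \tau_{\xb}Z$, and integrating $\At{t}(a,h)\,\sigtbig{\Ht{t}(a,h)(\xb)} = \Ct{t}(a,\lambb)\,\sigtbig{\Lambt{t}(a,\lambb)^\intercal\Xtil(\xb)+\tau_{\xb}Z}$ against $\mut{0}$ and then pushing $(a,\lambb)$ forward through $(\Ct{t},\Lambt{t})$ to $\muhidt{t}$ yields \eqref{eq:f_ndim}, using also $\lambb^\intercal\Xtil(\xb) = \summ{k}{n}(\Gmhalf\lambb)_k\Gfun(\xbtrk{k},\xb)$.

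The part I expect to be the main obstacle is not this lifting but the global-in-time existence of $\muhidt{t}$ from the \emph{Gaussian} initialization $\rhoa\times\mathcal{N}(0,\text{Id}_n)$: unlike a compactly supported one, it requires ruling out finite-time blow-up of the coordinate $\Ct{t}$. This should follow from the monotone decay of the finite-dimensional loss $\tilde{\Lossb}$ along the flow — which keeps the residuals $\zetak{k}{g_{\muhidt{t}}(\xtildetrk{k})}$ bounded uniformly in $t$ — combined with the compact support of $\rhoa$ and the at-most-linear growth of $\sigma_2$ implied by Assumption~\ref{ass:sigma_diff}, after which the standard finite-dimensional existence/uniqueness theory applies. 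A minor caveat is that when $G$ is singular the $\lambb$-marginal of $(\hatTpinv)_{\#}\mut{0}$ is the degenerate $\mathcal{N}(0,G)$ rather than $\mathcal{N}(0,\text{Id}_n)$, but since each datum $\xtildetrk{k}=(\Ghalf)_{k,:}$ lies in $\Ran(G)$ the flow only sees $\Proj{\Ran(G)}{\lambb}$, so the two initializations induce the same flow on the relevant subspace.
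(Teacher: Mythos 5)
Your proposal is correct and takes essentially the same route as the paper's appendix proof: construct $\Thetabt{t}$ by lifting the finite-dimensional transport map through $T$ and $\Tpinv$ with the orthogonal complement $\Projorth{\Hilbtri}{h}$ carried along unchanged, exploit that it vanishes on the training points and is time-independent to recover the intertwining with $\Psi_t$, and obtain \eqref{eq:f_ndim} by Gaussian conditioning of $h(\xb)$ on $\evtr(h)$. The paper spells out the ``routine'' ODE verification explicitly through an intermediate transport $\Thetabtrit{t}$ on $\Rbb\times\Rbb^n$, which is bookkeeping you compress, and your caveat about the degenerate $\lambb$-marginal when $G$ is singular is a legitimate observation the paper elides; one small slip is the claim that $\Projorth{\Hilbtri}{h} = h - \Proj{\Hilbtri}{h}$ ``depends on $h$ only through $\evtr(h)$'' (that is true of $\Proj{\Hilbtri}{h}$, not of the complement), but only its time-independence and vanishing on the training set are actually used, so the argument stands.
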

\noindent The full proof of the lemma is given in Appendix~\ref{app:pf_mf_exist_1/2}. 

To understand \eqref{eq:f_ndim}, we see that $\forall k \in [n]$, $\tau(\xbtrk{k}) = 0$, and hence \eqref{eq:f_ndim} agrees with \eqref{eq:f_ndim_1} on the training data. Outside the training set, the term $\tau(\xb) Z$ in \eqref{eq:f_ndim} is a consequence of the additional term $\Projorth{\Hilbtri}{h}$ in \eqref{eq:HtfromLambt_Con} compared with \eqref{eq:HtfromLambt_Hilb}. Heuristically, it can be interpreted as adding an \emph{input-dependent smoothing} to the activation of each neuron. 

\section{Mean-Field Limit}
\label{sec:6}
So far, we have shown the existence of $\mut{t}$ as a dynamics in the space of $\Pcal(\Rbb \times \Con)$, which can be restricted to $\Pcal(\Rbb \times \Hilbtri)$ when $\alpha > 1/2$. Next, we examine the convergence of $\fmbt{t}$ to $\ft{t}$ as $m_1, m_2$ tend to infinity. While Lemmas~\ref{lem:lln_0_1} and \ref{lem:lln_0_1/2} establish the convergence at $t = 0$ under random initialization, for $t > 0$, since the training dynamics introduce nonlinear interactions among the neurons, further arguments are necessary. 

Classical studies of interacting particle systems rely on a propagation-of-chaos argument to bound the deviation between the finite-size system and its infinite-width limit through evolution using the Lipschitz-continuity of the evolution map \citep{braun1977vlasov}. This approach has been adapted for showing that $2$L NNs converge to the MF limit when the widths tend to infinity \citep{mei2018mean, rotskoff2018parameters, rotskoff2022cpam, chizat2018global, sirignano2020mean_lln}. Here, we want to adopt a similar approach but face the challenge that the probability measures are defined on functional spaces rather than Euclidean space. 

To circumvent this challenge, we again leverage the fact that the system can be determined from a transport dynamics of probability measures on finite-dimensional space, $\muhidt{t} = (\hatTpinv)_{\#} \mut{t} \in \Pcal(\Hilbtri)$, or equivalently, $\mutrit{t} \in \Pcal(\Rbb^n)$. First, we will prove the convergence of $\mumbtrit{t}$ to $\mutrit{t}$ as $m_1$ and $m_2$ tend to infinity. Specifically, if the activation function $\sigma_2$ is additionally assumed to be bounded, we can prove an upper bound on their $1$-Wasserstein distance at time $t$ based on their $1$-Wasserstein distance at time $0$, which has been controlled by Lemmas~\ref{lem:lln_0_1} and \ref{lem:lln_0_1/2}. Compared to propagation-of-chaos results for $2$L NNs, an additional complication is caused by the finiteness of $m_1$, which introduces an extra term involving the deviation of $\Gmb \coloneqq \Gfunmb[\xbtrk{1}, ..., \xbtrk{n}]$ from $G$. The results are as follows:
\begin{assumption}
\label{ass:sigma_bdd}
$\sigma_2$ is bounded. Specifically, $\exists \mathtt{M}_{\sigma_2} > 0$ such that $\forall u \in \Rbb, |\sigma_2(u)| \leq \mathtt{M}_{\sigma_2}$.
\end{assumption}
\begin{lemma}[Propagation-of-chaos, I]
\label{lem:lln_t_tr}
Suppose Assumptions~\ref{ass:sigma_diff} and \ref{ass:sigma_bdd} hold and $\mut{t}$ exists.
Then for any $t \geq 0$, $\exists C_1(t) > 0$ such that
\begin{equation}
\label{eq:poc1}
    \begin{split}
        \mathcal{W}_1(\mutrit{t}, \mumbtrit{t}) \leq C_1(t) \left ( \mathcal{W}_1(\mutrit{0}, \mumbtrit{0}) + \| G - \Gmb \|_{2} \right )~.
    \end{split}
\end{equation}
\end{lemma}
\noindent This lemma is proved in Appendix~\ref{app:pf_lln_t_tr}. Next, we can extend the bound to \emph{any} finite-dimensional evaluations (not just on the training set) of $\mumbt{t}$ and $\mut{t}$. Let $\{ \xbg{1}, ..., \xbg{\ng} \}$ be any finite subset of $\domX$. We define the shorthands $\mudtrit{t} \coloneqq (\hat{\ev}_{\xbg{1}, ..., \xbg{\ng}})_{\#}\mut{t}$, $\mumbdtrit{t} \coloneqq (\hat{\ev}_{\xbg{1}, ..., \xbg{\ng}})_{\#} \mumbt{t}$, $\Gtrte \coloneqq \Gfun[\xbtrk{1}, ..., \xbtrk{n}, \xbg{1}, ..., \xbg{\ng}]$ and $\Gmbtrte \coloneqq \Gfunmb[\xbtrk{1}, ..., \xbtrk{n}, \xbg{1}, ..., \xbg{\ng}]$. Then, we have:
\begin{lemma}[Propagation-of-chaos, II]
\label{lem:lln_t_gen}
Suppose Assumptions~\ref{ass:sigma_diff} and \ref{ass:sigma_bdd} hold and $\mut{t}$ exists.
For any $t \geq 0$, $\exists C_2(t) > 0$ such that
\begin{equation}
    \begin{split}
        \mathcal{W}_1(\mudtrit{t}, \mumbdtrit{t}) 
        \leq~ C_2(t) \left ( \mathcal{W}_1(\mudtrit{0}, \mumbdtrit{0}) + \mathcal{W}_1(\mutrit{0}, \mumbtrit{0}) + \| \Gtrte - \Gmbtrte \|_{2} \right )~.
    \end{split}
\end{equation}
\end{lemma}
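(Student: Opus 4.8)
The plan is to lift the McKean--Vlasov dynamics to a finite-dimensional transport flow on $\Rbb\times\Rbb^{n+\ng}$ indexed by the combined point set, and then estimate by a triangle inequality in which the mean-field self-consistency error is dispatched by a direct appeal to Lemma~\ref{lem:lln_t_tr}. The crucial observation is that the right-hand sides of \eqref{eq:dotAt_Con}--\eqref{eq:dotHt_Con} --- and of their finite-width analogues --- depend on $h$ only through $\evtr(h)=(h(\xbtrk 1),\dots,h(\xbtrk n))$ and, for the purpose of tracking the value at a point $\xb$, through the scalars $\Gfun(\xbtrk k,\xb)$ (resp.\ $\Gfunmb(\xbtrk k,\xb)$). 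Consequently, writing $\hatevte$ for the lifted evaluation at $\{\xbtrk 1,\dots,\xbtrk n,\xbg 1,\dots,\xbg{\ng}\}$, the measures $\mudtrit t=(\hatevte)_{\#}\mut t$ and $\mumbdtrit t=(\hatevte)_{\#}\mumbt t$ are pushforwards of $\mudtrit 0$, $\mumbdtrit 0$ by maps $\Psi^{\blacktriangle,t},\Psi^{\mb,\blacktriangle,t}:\Rbb\times\Rbb^{n+\ng}\to\Rbb\times\Rbb^{n+\ng}$ whose first $1+n$ coordinates evolve by the closed system already analyzed for Lemma~\ref{lem:lln_t_tr} (with Gram matrix $G$, resp.\ $\Gmb$), and whose last $\ng$ coordinates evolve by an ODE driven by the first $1+n$ coordinates and by the off-diagonal block $[\Gfun(\xbtrk k,\xbg j)]_{k,j}$ (resp.\ $[\Gfunmb(\xbtrk k,\xbg j)]_{k,j}$), with no feedback onto them. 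When $\alpha=\tfrac12$, the extra frozen summand $\Projorth{\Hilbtri}{h}$ of \eqref{eq:HtfromLambt_Con}, evaluated at $\xbg j$, equals $h(\xbg j)-\sum_k(\Gm\evtr(h))_k\Gfun(\xbtrk k,\xbg j)$, again a fixed affine function of the combined evaluation vector and the kernel, so the same closed structure holds.

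Next I write $\mathcal W_1(\mudtrit t,\mumbdtrit t)\le\mathcal W_1\big((\Psi^{\blacktriangle,t})_{\#}\mudtrit 0,(\Psi^{\blacktriangle,t})_{\#}\mumbdtrit 0\big)+\mathcal W_1\big((\Psi^{\blacktriangle,t})_{\#}\mumbdtrit 0,(\Psi^{\mb,\blacktriangle,t})_{\#}\mumbdtrit 0\big)$. The first term is at most $\mathrm{Lip}(\Psi^{\blacktriangle,t})\,\mathcal W_1(\mudtrit 0,\mumbdtrit 0)$, and $\mathrm{Lip}(\Psi^{\blacktriangle,t})<\infty$ on the relevant domain by standard a priori estimates: since $\sigma_2$ is bounded, $\ft t$ is bounded uniformly on $[0,t]$, so $|A_t|\le|a|+Ct$ (with $|a|$ bounded by the compact support of $\rhoa$), while the feedback-free $W$-coordinates contribute to the Lipschitz constant only a time integral of quantities Lipschitz in the $(A,V)$-coordinates. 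The second term is at most $\sup_z\|\Psi^{\blacktriangle,t}(z)-\Psi^{\mb,\blacktriangle,t}(z)\|$ over $z$ in the support of $\mumbdtrit 0$, and a Gr\"onwall argument bounds this by a multiple of $\|\Gtrte-\Gmbtrte\|_2$ plus $\int_0^t\sum_{k=1}^n|\ft s(\xbtrk k)-\fmbt s(\xbtrk k)|\,ds$; bounding $|\ft s(\xbtrk k)-\fmbt s(\xbtrk k)|\le L\,\mathcal W_1(\mutrit s,\mumbtrit s)$ (using that $\sigma_2$ is bounded and $\sigma_2'$ Lipschitz) and applying Lemma~\ref{lem:lln_t_tr} directly turns this into a bound by $C(t)\big(\mathcal W_1(\mutrit 0,\mumbtrit 0)+\|G-\Gmb\|_2\big)$, with $\|G-\Gmb\|_2\le\|\Gtrte-\Gmbtrte\|_2$ since $G$ is a principal submatrix of $\Gtrte$. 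Adding the two contributions yields $\mathcal W_1(\mudtrit t,\mumbdtrit t)\le C_2(t)\big(\mathcal W_1(\mudtrit 0,\mumbdtrit 0)+\mathcal W_1(\mutrit 0,\mumbtrit 0)+\|\Gtrte-\Gmbtrte\|_2\big)$.

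The main obstacle is keeping this decomposition honest: one must ensure that the mean-field term $\ft s-\fmbt s$ appearing inside the $W$-dynamics re-enters through the \emph{already proven} Lemma~\ref{lem:lln_t_tr}, not through the Gr\"onwall loop being set up --- which is legitimate precisely because the new-point coordinates never influence the training-point coordinates or $\ft t$. A secondary technical point is the $\alpha=\tfrac12$ regime, where one checks (as above) that the frozen complement $\Projorth{\Hilbtri}{h}$ depends on $h$ only through the finite combined evaluation, and keeps $\Gm$ continuous in $G$ by restricting to $\Ran(G)$ as in Lemmas~\ref{prop:mf_exist_1} and~\ref{lem:mf_exist_1/2}; the a priori bounds that make $C_2(t)$ finite are exactly those already needed for Lemma~\ref{lem:lln_t_tr}. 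If one instead insists that $\mudtrit 0$ denote the evaluation at the new points alone, an extra short step bounds the joint initial discrepancy $\mathcal W_1\big((\hatevte)_{\#}\mut 0,(\hatevte)_{\#}\mumbt 0\big)$ by $\mathcal W_1(\mudtrit 0,\mumbdtrit 0)+\mathcal W_1(\mutrit 0,\mumbtrit 0)$, using the product form $\mut 0=\rhoa\times(\text{process})$ and the Gaussian or degenerate law of the process to glue the two marginal couplings along the shared $a$-coordinate.
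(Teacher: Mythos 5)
Your proof is correct and takes essentially the same approach as the paper's: both lift the dynamics to the combined finite-dimensional evaluation on $\Rbb\times\Rbb^{n+\ng}$, exploit that the training-point coordinates $(A,\Ub)$ evolve autonomously (as already controlled in Lemma~\ref{lem:lln_t_tr}) while the new-point coordinates $\Vb$ are driven with no feedback, decompose via the same triangle inequality around the intermediate measure $(\Thetabdtrit{t})_{\#}\mumbdtrit{0}$, bound one term by a Lipschitz estimate and the other by a Grönwall argument that imports the training-point bound, and close using the observation that $G$ is a principal submatrix of $\Gtrte$.
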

\noindent This lemma is proved in Appendix~\ref{app:pf_lln_t_gen}. As in standard propagation-of-chaos results based on Gr\"onwall's inequality, the constants $C_1(t)$ and $C_2(t)$ in the above lemmas grow exponentially in $t$. It remains open whether one can reduce the dependence on $t$ by further exploiting the properties of the MF dynamics, perhaps by leveraging ideas considered by \citet{chen2020dynamical, pham2021fluctuation, glasgow2025propagation}.

Thus, combining Lemmas~\ref{lem:lln_0_1}, \ref{lem:lln_0_1/2}, and \ref{lem:lln_t_gen} as well as concentration bounds of $\Gfunmb$ (Lemma~\ref{lem:Gm_dev}), we see that $\forall t \geq 0, \epsilon > 0$, $\exists R_3(t), R_4(t) > 0$ (dependent on $\epsilon$, $\{ \xbtrk{1}, ..., \xbtrk{n}\}$ and $\{ \xbg{1}, ..., \xbg{n'}\}$) such that
\begin{equation}
\label{eq:exp_small_prob_wass_t}
    \Pbb \left ( \mathcal{W}_1 \left (\mudtrit{t}, \mumbdtrit{t} \right ) > \epsilon \right ) < O(e^{-R_3(t) m_1} + e^{-R_4(t) m_2})~,
\end{equation}
which implies an almost sure convergence through the Borel-Cantelli lemma (analogous to the last step in the proof of Lemma~\ref{lem:lln_0_1/2} in Appendix~\ref{app:pf_lln_0_1/2}).
Finally, choosing $n' = 1$ and $\xb'_1 = \xb$, we are able to prove our main result on the MF limit:
\begin{theorem}[MF limit]
\label{thm:mflim}
Suppose Assumptions~\ref{ass:sigma_diff}, \ref{ass:init} and \ref{ass:sigma_bdd} hold. Then $\mut{t} = (\Thetabt{t})_{\#} \mut{0}$ exists, where $\Thetabt{t} = [\At{t}, \Ht{t}]$ satisfy \eqref{eq:dotAt_Con} - \eqref{eq:A0H0_Con}, and $\mut{0} = \rhoa \times \chi$, where $\chi = \delta_{\zerob}$ if $\alpha > 1/2$ or $\mathcal{GP}(0, \Gfun)$ if $\alpha = 1 / 2$. 
Moreover,
$\forall \xb \in \domX, t \geq 0$, $\fmbt{t}(\xb)$ converges almost surely as $m_1, m_2 \to \infty$ to $\ft{t}(\xb)$, which can be characterized by \eqref{eq:f_ndim_1} and \eqref{eq:f_ndim} when $\alpha > 1/2$ and $\alpha = 1 / 2$, respectively.
\end{theorem}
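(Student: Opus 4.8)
The statement is essentially an assembly of the preceding lemmas, so the plan is to (i) produce $\mut{t}$ and $\Thetabt{t}$, (ii) establish the law of large numbers as a high-probability estimate, and (iii) upgrade it to almost-sure pointwise convergence of the output. For step (i), take $\mut{0} = \rhoa \times \chi$. When $\alpha > \tfrac12$, $\chi = \delta_{\zerob}$, so $(\hatevtr)_{\#}\mut{0}$ has $h$-marginal $\delta_0$, which is trivially supported in $\Rbb \times \Ran(G)$, and Lemma~\ref{prop:mf_exist_1} (under Assumption~\ref{ass:sigma_diff}) yields $\mut{t} = (\Thetabt{t})_{\#}\mut{0}$ solving \eqref{eq:dotAt_Con}--\eqref{eq:A0H0_Con}, together with \eqref{eq:f_ndim_1}. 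When $\alpha = \tfrac12$, $\chi = \mathcal{GP}(0,\Gfun)$, and $(\hatevtr)_{\#}\mut{0}$ has $h$-marginal $\mathcal{N}(0, G)$, which is supported in $\Ran(G)$ (as established by Lemma~\ref{lem:lln_0_1/2}); hence the hypothesis of Lemma~\ref{lem:mf_exist_1/2} is met and we obtain $\mut{t}$, $\Thetabt{t}$ and \eqref{eq:f_ndim}. In both regimes $\ft{t} := \fmu{\mut{t}}$ is characterized through the finite-dimensional shallow model $\muhidt{t} = (\Psi_t)_{\#}\muhidt{0}$.

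For step (ii), fix $\xb \in \domX$ and $t \geq 0$ and apply Lemma~\ref{lem:lln_t_gen} with $\ng = 1$ and $\xbg{1} = \xb$, so that $\blacktriangle$ denotes evaluation on $\{\xbtrk{1}, \dots, \xbtrk{n}, \xb\}$; since $\mut{t}$ exists by step (i),
\[
    \mathcal{W}_1(\mudtrit{t}, \mumbdtrit{t}) \leq C_2(t) \Big( \mathcal{W}_1(\mudtrit{0}, \mumbdtrit{0}) + \mathcal{W}_1(\mutrit{0}, \mumbtrit{0}) + \| \Gtrte - \Gmbtrte \|_{2} \Big).
\]
The two initial-time Wasserstein terms are finite-dimensional evaluations of $\mut{0}$ and $\mumbt{0}$, so by Lemma~\ref{lem:lln_0_1} (if $\alpha > \tfrac12$) or Lemma~\ref{lem:lln_0_1/2} (if $\alpha = \tfrac12$) --- both under Assumptions~\ref{ass:sigma_diff} and \ref{ass:init} --- each exceeds a given threshold only with probability $O(e^{-K_1 m_1} + e^{-K_2 m_2})$. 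The kernel term is controlled by Lemma~\ref{lem:Gm_dev}: $\Gmbtrte$ is an empirical average of $m_1$ i.i.d.\ matrices with mean $\Gtrte$, with sub-Gaussian tails under the sub-Gaussianity of $\rhoz$ in Assumption~\ref{ass:init}, so $\| \Gtrte - \Gmbtrte \|_2$ exceeds a threshold only with probability $O(e^{-K m_1})$. A union bound then gives, for every $\epsilon > 0$, $\Pbb\big( \mathcal{W}_1(\mudtrit{t}, \mumbdtrit{t}) > \epsilon \big) < O(e^{-K_1 m_1} + e^{-K_2 m_2})$.

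For step (iii), note that $\fmbt{t}(\xb) = \int_{\Rbb\times\Rbb} a\,\sigtbig{u}\, \mumbt{t, [\xb]}(da, du)$ and likewise $\ft{t}(\xb) = \int_{\Rbb\times\Rbb} a\,\sigtbig{u}\, \mut{t, [\xb]}(da, du)$, where $\mut{t, [\xb]}$ is the image of $\mudtrit{t}$ under the coordinate projection, so $\mathcal{W}_1(\mut{t,[\xb]}, \mumbt{t,[\xb]}) \leq \mathcal{W}_1(\mudtrit{t}, \mumbdtrit{t})$. The observable $(a, u) \mapsto a\,\sigma_2(u)$ is not globally Lipschitz, so one first needs a deterministic bound $|a| \leq A(t)$, uniform in $\mb$, on the $a$-marginals of $\mumbt{t}$ and $\mut{t}$: since $\rhoa$ is compactly supported and $|\sigma_2| \leq M_{\sigma_2}$ (Assumption~\ref{ass:sigma_bdd}), \eqref{eq:dotati} (resp.\ \eqref{eq:dotAt_Con}) together with $|\fmbt{t}| \leq M_{\sigma_2}\max_i |\ait{t}|$ (resp.\ the analogous bound for $\ft{t}$) yields a closed Gr\"onwall inequality that bounds the relevant quantity by a deterministic $A(t)$. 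Clipping $a$ to $[-A(t), A(t)]$ turns $a\,\sigma_2(u)$ into a globally $(M_{\sigma_2} + A(t)L_{\sigma_2})$-Lipschitz function agreeing with it on the supports of both measures, so
\[
    |\fmbt{t}(\xb) - \ft{t}(\xb)| \leq \big( M_{\sigma_2} + A(t) L_{\sigma_2} \big)\, \mathcal{W}_1(\mut{t,[\xb]}, \mumbt{t,[\xb]}).
\]
Combining with step (ii), $\Pbb(|\fmbt{t}(\xb) - \ft{t}(\xb)| > \epsilon) < O(e^{-K_1 m_1} + e^{-K_2 m_2})$ for every $\epsilon > 0$; since this tail is summable along, e.g., the diagonal $m_1 = m_2 \to \infty$ (or any sequence with both indices diverging), Borel--Cantelli gives the claimed almost-sure convergence of $\fmbt{t}(\xb)$ to $\ft{t}(\xb) = \fmu{\mut{t}}(\xb)$, whose explicit form is \eqref{eq:f_ndim_1} for $\alpha > \tfrac12$ and \eqref{eq:f_ndim} for $\alpha = \tfrac12$ by step (i).

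The substantive difficulty lies entirely in the lemmas being invoked --- the functional-space propagation-of-chaos estimate of Lemma~\ref{lem:lln_t_gen}, which must propagate the Lipschitz-continuity of the transport dynamics through finite-dimensional evaluations despite the nonlinear dependence on $h$, and the Gaussian-process law of large numbers of Lemma~\ref{lem:lln_0_1/2} in the $\alpha = \tfrac12$ case. Within the present assembly the only delicate point is the last step: converting $\mathcal{W}_1$-closeness of the finite-dimensional projections into closeness of the scalar output $\fmbt{t}(\xb)$, which is nontrivial solely because the observable $a\,\sigma_2(u)$ is unbounded in $a$ and thus requires the a priori uniform-in-width control on the second-layer weights (the place where Assumption~\ref{ass:sigma_bdd} enters beyond Lemma~\ref{lem:lln_t_gen}), together with some care about whether the double limit $m_1, m_2 \to \infty$ is taken jointly or along a sequence.
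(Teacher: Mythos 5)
Your proposal is correct and follows the same route the paper sketches in the paragraph preceding Theorem~\ref{thm:mflim}: existence from Lemmas~\ref{prop:mf_exist_1}/\ref{lem:mf_exist_1/2}, quantitative propagation-of-chaos from Lemma~\ref{lem:lln_t_gen} with $n'=1$ combined with the $t=0$ LLN (Lemmas~\ref{lem:lln_0_1}/\ref{lem:lln_0_1/2}) and kernel concentration (Lemma~\ref{lem:Gm_dev}), then Borel--Cantelli. The one step you spell out that the paper leaves implicit --- converting $\mathcal{W}_1$-closeness of the one-dimensional evaluations into closeness of $\fmbt{t}(\xb)$ despite $(a,u)\mapsto a\sigma_2(u)$ not being globally Lipschitz --- is exactly what Lemma~\ref{lem:blowup_tr} furnishes (a deterministic, width-uniform bound $|A^{\mb,\triangle}_t|, |A^\triangle_t| \leq C_1(n,t)$ on the $a$-marginals), and the resulting inequality $|\fmbtritk{t}{k}-\ftritk{t}{k}| \leq (M_{\sigma_2}+C_1 L_{\sigma_2})\,\mathcal{W}_1(\cdot,\cdot)$ already appears inside the proof of Lemma~\ref{lem:lln_t_tr}, so your step (iii) is a correct surfacing of that argument rather than a new one.
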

\begin{remark}
Because the bounds obtained in Lemmas~\ref{lem:lln_t_tr} and \ref{lem:lln_t_gen} are non-asymptotic in $m_1$ and $m_2$, the MF limit established in Theorem~\ref{thm:mflim} does not require $m_1$ and $m_2$ to tend to infinity either in a particular order or under certain asymptotic relations.
\end{remark}

\section{Convergence Guarantee of the Mean-Field Dynamics}
\label{sec:gc}
In this section, we further investigate the evolution of $\ft{t}$ over time as a function on $\domX$.
At initialization ($t=0$), 
there is $\ft{0}(\xb) = \left ( \int_{\mathbb{R}} a \rhoa(da) \right ) \left ( \int_{\Con} \sigma(h(\xb)) \chi(dh) \right )$ for any $\xb \in \domX$. Hence,
if $\rhoa$ is symmetric with respect to zero (Assumption~\ref{ass:init}), 
then for either choice of $\alpha$, $\ft{0}$ is the zero function on $\domX$. 

For $t \geq 0$, the evolution of the measure $\mut{t}$ induces a dynamics of $\ft{t}$ that can be expressed as a \emph{functional gradient flow}:
\begin{equation}
\label{eq:dotft}
        \frac{d}{dt}\ft{t}(\xb)
        = \frac{1}{n} \sum_{k=1}^n \zetak{k}{\ft{t}(\xbtrk{k})} \NTKt{t}(\xbtrk{k}, \xb) ~,
\end{equation}
where $\forall \xb, \xb' \in \domX$, we define the kernel function $\NTKt{t}(\xb, \xb') = \betaA \NTKta{t}(\xb, \xb') + \NTKtW{t}(\xb, \xb')$, with
\begin{align*}
    \NTKta{t}(\xb, \xb') 
        =& \int_{\mathbb{R} \times \Con} \sigma_1 \left ( h(\xb) \right ) \sigma_1 \left ( h(\xb') \right ) \mut{t}(da, dh)~,\\
    \NTKtW{t}(\xb, \xb') 
        =&~ \Qfun_t(\xb, \xb') ~ \Gfun(\xb, \xb')~, \\
        \text{ where } \quad \Qfun_t(\xb, \xb') = & \int_{\mathbb{R} \times \Con} a^2 {\sigma_2}' \left ( h(\xb) \right ) {\sigma_2}' \left ( h(\xb') \right ) \mut{t}(da, dh)~.
\end{align*}
The dynamics \eqref{eq:dotft} can be viewed as an infinite-width analog of \eqref{eq:dotfmbt} as $m_1, m_2 \to \infty$, which is now well-defined through the functional-space MF theory developed in the previous section.

In the NTK regime (equivalent to $\alpha = 0$ if $\sigma$ is $1$-homogeneous), the corresponding kernel function is \emph{static} during training, which leads to a linearized training dynamics
and excludes the possibility of feature learning (\citealt{chizat2019lazy}; also see numerical results in Section~\ref{sec:exp}).
In contrast, when $\alpha \geq 1 / 2$, the kernel function $\NTKt{t}$ changes over time as $\mut{t}$ evolves during training. 
Inevitably, this complicates the convergence analyses compared to the NTK model, but we will show below that a linear-rate convergence guarantee can still be derived through a fine-grained analysis of the kernel function.

\subsection{Linear-rate convergence with a time-varying kernel}
To analyze the convergence rate of the training loss, we define a $n \times n$ kernel matrix $\NTKmatt{t}$ associated with the kernel function $\NTKt{t}$ by $(\NTKmatt{t})_{k, l} \coloneqq \NTKt{t}(\xbtrk{k}, \xbtrk{l})$ for $k, l \in [n]$. Similarly, we define $n \times n$ matrices $\NTKmatWt{t}$ and $Q_t$ associated with $\NTKtW{t}$ and $\Qfun_t$. It is easy to see that these matrices are all symmetric and positive semi-definite. Then, from \eqref{eq:dotft}, the decay rate of the training loss can be computed as
\begin{equation}
\begin{split}
    \frac{d}{dt} \Lossbt{t} =&~ - \frac{1}{n^2} \sum_{k, l=1}^n \barzetak \barzetal (\NTKmatt{t})_{k, l} ~.
\end{split}
\end{equation}
Using the definition $\Lossbt{t}$, we obtain the following bound by focusing on the contributions from the movement of $W_t$ (and hence it does not depend on the learning rate of the last layer, $\betaA$):
\begin{equation}
\label{eq:rate_lower_bdd}
     \frac{d}{dt} \Lossbt{t} \leq - \frac{2}{n^2} \lambda_{\min}(\NTKmatt{t}) \Lossbt{t} ~ \leq - \frac{2}{n^2} \lambda_{\min}(\NTKmatWt{t}) \Lossbt{t} ~.
\end{equation}

\noindent Thus, if $\lambda_{\min}(\NTKmatWt{t})$ has a positive lower bound throughout training, \eqref{eq:rate_lower_bdd} establishes a  Polyak-\L ojasiewicz (PL) condition \citep{Polyak1963GradientMF, lojasiewicz1963topological}, through which one can prove that $\Lossbt{t}$ converges to zero at a linear rate. Under the NTK limit mentioned above, since the kernel remains fixed during training, it suffices to prove that the kernel matrix is positive definite at initialization, which indeed holds in various settings \citep{du2019deep, du2019provably}.
When $\alpha \geq 1 / 2$, the kernel moves substantially during training, and thus a uniform-in-time lower bound on $\lambda_{\min}(\NTKmatWt{t})$ is much less trivial. Nonetheless, we notice that the matrix $\NTKmatWt{t}$ can be written as the Hadamard (i.e. entry-wise) product of two matrices that are both positive semi-definite, $Q_t$ and $G$. Thus, to show the positive-definiteness of $\NTKmatWt{t}$, we can take advantage of Oppenheim's inequality \citep{markham1986oppenheim} to write
\begin{equation}
\label{eq:oppenheim}
    \det(\NTKmatWt{t}) \geq \Big ( \prod_{k=1}^n (Q_t)_{k, k} \Big ) \det(G)~.
\end{equation}
On one hand, $G$ is independent of $t$ and often guaranteed to be positive \emph{definite}, such as under the following assumptions on $\rhoz$, $\sigma_1$ and the training data \citep{du2019deep, du2019provably}:
\begin{assumption}
\label{ass:relu_anp}
$\rhoz$ is $d$-dimensional standard Gaussian and $\sigma_1$ is either 1) analytic and non-polynomial or 2) the ReLU function.
\end{assumption}
\begin{assumption}
\label{ass:not_aligned}
The training set $\{ \xbtrk{1}, ..., \xbtrk{n} \}$ does not contain any pair of aligned vectors.
\end{assumption}
Thus, $\NTKmatWt{t}$ is guaranteed to be positive definite as long as the diagonal entries of $Q_t$ have a positive lower bound uniformly throughout training. For the latter to be established, we require that the activation $\sigma_2$ satisfies:
\begin{assumption}
\label{ass:itvl}
There exists an open interval $I = (I_l, I_r) \subseteq \mathbb{R}$ on which $\sigma_2$ is differentiable and $|{\sigma_2}'|$ is lower-bounded by some $\mathtt{K}_{\sigma_2} > 0$. If $\alpha > 1/2$, we need to further assume that $0 \in I$.
\end{assumption}
\noindent The first part of this assumption is satisfied by most activation functions in practice, such as ReLU and $\tanh$. The additional assumption $0 \in I$ is needed for the case $\alpha > 1/2$ due to the bias term in the second hidden layer being omitted. If the bias term is added and randomly sampled from $\rho_b \in \Pcal(\Rbb)$ at initialization, then this assumption can be replaced by $\rho_b(I) > 0$.

Assumption~\ref{ass:itvl} is needed to ensure that, heuristically, the back-propagated gradients are not fully vanishing due to the multiplicative factors involving the terms $\{ \sigma'_2(h(\xbtrk{k})) \}_{k \in [n]}$. To show that this property holds true throughout training, we need a fine-grained analysis of the neurons' dynamics, specifically, bounding the speed of the movement of the second-hidden-layer neurons by the decay rate of the training loss (Lemma~\ref{lem:ddtAH_int_upper}).

Together, we prove that the training loss converges to zero at a linear rate \emph{without} requiring the kernel to be frozen during training:
\begin{theorem}[Linear-rate convergence of training loss]
\label{prop:gc}
Suppose that Assumptions~\ref{ass:sigma_diff}, \ref{ass:init}, \ref{ass:relu_anp}, \ref{ass:not_aligned} and \ref{ass:itvl} hold. Then there exist $\hat{a}$ and $r > 0$ such that if 
$\rhoa \left ( [\hat{a}, \infty) \right ) > 0$, 
then it holds that $\forall t \geq 0$,
\begin{equation}
    \Lossbt{t} \leq \Lossbt{0} e^{-r \hat{a}^2 \lambmin(G) t}~,
\end{equation}
where $r$ depends on $\rhoa \left ( [\hat{a}, \infty) \right )$, $I_r - I_l$, $\| \Gfun \|_{\infty}$, $G_{\min}$, $\mathtt{M}_{\sigma_2}$, $\mathtt{L}_{\sigma_2}$ and $\mathtt{K}_{\sigma_2}$.
\end{theorem}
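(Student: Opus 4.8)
The strategy is to upgrade the kernel-gradient-flow identity \eqref{eq:dotft} into a Polyak--Lojasiewicz inequality with a \emph{time-uniform} modulus. Since $\ft{0} \equiv 0$ by sign-symmetry of $\rhoa$ (Assumption~\ref{ass:init}), one has $\Lossbt{0} = \tfrac{1}{2n}\|\yb\|^2$, and by \eqref{eq:rate_lower_bdd} it suffices to exhibit a constant $c_\star > 0$ with $\lambmin(\NTKmatWt{t}) \ge c_\star$ for all $t \ge 0$; Gr\"onwall applied to $\tfrac{d}{dt}\Lossbt{t} \le -\tfrac{2}{n^2} c_\star \Lossbt{t}$ then gives the claimed linear rate (independent of $\betaA$, as $\NTKmatt{t} \succeq \NTKmatWt{t}$). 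Because $\NTKmatWt{t} = Q_t \circ G$ is a Hadamard product of two positive semi-definite matrices and $G$ is positive definite under Assumptions~\ref{ass:relu_anp}--\ref{ass:not_aligned}, I would use the elementary bound $\lambmin(Q_t \circ G) \ge \big(\min_k (Q_t)_{kk}\big)\,\lambmin(G)$: writing $Q_t = \sum_i \mu_i v_i v_i^\intercal$ with $\mu_i \ge 0$, one has $Q_t \circ G = \sum_i \mu_i \, \mathrm{diag}(v_i)\, G\, \mathrm{diag}(v_i) \succeq \lambmin(G) \sum_i \mu_i\, \mathrm{diag}(v_i)^2 = \lambmin(G)\, \mathrm{diag}\big((Q_t)_{11},\dots,(Q_t)_{nn}\big)$; this is what produces the explicit $\lambmin(G)$ factor in the stated rate, while Oppenheim's inequality \eqref{eq:oppenheim} records the companion fact $\det \NTKmatWt{t} \ge \big(\prod_k (Q_t)_{kk}\big)\det G > 0$. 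Everything thus reduces to a uniform-in-time lower bound $\min_k (Q_t)_{kk} \ge c\,\hat a^2$.

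To control $(Q_t)_{kk} = \int \At{t}(a,h)^2\, \sigma_2'\big(\Ht{t}(a,h)(\xbtrk{k})\big)^2\, \mut{0}(da,dh)$ I would bound the displacement of the flow maps $\At{t}, \Ht{t}$ along \eqref{eq:dotAt_Con}--\eqref{eq:dotHt_Con} from the initial condition \eqref{eq:A0H0_Con}. Using $|\sigma_2'| \le L_{\sigma_2}$ and $|\sigma_2(u)| \le M_{\sigma_2}$ (Assumptions~\ref{ass:sigma_diff} and \ref{ass:sigma_bdd}; absent the latter, $\sigma_2$ grows at most linearly and one adds a truncation $\max_l |h(\xbtrk{l})| \le R$ together with a Gr\"onwall coupling of $|\At{t}|$ and $\max_l |\Ht{t}(\xbtrk{l})|$), the bound $|\Gfun(\xbtrk{k},\xbtrk{l})| \le \Gfun_{\max}$, and Cauchy--Schwarz $\tfrac{1}{n}\sum_k |\ft{t}(\xbtrk{k})-\ytrk{k}| \le \sqrt{2\Lossbt{t}}$, one gets $\big|\tfrac{d}{dt}\At{t}(a,h)\big| \le M_{\sigma_2}\sqrt{2\Lossbt{t}}$ and $\big|\tfrac{d}{dt}\Ht{t}(a,h)(\xbtrk{l})\big| \le |\At{t}(a,h)|\,L_{\sigma_2}\Gfun_{\max}\sqrt{2\Lossbt{t}}$. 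Hence, as long as the loss has already decayed at least at rate $\rho$ on $[0,T]$ (so that $\int_0^T \sqrt{\Lossbt{s}}\,ds \le 2\sqrt{\Lossbt{0}}/\rho$), and using $|a| \le a_{\max}$ on the compact support of $\rhoa$ (Assumption~\ref{ass:init}), the displacements of $\At{t}(a,h)$ and of each $\Ht{t}(a,h)(\xbtrk{l})$ are bounded by $\Delta_a, \Delta_h = O(1/\rho)$, which are small once $\rho$ is not too small.

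Armed with these estimates, I would lower-bound $(Q_t)_{kk}$ by integrating only over the good set $E_k = \{(a,h): a \ge \hat a,\ h(\xbtrk{k}) \in I'\}$ (intersected with $\{\max_l |h(\xbtrk{l})| \le R\}$ in the unbounded-$\sigma_2$ case), where $I' \subset I$ is a fixed closed subinterval at distance more than $\Delta_h$ from $\partial I$, chosen to contain $0$ when $\alpha > \tfrac12$ (legitimate since $0 \in I$ by Assumption~\ref{ass:itvl}). On $E_k$ one has $\At{t}(a,h) \ge \hat a - \Delta_a \ge \hat a/2 > 0$ and $\Ht{t}(a,h)(\xbtrk{k}) \in I$, hence $|\sigma_2'(\cdot)| \ge K_{\sigma_2'}$, giving $(Q_t)_{kk} \ge \tfrac14\hat a^2 K_{\sigma_2'}^2\, \mut{0}(E_k)$. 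Since $\mut{0} = \rhoa \times \chi$, we have $\mut{0}(E_k) = \rhoa([\hat a,\infty)) \cdot \chi(\{h: h(\xbtrk{k}) \in I'\})$; the second factor equals $1$ when $\chi = \delta_{\zerob}$ (case $\alpha > \tfrac12$), and when $\chi = \mathcal{GP}(0,\Gfun)$ (case $\alpha = \tfrac12$) it is a Gaussian small-ball probability bounded below by a positive constant depending only on $I'$, $G_{\min}$ (which lower-bounds $\mathrm{Var}(h(\xbtrk{k})) = \Gfun(\xbtrk{k},\xbtrk{k})$) and $\Gfun_{\max}$, once $R$ is large. This yields $\min_k(Q_t)_{kk} \ge c\,\hat a^2$ with $c$ depending on $K_{\sigma_2'}$, $\rhoa([\hat a,\infty))$, $I$, $G_{\min}$, $\Gfun_{\max}$.

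Finally I would close the bootstrap: put $\rho := \tfrac{2}{n^2}\,c\,\hat a^2\,\lambmin(G)$ (of the stated form $r\hat a^2 \lambmin(G)$) and $T^\star := \sup\{T \ge 0 : \Lossbt{s} \le \Lossbt{0}\,e^{-\rho s}\ \forall s \in [0,T]\}$. On $[0,T^\star]$ the preceding steps give $\lambmin(\NTKmatWt{t}) \ge c\,\hat a^2\,\lambmin(G)$ and hence $\tfrac{d}{dt}\Lossbt{t} \le -\rho\,\Lossbt{t}$ via \eqref{eq:rate_lower_bdd}, so $\Lossbt{t} \le \Lossbt{0}\,e^{-\rho t}$ on $[0,T^\star]$ with strict slack, whence $T^\star = \infty$ by continuity of $t \mapsto \Lossbt{t}$. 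The main obstacle — and the place where the hypothesis $\rhoa([\hat a,\infty)) > 0$ and the freedom to choose $\hat a$ are used — is making this self-consistent: the requirements $\Delta_a \le \hat a/2$ and $\Delta_h < \mathrm{dist}(I',\partial I)$ must hold for the very $\rho$ that these choices produce, and since $\Delta_a, \Delta_h$ scale like $\sqrt{\Lossbt{0}}/\rho$ while $\rho \propto \hat a^2\,\rhoa([\hat a,\infty))\,\lambmin(G)$, this forces $\hat a$ to be picked inside the support of $\rhoa$ and large enough that $\hat a^3\,\rhoa([\hat a,\infty))$ exceeds an explicit threshold assembled from $\|\yb\|$, $n$, $\lambmin(G)$, $L_{\sigma_2}$, $M_{\sigma_2}$, $K_{\sigma_2'}$, $I$ and $G_{\min}$. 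Checking that a valid choice of $\hat a$, $R$ and $I'$ exists and bookkeeping all these constants is the bulk of the technical work.
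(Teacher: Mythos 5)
Your high-level plan is sound — the PL inequality \eqref{eq:rate_lower_bdd}, the quantitative Hadamard-product eigenvalue bound $\lambmin(Q_t\circ G)\ge\lambmin(G)\min_k(Q_t)_{kk}$, the restriction of the integral defining $(Q_t)_{kk}$ to a good set $E_k$ close to $\Xi_k$ in the paper's notation, and the need for a self-consistent choice of $\hat a$ satisfying the smallness constraints all mirror the paper's argument. (Indeed your explicit Schur-type eigenvalue lower bound is cleaner than the paper's main-text appeal to Oppenheim's determinant inequality \eqref{eq:oppenheim}, which by itself only yields positive definiteness, not a quantitative $\lambmin$ bound; the appendix version \eqref{eq:PL_in_lem} writes out the same quadratic-form inequality you supply.) Where you genuinely diverge from the paper is in how the uniform-in-time lower bound on $\min_k(Q_t)_{kk}$ is obtained.

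You close the loop with a bootstrap: assume the exponential rate holds on $[0,T^\star]$, deduce small displacements, hence a kernel lower bound, hence the rate, and argue $T^\star=\infty$. This is a legitimate strategy but the "with strict slack" step needs substance — Gr\"onwall on $[0,T^\star]$ gives only $\Lossbt{t}\le\Lossbt{0}e^{-\rho t}$, which reproduces the defining inequality of $T^\star$ with equality possible at the endpoint; to extend past $T^\star$ you must actually build in a margin (e.g.\ by arranging the displacement bounds to yield a strictly larger rate $\rho'>\rho$ on $[0,T^\star]$). The paper avoids the bootstrap entirely via a barrier-function argument (Lemma~\ref{lem:etat}): it tracks $\eta_t = \min_k\mut{0}(\Xi_k^\dag) - C\int_0^t(-\dot{\mathcal L}_s)^{1/2}\,ds$, pairs the PL inequality $-\dot{\mathcal L}_t\gtrsim\hat a^2\lambmin(G)\,\eta_t\,\mathcal L_t$ with the identity $-\dot\eta_t = C(-\dot{\mathcal L}_t)^{1/2}$, and integrates the resulting differential inequality $\tfrac{d}{dt}\big(\eta_t^{3/2}\big)\gtrsim\tfrac{d}{dt}\big(\mathcal L_t^{1/2}\big)$ to obtain a time-uniform lower bound on $\eta_t$ (and hence on $\min_k\mut{t}(\Xi_k)$) depending only on $\eta_0$ and $\mathcal L_0$. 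Relatedly, the paper's displacement control (Lemma~\ref{lem:ddtAH_int_upper}) is in $L^2(\mut{0})$ (followed by Markov) and bounds movement by $\omega_t=\int_0^t(-\dot{\mathcal L}_s)^{1/2}ds$, which is exactly what makes this chain-rule integration close; your pointwise bound via $\int_0^t\sqrt{\mathcal L_s}\,ds$ cannot be integrated without already knowing the decay rate, which is precisely why you are forced into the bootstrap. Both routes reach the same conclusion, and both require picking $\hat a$ so that a threshold involving $\hat a$, $\rhoa([\hat a,\infty))$, $\|\yb\|$, $\lambmin(G)$, $I$, $G_{\min}$, $K_{\sigma_2'}$, $L_{\sigma_2}$, $M_{\sigma_2}$ is met; the paper's version simply lands this condition directly rather than through a continuation argument.
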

\noindent This theorem is proved in Appendix~\ref{app:pf_gc}. Note that the condition on $\rhoa$ is satisfied if, for example, it is a uniform distribution on a wide-enough interval. While a non-asymptotic version of this result for the case $\alpha = 1 / 2$ and $\betaA=0$ has been given in \citet{chen2022on}, the analysis here provides asymptotic results for the broader settings and novel insights via the functional GF formulation.

\section{Complexity Measures and Function Spaces}
For $2$L NNs in the MF scaling, prior works have characterized the functions they represent via the Barron norm (a.k.a. variation norm or $\mathcal{R}$-norm) as a complexity measure of functions, which in turn defines the Barron space as the space of functions with finite Barron norms \citep{bach2017breaking, ma2018priori}. In this section, we will similarly introduce function spaces corresponding to functions learned by the MF training dynamics of P-$3$L NNs through new complexity measures of functions. We will see that when $\alpha > 1/2$, the function space can be viewed as a straightforward extension of the Barron space; whereas to incorporate the $\alpha = 1 / 2$ setting, we will define a novel complexity measure based on Wasserstein-type distances between distributions of functions.

For simplicity of presentation, we will concentrate here on the easier case where $\betaA = 0$ and leave the $\betaA > 0$ case to the Appendix.

\subsection{Barron norm generalized ($\alpha > 1/2$)}
\label{sec:space_alpha>1/2}
We consider the following type of function spaces as a generalization of the Barron space. Let $\Fbase$ be a vector space of real-valued functions on $\domX$ equipped with a norm $\| \cdot \|_{\Fbase}$ and $\sigma: \Rbb \to \Rbb$ be an activation function of choice. Then, for any function $f$ on $\domX$, we can define:
\begin{equation}
\label{eq:varn}
    \varn_{\Fbase, \sigma}(f) 
    =~ \inf_{\mu} \int_{\mathbb{R} \times \Fbase} |a| \| h \|_{\Fbase} \mu(da, dh)~,
\end{equation}
where the infimum is taken over all $\mu \in \Pcal(\Rbb \times \Fbase)$ such that $f(\xb) = \int_{\mathbb{R} \times \Fbase} a \sigma(h(\xb)) \mu(da, dh)$.
For $c \geq 0$, we then use $\FaUsigc{\Fbase}{\sigma}{c}$ to denote the space of all functions $f$ on $\domX$ such that $\varn_{\Fbase, \sigma}(f) \leq c$. We further define $\Fa_{\Fbase, \sigma} = \cup_{c > 0} \FaUsigc{\Fbase}{\sigma}{c}$.

\begin{example}
If we choose $\Fbase = \mathbb{R}^d$ (identified with the space of linear functions on $\mathbb{R}^d$), then $\Fa_{\mathbb{R}^d, \sigma}$ coincides with the Barron space associated with activation function $\sigma$ \citep{ma2022barron} and is equivalent to the ``$\mathcal{F}_1$'' space from \citet{bach2017breaking} when $\sigma$ is $1$-homogeneous. 
\end{example}
Meanwhile, choosing $\Fbase = \Hilb$ and $\sigma = \sigma_2$ defines a function space that contains the $\ft{t}$ from Lemma~\ref{lem:mf_exist_1} for the $\alpha > 1/2$ case:
\begin{proposition}
\label{prop:omega_t_bound}
    If $\alpha > 1/2$ and Assumptions~\ref{ass:sigma_diff} and \ref{ass:init} are satisfied (for Lemma~\ref{lem:mf_exist_1} to hold), then $f_t \in \Fa_{\Hilb, \sigma_2}$, $\forall t \geq 0$.
\end{proposition}
This result is a consequence of the 
the following lemma that bounds the evolution of the flow maps by the loss trajectory:
\begin{lemma}
\label{lem:AH_movement_bound} 
Under the conditions for Proposition~\ref{prop:omega_t_bound}, 
it holds that
\begin{equation}
\label{eq:AH_movement_bound_1}
    \int_{\Rbb \times \Con}  \left \| \Ht{t}(a, h) - h\right \|_{\Hilb} \mut{0}(da, dh) \leq \int_0^t \Big (-\frac{d}{ds} \mathcal{L}_s \Big )^{1/2} ds~.
\end{equation}
Moreover, when $\betaA = 0$ and
Assumptions~\ref{ass:sigma_diff} -- \ref{ass:sigma_bdd} hold, we also have 
\begin{equation}
\label{eq:AH_movement_bound_2}
    \sup_{(a, h) \in \supp(\mu_0)}\left \| \Ht{t}(a, h) - h\right \|_{\Hilb} \leq \sqrt{2} (\| \Gfun \|_{\infty})^{1/2} a_{\max} \mathtt{L}_{\sigma_2} \int_0^t (\mathcal{L}_s )^{1/2} ds~,
\end{equation}
where $\| \Gfun \|_{\infty} \coloneqq \max_{\xb \in \domX} |\Gfun(\xb, \xb)|$ and $a_{\max} \coloneqq \max_{a \in \supp(\rhoa)} |a|$.
\end{lemma}
The proof of Lemma~\ref{lem:AH_movement_bound} with an extension of \eqref{eq:AH_movement_bound_2} to the $\betaA > 0$ case is given in Appendix~\ref{app:pf_lem_AH_movement_bound}.

\subsubsection{Rademacher complexity} When $\sigma$ is $1$-homogeneous (e.g. ReLU or linear), we can control the Rademacher complexity of $\FaUsigc{\Fbase}{\sigma}{c}$ by that of the unit ball in $\Fbase$ via the following lemma, which is proved in Appendix~\ref{app:pf_rad_1}:
\begin{lemma}
\label{lem:rad_1}
If $\sigma$ is $1$-homogeneous and $\mathtt{L}_{\sigma}$-Lipschitz, then
$\Rad_n(\FaUsigc{\Fbase}{\sigma}{c}) \leq c \mathtt{L}_{\sigma} \Rad_n(\ball{\Fbase}{1})$.
\end{lemma}
Hence, via standard Rademacher complexity bounds of RKHS (e.g., \citealt[Theorem 6.12]{mohri2018foundations}), we obtain the following as a corollary:
\begin{corollary}
\label{cor:rad_>1/2_hilb}
If $\sigma$ is $1$-homogeneous and $\mathtt{L}_{\sigma}$-Lipschitz, then 
\begin{equation}
    \Rad_n(\FaUsigc{\rkhsh}{\sigma}{c}) \leq c \mathtt{L}_{\sigma} (\| \Gfun \|_{\infty})^{1/2} / {\sqrt{n}}~.
\end{equation}
\end{corollary}

\subsection{Complexity Measure via Transport Distance in Function Space ($\alpha \geq 1/2$)}
\label{sec:1/2_gen}
While the complexity measure \eqref{eq:varn} is suitable for characterizing the functions obtained by the MF training dynamics when $\alpha > 1/2$, it falls short in the case of $\alpha = 1 / 2$: there is no guarantee that $\varn_{\rkhsh, \sigma_2}(\ft{t}) < \infty$ since the $\mut{t}$ is no longer supported within $\rkhsh$ even at $t = 0$.\footnote{Although by choosing $\Fbase$ to be $\Con$ with a suitable norm, we could easily show that $\varn_{\Con, \sigma}(\ft{t}) < \infty$ at finite $t \geq 0$, it will be difficult to derive Rademacher complexity bounds since $\Con$ is too large to avoid the ``curse of dimensionality''. } We need an alternative complexity measure that is more tailored to the dynamics. 

We recall from \eqref{eq:HtfromLambt_Con} that for $(a, h)$ in the support of $\mut{0}$, even though neither $h$ nor $\Htah{t}$ necessarily belongs to $\rkhsh$, their difference, $(\Htah{t} - h)$, always does. In other words, $\mut{t}$ is obtained as the push-forward of $\mut{0}$ via a flow map whose \emph{displacement} is everywhere bounded in $\rkhsh$. Therefore, we can let our space include all functions representable as $\fmu{\mu}$ for which $\mu$ is within a certain distance from $\mut{0}$, where this distance is measured by an optimal-transport-type metric between distribution of functions, as we will introduce below.

We start from a general setup where $\Fbase$ and $\mathcal{V}$ are two Banach spaces with norms $\| \cdot \|_{\Fbase}$ and $\| \cdot \|_{\mathcal{V}}$ such that $\Fbase \subseteq \mathcal{V}$, and we define an optimal-transport-type extended metric between probability measures on $\mathbb{R} \times \mathcal{V}$ as follows\footnote{The definition that follows is tailored specifically to the simpler case of $\betaA = 0$; for the case where $\betaA > 0$, the more general definition is given in Appendix~\ref{app:pf_rad_1/2}.}. Let $\mu, \mu'$ be two probability measures on $\mathbb{R} \times \mathcal{V}$, and let $\joint(\mu, \mu')$ denote the space of probability measures on $\mathbb{R} \times \mathcal{V} \times \mathcal{V}$ that satisfy $\int_{\mathcal{V}} \pi(\cdot, \cdot, dh') = \mu$ and $\int_{\mathcal{V}} \pi(\cdot, dh, \cdot) = \mu'$.
Then, inspired by the Wasserstein metrics\footnote{Wasserstein metrics are parameterized by an exponent $p \in [1, +\infty]$, and the definition \eqref{eq:Winfty} corresponds to the case $p = \infty$. An analogous definition for $p \in [1, \infty)$ is given in Appendix~\ref{app:Wp}.} between probability measures on metric spaces, 
we define
\begin{equation}
\label{eq:Winfty}
    \mathcal{W}_{\infty}(\mu, \mu'; \Fbase, \mathcal{V}) \coloneqq \inf_{\pi \in \joint(\mu, \mu')} \hspace{10pt} \text{ess} \hspace{-19pt} \sup_{\pi(da, dh, dh') \hspace{15pt}} \| h - h' \|_{\Fbase}~.
\end{equation}
Note that since the right-hand-side may not be finite, this is an \emph{extended} metric on $\Pcal(\Rbb \times \mathcal{V})$.

Let us now focus on the case where $\mathcal{V} = \Con$. Specifically, let $\mubase$ be any \emph{base} probability measure on $\mathbb{R} \times \Con$. Then, for any function $f$ on $\domX$, we define:
\begin{equation}
\label{eq:varn_dag}
    \varninfty{\Fbase}{\sigma}{\mubase}(f) :=~ \inf_{\mu} \mathcal{W}_{\infty}(\mu, \mubase; \Fbase, \Con)~,
\end{equation}
where the infimum is taken over all $\mu \in \Pcal(\Rbb \times \Con)$ such that $f(\xb) = \int_{\mathbb{R} \times \Con} a \sigma(h(\xb)) \mu(da, dh)$. 
As in the $\alpha > 1/2$ case, for any $c \geq 0$, we use $\FbUsigc{\Fbase}{\sigma}{\mubase}{c}$ to denote the space of all functions $f$ on $\domX$ such that $\varninfty{\Fbase}{\sigma}{\mubase}(f) \leq c$, and we further define $\FbUsig{\Fbase}{\sigma}{\mubase} = \cup_{c > 0} \FbUsigc{\Fbase}{\sigma}{\mubase}{c}$.

\begin{remark}
\label{rmk:scaling}
    A concurrent work by \citet{neumayer2024effect} also proposes an optimal-transport based complexity measure for functions represented by infinite-width $2$L NNs, which is similar to \eqref{eq:varn_dag} (and the generalized version defined in Appendix~\ref{app:pf_rad_1/2}) when we choose $\Fbase$ as the space of linear functions on $\Rbb^d$. In comparison, by allowing more general choices of $\Fbase$, our definition is relevant to more general models including \ptl NNs in the MF limit. 
    
     We refer interested readers to Section 2 of \citet{neumayer2024effect} for a discussion on further theoretical properties of the version defined therein. We focus below on relating our complexity measure to the MF training dynamics and
    deriving Rademacher complexity bounds on the corresponding function space.
\end{remark}
Then, setting $\Fbase = \rkhsh$, $\sigma = \sigma_2$ and $\mubase = \mut{0}$ allows us to define appropriate spaces for the functions $\ft{t}$ obtained by the MF training dynamics when $\alpha \geq 1 / 2$ (note that $\mut{0}$ is different in the two cases of $\alpha > 1 / 2$ and $\alpha =  1 / 2$). In particular, \eqref{eq:AH_movement_bound_2} implies that for any $t \geq 0$, $\ft{t} \in \FbUsig{\rkhsh}{\sigma_2}{\mut{0}}$ with $\varninfty{\rkhsh}{\sigma_2}{\mut{0}}(f_t) \leq \sqrt{2} (\| \Gfun \|_{\infty})^{1/2} a_{\max} \mathtt{L}_{\sigma_2} \int_0^t (\mathcal{L}_s )^{1/2} ds$. We see that the dependence of the right-hand-side of the bound depends on the training set and training time only through the integral $\int_0^t (\mathcal{L}_s )^{1/2} ds$, which is controlled by the decay rate of the training loss. In particular, if the conditions of Theorem~\ref{prop:gc} are satisfied, we have $\int_0^{\infty} (\mathcal{L}_s )^{1/2} ds \leq 2 (\mathcal{L}_0)^{1/2} / (r \hat{a} \lambmin(G))$ (with the same $r$ and $\hat{a}$ as defined therein), which yields a finite bound for all time that depends on the the training set through $1 / \lambmin(G)$. Formally, this leads to the following result:
\begin{corollary}
    Suppose that Assumptions~\ref{ass:sigma_diff} -- \ref{ass:itvl} are satisfied. If $a_{\max} \geq \hat{a}$, then it holds for all $t \geq 0$ that
    \begin{equation}
        \varninfty{\rkhsh}{\sigma_2}{\mut{0}}(f_t) \leq \frac{2 \sqrt{2} (\| \Gfun \|_{\infty})^{1/2} a_{\max} \mathtt{L}_{\sigma_2}}{r \hat{a}^2 \lambmin(G)}~,
    \end{equation}
    where $r$ and $\hat{a}$ have the same definition as in Theorem~\ref{prop:gc}.
\end{corollary}

\subsubsection{Rademacher complexity} The Rademacher complexity of $\FbUsigc{\Fbase}{\sigma}{\mubase}{c}$ can still be controlled by that of the unit ball of $\Fbase$, in fact \emph{without} homogeneity assumptions on $\sigma$ (unlike Lemma~\ref{lem:rad_1}):
\begin{lemma}
\label{lem:rad_1/2}
If $\sigma$ is $\mathtt{L}_{\sigma}$-Lipschitz, then $\forall c > 0$,
\begin{equation}
    \Rad_n(\FbUsigc{\Fbase}{\sigma}{\mubase}{c}) \leq \mathtt{L}_{\sigma} \bigg ( \int_{\mathbb{R} \times \Con} |a| \mubase(da, dh) \bigg ) \Rad_n(\ball{\Fbase}{c})~.
\end{equation}
\end{lemma}
\noindent This lemma is proved in Appendix~\ref{app:pf_rad_1/2}. As a corollary of the Rademacher complexity bounds of RKHS, we therefore derive that:
\begin{corollary}
\label{cor:rad_1/2_hilb}
Under Assumptions~\ref{ass:sigma_diff} and \ref{ass:init}, it holds for all $c > 0$ that
\begin{equation}
    \Rad_n(\FbUsigc{\rkhsh}{\sigma_2}{\mut{0}}{c}) \leq  c \mathtt{L}_{\sigma_2} a_{\max} (\| \Gfun \|_{\infty})^{1/2} / {\sqrt{n}}~.
\end{equation}
\end{corollary}

\section{Numerical experiments}
\label{sec:exp}
We present numerical experiments to complement our theoretical analysis above on \ptl NNs and provide empirical evidence for their large-width limit, the connection with the $n$-dimensional $2$L NNs, the impact of the choice of $\alpha$ as well as the comparison with related NN models (NTK, $2$L NN and fully-trained $3$L NN).

\subsection{Tasks} We consider two synthetic data distributions on $\Rbb^2$ with binary labels and train our models in an $L_2$ regression setting.
Task I is introduced by \citet{chizat2020implicit} for comparing kernel versus feature learning regimes in $2$L NNs. Task II has a data distribution supported on three concentric circles where the labels depend alternatingly on the radius. This task is inspired by prior theoretical results on the advantage of deeper NNs compared to $2$L NNs in approximating and learning radial functions \citep{eldan2016power, safran2022optimization}. We choose $n=18$ and $100$ as the sizes of the training set in the two settings, respectively.

\subsection{Models} We choose three variants of the \ptl NN model: $\textbf{\ptbl ($\boldsymbol{\alpha = 1}$)}$,  $\textbf{\ptbl ($\boldsymbol{\alpha = 1/2}$)}$ and $\textbf{\ptbl (NTK)}$. The first two are defined by \eqref{eq:p3l} with their respective choices of $\alpha$, while the third is a $3$L NN under the NTK parameterization with the input-layer weights untrained.
All three models have the same width in the two hidden layers ($m_1 = m_2 = m$) with various choices of $m$. For comparisons, we also include $2$L NNs ($\textbf{2L}$) and fully-trained $3$L NNs ($\textbf{3L}$) with the same widths. In Figure~\ref{fig:xavier} in Appendix~\ref{app:exp}, we additionally compare \ptl NN with $\alpha = 1 / 2$ versus under Xavier scaling in the case where $\sigma_2$ is ReLU.

To validate the connections between MF \ptl NN and the $n$-dimensional MF $2$L NN (i.e., $g_t(\Xtil(\xb))$) established in Section~\ref{sec:4}, we also consider finite-width realizations of the latter, i.e., $2$L NNs on $\Rbb^n$ trained to fit the same training set under a transformation: $\{(\xtildetrk{k}, \ytrk{k}) \}_{k \in [n]}$. We include two versions, \textbf{dim-$\boldsymbol{n}$ $\boldsymbol{2}$L ($\mathcal{N}$-init)} and \textbf{dim-$\boldsymbol{n}$ $\boldsymbol{2}$L ($\boldsymbol{0}$-init)}, with $\muhidt{0} = \rhoa \times \mathcal{N}(0, \text{Id}_n)$ (corresponding to $\alpha = 1 / 2$) and $\muhidt{0} = \rhoa \times (\delta_0)^n$ (corresponding to $\alpha \geq 1$),
respectively.

We choose $\sigma_1$ as ReLU so that Assumption~\ref{ass:relu_anp} is satisfied and the kernel function $\Gfun$ can be computed analytically.
We choose $\sigma_2$ primarily as tanh (which satisfies Assumptions~\ref{ass:sigma_diff}, \ref{ass:sigma_bdd} and \ref{ass:itvl}) while also including the case where $\sigma_2$ is ReLU for Task II. The bias term in the last hidden layer is included and initialized to be zero, and we set $\betaA = 0$ and $\betaB = 0.5$. All models are trained with full-batch GD. We choose a step size of $0.05$ for the \ptl models and dim-$n$ $2$L models and adjust it for other models when needed to ensure training stability. For each pair of task and model, the experiment is run three times with different random seeds for parameter initialization (held identical across all models). The error curves are averaged over the three runs while the other visualizations are based on the first run.
\subsection{Results}
\label{sec:results}
\begin{figure}[!h]
    \centering
    \includegraphics[scale=0.305]{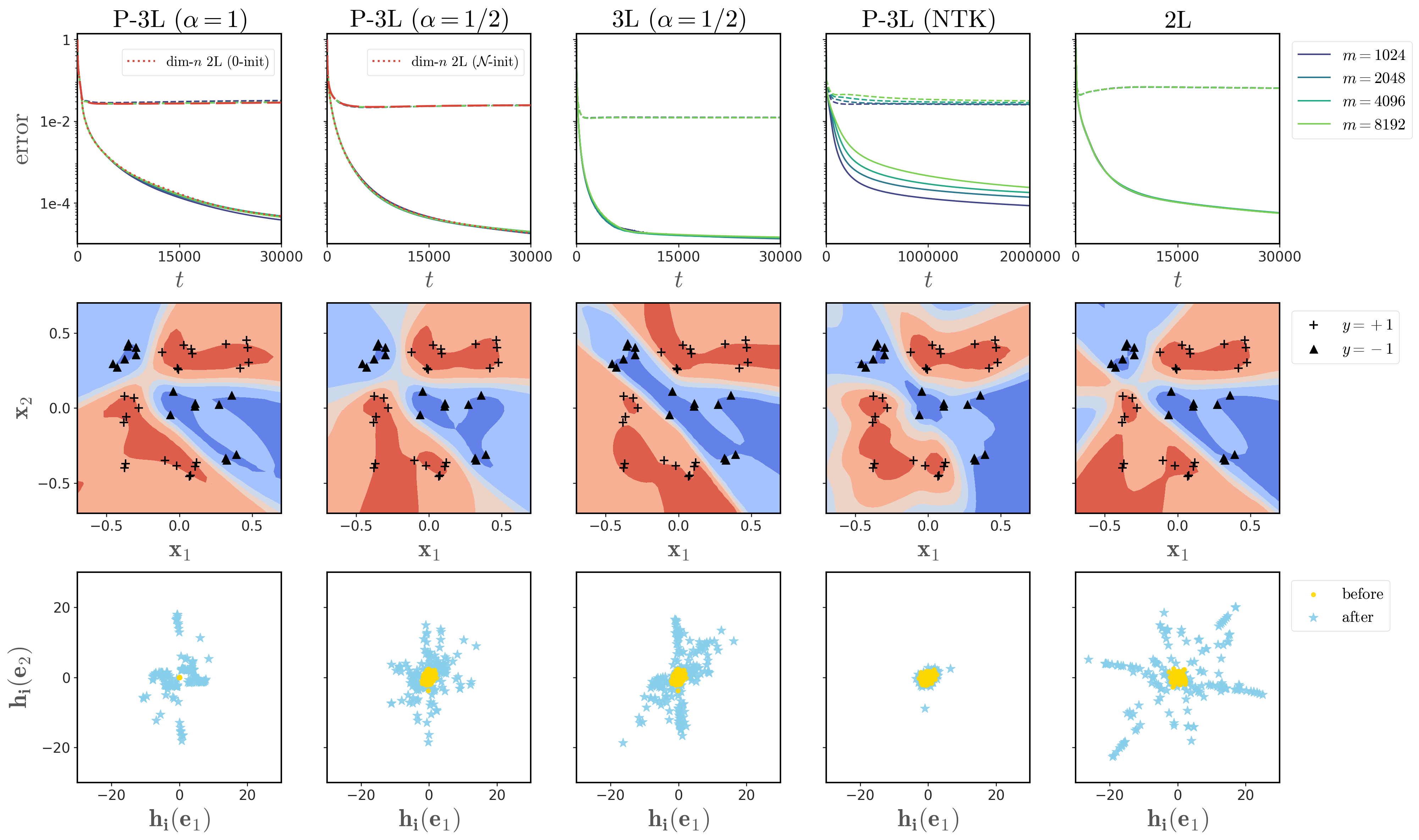}
    \caption{Numerical results on Task I. \textbf{Row $1$}: Curves of training (solid) and testing (dashed) errors for different choices of $m$. In the first two columns, the red curves are the training and testing errors of the respective dim-$n$ $2$L NNs with $m = 8192$. \textbf{Row $2$}: Contour plots of the output function after training with $m = 8192$. \textbf{Row $3$}: Pre-activation values of neurons in the (last) hidden layer evaluated on the two unit vectors in $\Rbb^2$ with $m = 8192$, before (yellow) and after (blue) training.}
    \label{fig:chizat2D}
\end{figure}
\begin{figure}[!h]
    \centering
    \includegraphics[scale=0.305]{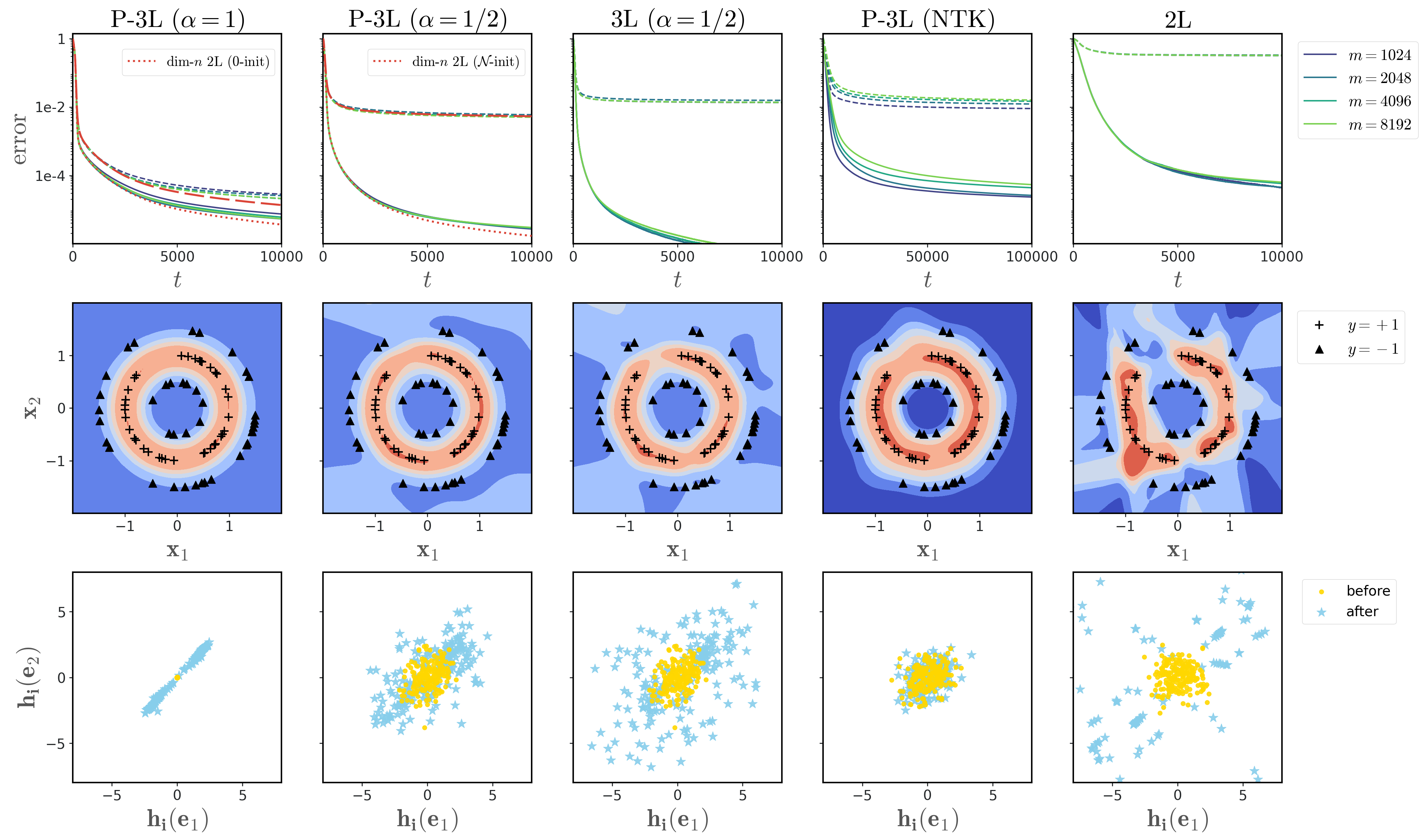}
    \caption{Numerical results of on Task II with $\sigma_2$ chosen to be tanh. The plots are defined in the same way as in Figure~\ref{fig:chizat2D}.}
    \label{fig:radial_relu+tanh}
\end{figure}
\begin{figure}[!h]
    \centering
    \includegraphics[scale=0.305]{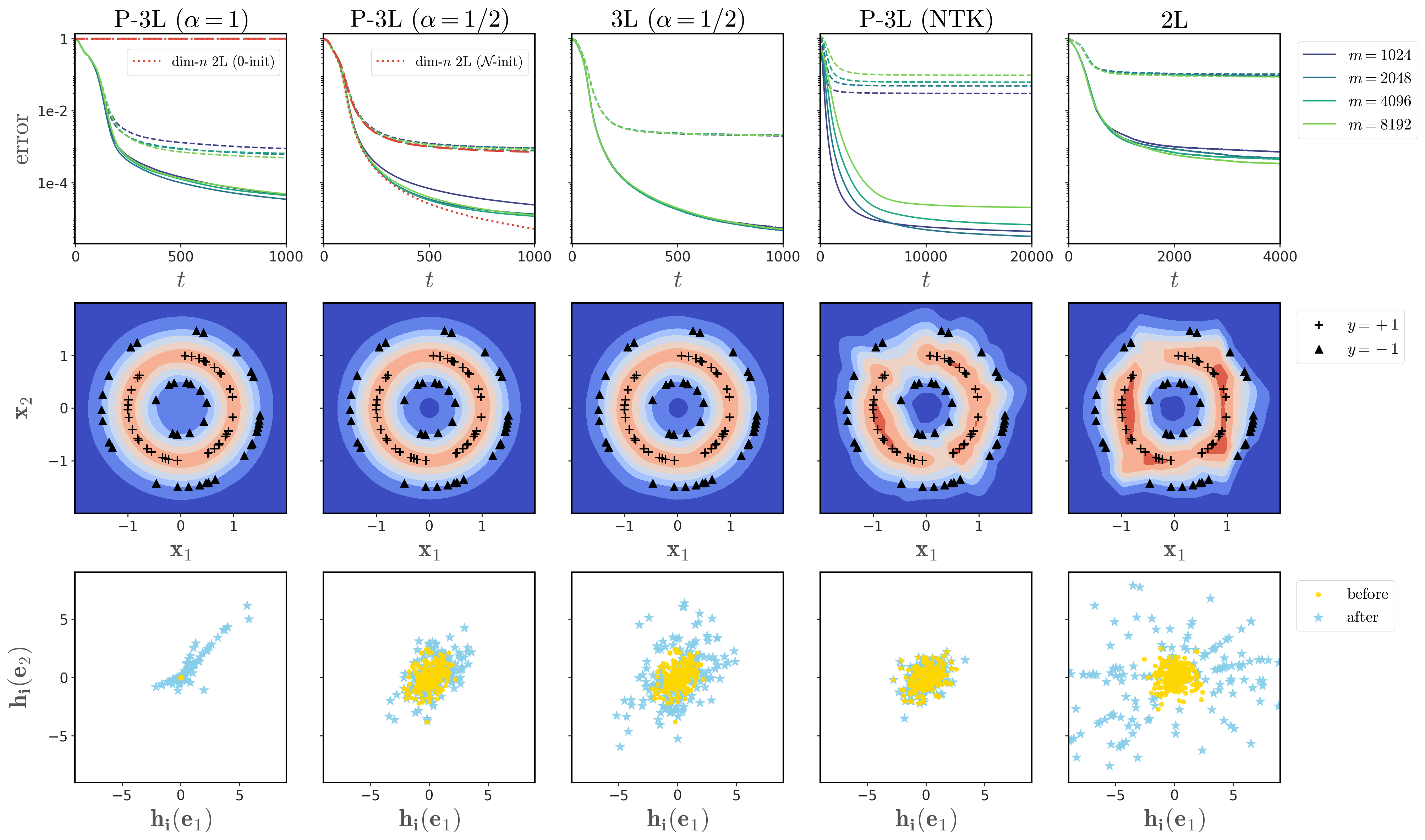}
    \caption{Numerical results of various models on Task II, where we choose $\sigma_2$ to be ReLU. The plots have the same setting as in Figure~\ref{fig:chizat2D}.}
    \label{fig:radial_relu+relu}
\end{figure}
Figures~\ref{fig:chizat2D} and \ref{fig:radial_relu+tanh} show the empirical results on the two tasks when $\sigma_2$ is tanh, and Figure \ref{fig:radial_relu+relu} show the result on Task II when $\sigma_2$ is ReLU.

\paragraph{Large-width asymptotics.} When $\sigma_2$ is tanh (hence satisfying Assumptions~\ref{ass:sigma_diff} and \ref{ass:sigma_bdd}), our theory predicts the existence of an infinite-width MF limit for \ptl NNs with $\alpha = 1$ and $1 / 2$. This is consistent with the first row of Figures~\ref{fig:chizat2D} and \ref{fig:radial_relu+tanh}, where loss curves of both training and testing are nearly uniform across different choices of $m$. In particular, the training curves are close to that of the corresponding $n$-dimensional $2$L NNs, which is consistent with our theoretical result that the two types of models coincide in their infinite-width limits on the training set. We note, though, that the MF theory concerns the ``finite $t$, $m \to \infty$'' limit, whereas if we fix $m$, the discrepancy can increase as $t$ becomes large. 

Meanwhile, when we choose $\sigma_2$ as ReLU, which does not satisfy the regularity assumptions for Theorem~\ref{thm:mflim}, Figure~\ref{fig:radial_relu+relu} shows that 
\textbf{\ptbl ($\boldsymbol{\alpha = 1}$)} no longer shares the same infinite-width limit as that of \textbf{dim-$\boldsymbol{n}$ $\boldsymbol{2}$L ($\boldsymbol{0}$-init)}. In the latter, all neurons in the second hidden layer represent the zero function (i.e., $\muhidt{0}$ is a singular measure at the zero function). Since ReLU is not differentiable at $0$ (and we typically choose the zero subgradient in back-propagation), $\muhidt{t}$ will not evolve at all during training. By contrast, with random initialization breaking the symmetry, a finite-width \ptl NN with $\alpha = 1$ does not suffer from the same lack of gradient signals. We illustrate how this key difference manifests during the early dynamics in Figure~\ref{fig:early}. It shows an example of the infinite-width limit breaking down when the differentiability assumption is not satisfied.

Further comparisons between \ptl NNs and their corresponding $n$-dimensional $2$L NNs in terms of learned functions and pre-activation value distributions are given in Figures~\ref{fig:chizat_tanh_dim-n} -- \ref{fig:radial_relu_dim-n}.

\paragraph{Comparison with NTK parameterization} As expected from prior analyses on lazy learning \citep{chizat2019lazy}, under the NTK parameterization, the second-hidden-layer neurons barely move throughout training in terms of the pre-activation values. This results in qualitative differences in the learned functions as well as higher test errors on Task II. A theoretical comparison between the NTK and our scaling choices for \ptl NNs is beyond the scope of this work, though we refer the interested readers to \citet{wei2019regularization} for an insightful analysis in the context of $2$L NNs.

\paragraph{Comparison with $2$L NN} From Figures~\ref{fig:radial_relu+tanh} and \ref{fig:radial_relu+relu}, we see lower training and test errors achieved on Task II by both the \ptl and the $3$L NNs compared to $2$L NNs, which corroborates the theoretical results on the advantage of three- versus two-layer NNs in terms of both approximating and learning radial functions \citep{eldan2016power, safran2022optimization}.

\paragraph{Training of input layer.} On both tasks, both $\textbf{\ptbl ($\boldsymbol{\alpha = 1/2}$)}$ and $\textbf{$3$L ($\boldsymbol{\alpha = 1/2}$)}$ achieve training losses well below $10^{-4}$, though the latter has a faster decay of training loss with the training of the input-layer weights. Visually, in both models, the second-hidden-layer neurons exhibit significant movements in their pre-activation values through training. The output functions that they learn can be slightly different (e.g., see second row of Figure~\ref{fig:chizat2D}). On Task II (Figures~\ref{fig:radial_relu+tanh} and \ref{fig:radial_relu+relu}), it is worth noting that the \ptl NNs achieve even lower test errors than the $3$L NNs. Interestingly, the $3$L NN example constructed by \citet[Theorem 4.3]{safran2022optimization} which learns the ball indicator function efficiently under GD also has the first-layer weights random and fixed. This suggests that three-layer NNs can exhibit a benefit of depth even when the input-layer weights are \emph{not} trained.

\section{Conclusions and Limitations}
In this work, we defined the infinite-width limit of \ptl NN by rigorously developing a functional-space MF theory. Through this framework, we proved a linear-rate convergence guarantee of the empirical loss for the limiting model. We then characterized the functional spaces explored by the MF dynamics via novel complexity measures based on optimal-transport-type distances between distributions of functions and bounded their Rademacher complexity. 
Our analysis covers two different regimes of scaling the model output by its width ($\alpha > 1/2$ and $\alpha = 1 / 2$), which result in different behaviors through training despite both exhibiting feature learning.

Our theory is still limited in several ways. First, by only focusing on the unregularized setting, we do not have a priori generalization bounds derived. Second, a comparison of the new function spaces with the ones associated with $2$L NNs is still lacking.
Third, the theoretical result on the MF limit needs boundedness and smoothness assumptions on the activation function of the second hidden layer, which is relatively standard in the literature but excludes e.g. the ReLU function. Lastly, the \ptl NN model assumes that the parameters in the first layer are fixed, which is not often seen in practice. 
Despite these shortcomings, the framework developed in this work is a helpful stepping stone for further advances. In particular, we refer the readers to a follow-up work that extends the idea of a functional-space MF theory to cover more general multi-layer NNs where all layers are trainable \citep{chen2024nhl}.

\acks{The authors thank Carles Domingo-Enrich and anonymous reviewers for feedback on the manuscript, and acknowledge support from the Henry McCracken Fellowship, the Isaac Barkey and Ernesto Yhap Fellowship, NSF RI-1816753, NSF CAREER CIF 1845360, NSF CHS-1901091, NSF Scale MoDL DMS 2134216, Capital One and Samsung
Electronics.}

\newpage

\appendix
\section{$\alpha = 1 / 2$ is asymptotically equivalent to Xavier initialization}
\label{app:xavier}
Consider a three-layer NN (with omitted bias terms and $m_1 = m_2 = m$) defined by
\begin{align}
    f(\boldsymbol{x}) =& \sum_{i=1}^{m} \theta^{(a)}_i \sigma_2 \big ( {h}_i(\boldsymbol{x}) \big )~, \\
     \forall i \in [m] \quad : \quad {h}_i(\boldsymbol{x}) =& \sum_{j=1}^{m} \theta^{(W)}_{i, j} \sigma_1 \Big (\sum_{k=1}^d \theta^{(z)}_{j, k} x_k \Big )~,
\end{align}
with weight parameters $\big \{ \theta^{(z)}_{j, k} \big \}_{j \in [m], k \in [d]}$, $\big \{ \theta^{(W)}_{i, j} \big \}_{i, j \in [m]}$ and $\big \{ \theta^{(a)}_i \big \}_{i \in [m]}$ initialized according to Xavier-normal initialization \citep{glorot2010difficulty}, meaning that we sample each $\theta^{(W)}_{i, j}$ i.i.d. from $\mathcal{N}(0, \frac{2}{m + m}) = \mathcal{N}(0, \frac{1}{m})$, each $\theta^{(z)}_{j, k}$ i.i.d. from $\mathcal{N}(0, \frac{2}{m+d})$, and each $\theta^{(a)}_i$ i.i.d. from $\mathcal{N}(0, \frac{2}{m+1})$ at $t=0$. If $m \to \infty$ while $d$ remains fixed, the latter two distributions become approximately $\mathcal{N}(0, \frac{2}{m})$. Thus, by redefining $a_i = \sqrt{m} \theta^{(a)}_i$, $W_{i, j} = \sqrt{m} \theta^{(W)}_{i, j}$ and $z_{j, k} = \sqrt{m} \theta^{(z)}_{j, k}$, we can write
\begin{align}
    f(\boldsymbol{x}) =& \frac{1}{\sqrt{m}} \sum_{i=1}^{m} a_i \sigma_2 \big ( h_i(\boldsymbol{x}) \big )~, \\
     \forall i \in [m] \quad : \quad h_i(\boldsymbol{x}) 
    =& \frac{1}{\sqrt{m}} \sum_{j=1}^{m} W_{i, j} \sigma_1 \Big ( \frac{1}{\sqrt{m}} z_{j}^{\intercal} \boldsymbol{x} \Big ) ~,
\end{align}
and where $a_i, W_{i, j}$ and $z_{j, k}$ are all initialized from normal distributions with variance $O(1)$ as $m \to \infty$. If $\sigma_1$ and $\sigma_2$ are homogeneous (e.g., ReLU or leaky ReLU), the $\frac{1}{\sqrt{m}}$ factors and the activation functions commute, and hence this is equivalent to the definition in \eqref{eq:p3l} under the choice of $\alpha = 1/2$ and $m_1 = m_2 = m$ at initialization.

Furthermore, the equivalence continues to hold into the GD training of the \ptl NN under the learning-rate rescaling of \eqref{eq:dotati} and \eqref{eq:dotWti}. To see this, note that $\frac{\partial f(\xb)}{\partial a_i} = \frac{1}{\sqrt{m}} \frac{\partial f(\xb)}{\partial \theta^{(a)}_{i}}$ and $\frac{\partial f(\boldsymbol{x})}{\partial W_{i, j}} = \frac{1}{\sqrt{m}} \frac{\partial f(\boldsymbol{x})}{\partial \theta^{(W)}_{i, j}}$. Then, since performing GD on $\theta^{(a)}_{i}$ and $\theta^{(W)}_{i, j}$ with step size $\delta$ means updating them according to
\begin{equation}
\begin{split}
    \theta^{(a)}_{i} \leftarrow \theta^{(a)}_{i} - \delta \frac{\partial L}{\partial \theta^{(a)}_{i}}~, \\
    \theta^{(W)}_{i, j} \leftarrow \theta^{(W)}_{i, j} - \delta \frac{\partial L}{\partial \theta^{(W)}_{i, j}}~,
\end{split}
\end{equation}
this is equivalent to updating $W_{i, j}$ according to
\begin{equation}
\begin{split}
    a_{i} \leftarrow & \sqrt{m} \Big ( \theta^{(a)}_{i} - \delta \frac{\partial L}{\partial \theta^{(a)}_{i}} \Big ) = a_{i} - m \delta \frac{\partial L}{\partial a_{i}}~, \\
    W_{i, j} \leftarrow & \sqrt{m} \Big ( \theta^{(W)}_{i, j} - \delta \frac{\partial L}{\partial \theta^{(W)}_{i, j}} \Big ) = W_{i, j} - m \delta \frac{\partial L}{\partial W_{i, j}}~,
\end{split}
\end{equation}
which is equivalent to \eqref{eq:dotati} and \eqref{eq:dotWti} when $\alpha = 1 / 2$, $m_1 = m_2 = m$ and $\beta_a = 1$.

For numerical evidence of this asymptotic equivalence, see Figures~\ref{fig:xavier} and \ref{fig:xavier_vs_1/2}.

\section{Proof that $G$ is positive semi-definite}
\label{app:pf_Gpsd}
It is obvious to see that $G$ is a symmetric function in its two arguments. To show that it satisfies the positive semi-definite condition, consider any $\xbg{1}, ..., \xbg{k} \in \domX$ and $c_1, ..., c_k \in \Rbb$. It holds that
\begin{equation}
\begin{split}
    \summ{i, j}{k}{c_i c_j G(\xbg{i}, \xbg{j})} =&~ \summ{i, j}{k}{c_i c_j \int_{\Rbb^d} \sigma_1(\zb^{\intercal} \cdot \xbg{i}) \sigma_1(\zb^{\intercal} \cdot \xbg{j}) \rhoz(d\zb)} \\
    =&~ \int_{\Rbb^d} \summ{i, j}{k}{c_i c_j \sigma_1(\zb^{\intercal} \cdot \xbg{i}) \sigma_1(\zb^{\intercal} \cdot \xbg{j})} \rhoz(d\zb) \\
    =&~ \int_{\Rbb^d} \bigg ( \summ{i}{k} c_i \sigma_1(\zb^{\intercal} \cdot \xbg{i}) \bigg )^2 \rhoz(d\zb) \geq 0
\end{split}
\end{equation}

\section{Proof of Lemma~\ref{lem:lln_0_1}}
\label{app:pf_lln_0_1}
Let $\{ \xbg{1}, ..., \xbg{\ng} \}$ be any finite subset of $\mathcal{X}$. We write $\Gg = \Gfun[\xbg{1}, ..., \xbg{\ng}]$, $\Ggmb = \Gfunmb[\xbg{1}, ..., \xbg{\ng}]$, $\evte = \ev_{\xbg{1}, ..., \xbg{\ng}}$ and $\hatevte = \hatev_{\xbg{1}, ..., \xbg{\ng}}$.

Recall that when $\alpha > \frac{1}{2}$, $\hatevte(\mut{0}) = \rhoa \times \delta_{\zerob}$.
By the triangle inequality of $1$-Wasserstein distance, there is
\begin{equation}
\label{eq:triangle_1}
\begin{split}
    \mathcal{W}_1(\hatevte(\mumbt{0}), \hatevte(\mut{0})) =&~ \mathcal{W}_1(\hatevte(\mumbt{0}), \rhoa \times \delta_{\zerob}) \\
    \leq &~ \mathcal{W}_1(\rhoa \times \mathcal{N}(0, m_1^{1-2\alpha}\Ggmb), \rhoa \times \delta_{\zerob}) \\
    & ~ + \mathcal{W}_1(\hatevte(\mumbt{0}), \rhoa \times \mathcal{N}(0, m_1^{1-2\alpha}\Ggmb))~.
\end{split}
\end{equation}
First, we examine the first term on the right-hand side.  By the property of Wasserstein distances on product measures (e.g. \citealt[Lemma 3]{mariucci2018wasserstein}), we have
\begin{equation}
\label{eq:first_term_alpha>1}
    \begin{split}
        \mathcal{W}_1(\rhoa \times \mathcal{N}(0, m_1^{1-2\alpha}\Ggmb), \rhoa \times \delta_{\zerob})
        \leq & ~ \mathcal{W}_1(\rhoa, \rhoa) + \mathcal{W}_1(\mathcal{N}(0, m_1^{1-2\alpha}\Ggmb), \delta_{\zerob})) \\
        \leq & ~ \mathcal{W}_1(\mathcal{N}(0, m_1^{1-2\alpha}\Ggmb), \delta_{\zerob})) \\
        \leq & ~ \left ( \EE_{\Zb \in \mathcal{N}(0, \Ggmb)} \left [ \left \| m_1^{1/2-\alpha}\Zb \right \|_2^2 \right ]  \right )^{\frac{1}{2}} \\
    \leq &~ \frac{(\mathrm{Tr}(\Ggmb))^{1/2}}{m_1^{\alpha - 1/2}} \leq \frac{(n' \Ggmbmax)^{1/2} }{m_1^{\alpha-1/2}}~.
    \end{split}
\end{equation}
\noindent For the second term, we see that, when conditioned on $\zb_1, ..., \zb_{m_1}$, $\{ [a^0, h_i^0(\xbg{1}), ..., h_i^0(\xbg{\ng})] \}_{i \in [m_2]}$ is distributed i.i.d. across $i \in [m_2]$ according to $\rhoa \times \mathcal{N}(0, m_1^{1-2\alpha} \Ggmb)$. Hence, when conditioned on $\Ggmb$ (which is measurable with respect to $\zb_1, ..., \zb_{m_1}$), $\hatevte(\mumbt{0})$ has the same distribution as the empirical measure of $m_2$ i.i.d. samples from $\rho^0_a \times \mathcal{N}(0, m_1^{1-2\alpha} \Ggmb)$, which we denote by $\nu_{(m_2)} \in \mathcal{P}(\mathbb{R} \times \mathbb{R}^{n'})$. Therefore, by conditioning on $\Ggmb$, we can leverage concentration inequalities in Wasserstein distance of empirical measures of i.i.d. samples:
\begin{lemma}[Adapted from \citet{fournier2015rate}, Theorem 2]
\label{lem:wconcen}
Given a probability measure $\nu \in \mathcal{P}(\mathbb{R}^d)$, let $\nu_{(m)}$ be the empirical measure of $m$ i.i.d. samples from $\nu$. If $\exists \iota > 1, \exists \gamma > 0$ such that 
\begin{equation}
    \mathcal{E}_{\iota, \gamma}(\nu) := \int_{\mathbb{R}^d} e^{\gamma |\xb|^{\iota}} \nu(d\xb) < \infty~,
\end{equation}
then $\forall m > 1, \forall u > 0$, 
\begin{equation}
    \mathbb{P}(\mathcal{W}_1(\nu_{(m)}, \nu) \geq u) \leq \begin{cases}
    C_1 e^{-C_2 m (u / \log(2 + 1/u))^2} \mathds{1}_{u \leq 1} + C_1 e^{-C_2 m u^{\iota}} \mathds{1}_{u > 1}~,~ & \text{ if } $d = 2$ \\
    C_1 e^{-C_2 m u^d} \mathds{1}_{u \leq 1} + C_1 e^{-C_2 m u^{\iota}} \mathds{1}_{u > 1}~,~ & \text{ if } $d > 2$~,
    \end{cases}
\end{equation}
where $C_1$ and $C_2$ depend only on $d, \iota, \gamma$ and $\mathcal{E}_{\iota, \gamma}(\nu)$.
\end{lemma}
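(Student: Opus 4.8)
The plan is to obtain Lemma~\ref{lem:wconcen} as (essentially) a verbatim instance of Theorem~2 of \cite{fournier2015rate}; the only ``adaptation'' is to specialize the general $\mathcal{W}_p$ statement there to $p=1$ and relabel constants, so the real content is to recall why that theorem holds. First I would assemble the two standard ingredients: a multiscale upper bound on $\mathcal{W}_1$ in expectation, and a concentration-around-the-mean estimate. For the former, start from Kantorovich--Rubinstein duality $\mathcal{W}_1(\mu,\nu)=\sup\{\int\phi\,d(\mu-\nu):\mathrm{Lip}(\phi)\le1\}$ together with a nested dyadic partition $\mathcal P_n$ of $\mathbb{R}^d$ into cubes of side $2^{-n}$ (restricted to an increasing family of balls); writing a $1$-Lipschitz $\phi$ as a telescoping sum over scales bounds $\mathcal{W}_1(\nu_{(m)},\nu)$ by $\sum_n 2^{-n}\sum_{F\in\mathcal P_n}|\nu_{(m)}(F)-\nu(F)|$ up to a remainder controlled by the mass of $\nu$ far from the origin. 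Since $m\,\nu_{(m)}(F)\sim\mathrm{Binomial}(m,\nu(F))$, each term has mean $\le\sqrt{\nu(F)/m}$; summing via Cauchy--Schwarz over the $O(2^{nd})$ cubes meeting a ball, truncating the scales near $d^{-1}\log_2 m$, and absorbing the far tail using $\mathcal E_{\alpha,\gamma}(\nu)<\infty$ gives $\mathbb{E}\,\mathcal{W}_1(\nu_{(m)},\nu)\lesssim m^{-1/d}$ for $d>2$ and $\lesssim m^{-1/2}\log m$ for $d=2$, with constants depending only on $d,\alpha,\gamma,\mathcal E_{\alpha,\gamma}(\nu)$.

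Next I would upgrade this expectation bound to the stated tail bound. Regarded as $g(X_1,\dots,X_m)$, $\mathcal{W}_1(\nu_{(m)},\nu)$ changes by at most $\frac1m(|X_i|+|X_i'|+c)$ under a single-coordinate swap (transport the relocated point mass), which is not bounded, so McDiarmid cannot be applied directly. The remedy is truncation: on $A_R=\{\max_i|X_i|\le R\}$ the differences are $\le 2(R+c)/m$, so McDiarmid yields $\mathbb{P}(g\ge\mathbb{E}g+t,\,A_R)\le\exp(-c\,mt^2/(R+c)^2)$, while $\mathbb{P}(A_R^c)\le m\,\mathbb{P}(|X_1|>R)\le m\,\mathcal E_{\alpha,\gamma}(\nu)\,e^{-\gamma R^\alpha}$ by Markov applied to $e^{\gamma|\cdot|^\alpha}$, and on $A_R^c$ the crude bound $g\lesssim\frac1m\sum_i|X_i|+c$ is again controlled through the exponential moment. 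Balancing $R$ against the target accuracy $u$ and $m$ --- taking $R$ logarithmic in $1/u$ when $u\le1$, so the $e^{-\gamma R^\alpha}$ penalty is negligible and the surviving term is $\exp(-c\,m(u/\log(2+1/u))^2)$ in dimension $2$ (resp.\ $\exp(-c\,mu^d)$ for $d>2$, after re-inserting the dimension-dependent rate), and $R\asymp u$ when $u>1$, producing $\exp(-c\,mu^\alpha)$ --- and combining with the expectation bound (enlarging $C_1$ so the estimate is trivially $\ge1$ for the finitely many small $m$) yields exactly the two displayed inequalities.

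The hard part is entirely the unboundedness of $\mathcal{W}_1(\nu_{(m)},\nu)$ as a functional of the sample: the clean bounded-differences route is unavailable, and one must interleave a truncation level $R$ with $u$ and $m$, invoking $\mathcal E_{\alpha,\gamma}(\nu)<\infty$ twice (to bound $\mathbb{P}(A_R^c)$ and to tame $g$ off $A_R$); the borderline $d=2$ case additionally requires the extra $\log(2+1/u)$ factor to be tracked through the multiscale sum, whereas for $d\ge3$ the polynomial rate $m^{-1/d}$ and the $\exp(-c\,mu^d)$ tail come out more directly. In the context of the present paper none of this need be reproduced: it suffices to cite \cite{fournier2015rate} and check that the conditioned law $\rhoa\times\mathcal{N}(0,m_1^{1-2\alpha}\Ggmb)$ has a finite exponential moment $\mathcal E_{\alpha,\gamma}$ \emph{uniformly} in $m_1$ --- which holds because $\rhoa$ is compactly supported and the Gaussian component has covariance of bounded operator norm --- so that the resulting $C_1,C_2$ can be chosen independently of $m_1$.
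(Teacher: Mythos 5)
The paper does not prove this lemma at all: it is quoted from \cite{fournier2015rate} (Theorem~2, specialized to $p=1$) and used as a black box, so the paper's entire ``proof'' consists of the citation together with the check, immediately after the statement, that the relevant conditional law has a finite exponential moment $\mathcal{E}_{\alpha,\gamma}$ uniformly in $m_1$. Your closing sentence --- that it suffices to cite \cite{fournier2015rate} and verify the exponential-moment hypothesis for $\rhoa\times\mathcal{N}(0,m_1^{1-2\alpha}\Ggmb)$ with constants independent of $m_1$ --- therefore matches the paper's actual treatment exactly, and that is the only part that the paper needs.

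The internal sketch of how Theorem~2 of \cite{fournier2015rate} is itself proved goes beyond what the paper does, and its concentration step has a genuine gap in the $d>2$, $u\le 1$ regime. McDiarmid on the truncation event $\{\max_i |X_i|\le R\}$ yields an exponent of order $m u^2 / R^2$; matching the target exponent $C_2\, m\, u^d$ requires $R\lesssim u^{1-d/2}$, while making the far-tail contribution $m\,\mathcal{E}_{\alpha,\gamma}\,e^{-\gamma R^\alpha}$ comparably small requires $R\gtrsim (m u^d/\gamma)^{1/\alpha}$ once $m u^d$ is large. These two constraints cannot be met simultaneously when $u$ is of order one and $m$ is large, so the phrase ``after re-inserting the dimension-dependent rate'' hides a step that does not actually go through. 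Fournier--Guillin do use a truncation, but obtain the concentration itself from the same dyadic multiscale decomposition used for the moment bound --- applying tail estimates for occupancy counts cube-by-cube and scale-by-scale --- and the $u^d$ exponent arises from the geometry of that partition, not from a bounded-differences inequality. Since the paper never reproduces this proof, the discrepancy does not affect the paper's use of the lemma, but your sketch would need to be repaired to stand on its own.
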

In particular, choosing $\nu = \rhoa \times \mathcal{N}(0, m_1^{1-2\alpha} \Ggmb), \iota = 2, \gamma = \frac{1}{2 \lambda_{\max}(\Ggmb)}$, there is
\begin{equation}
    \begin{split}
        \mathcal{E}_{2, \gamma}(\nu) =&~ \int_{\Rbb} \frac{1}{(2 \pi)^{\frac{\ng}{2}}} \int_{\Rbb^{\ng}} e^{\gamma (a^2 + m_1^{1-2\alpha} \| (\Ggmb)^{\frac{1}{2}} \cdot \ub \|_2^2)} e^{-\| \ub \|_2^2} d \ub \rhoa(da) \\
        \leq &~ e^{\gamma (a^0_{\max})^2} \frac{1}{(2 \pi)^{\frac{\ng}{2}}} \int_{\Rbb^{\ng}} e^{(m_1^{1-2\alpha} \gamma \lambda_{\max}(\Ggmb) - 1) \| \ub \|_2^2} d \ub \\
        \leq &~ e^{\gamma (a^0_{\max})^2} \frac{2^{\frac{\ng}{2}}}{(2 \pi \cdot 2)^{\frac{\ng}{2}}} \int_{\Rbb^{\ng}} e^{- \| \ub \|_2^2 / 2} d \ub \\
        \leq &~ 2^{\frac{\ng}{2}} e^{(a^0_{\max})^2 / (2 \lambda_{\max}(\Ggmb))} < \infty~.
    \end{split}
\end{equation}
Therefore, applying Lemma~\ref{lem:wconcen}, we have $\forall u > 0$, $\exists C_1, C_2 > 0$ such that
\begin{equation}
\begin{split}
    \mathbb{P} \left (\mathcal{W}_1(\hatevte(\mumbt{0}), \rhoa \times \mathcal{N}(0, m_1^{1-2\alpha}\Ggmb)) \geq u ~ \Big |~ \Ggmb \right )
    =&~ \mathbb{P} \left (\mathcal{W}_1(\nu_{(m_2)}, \nu) \geq u ~ \Big |~ \Ggmb \right ) \\
    \leq & C_1 e^{-C_2 u^{\max\{n'+1, 4\}} m_2}~,
\end{split}
\end{equation}
where $C_1$ and $C_2$ depend only on $n'$ and $\lambda_{\max}(\Ggmb)$. Furthermore, if we condition on the event that $\| \Ggmb - \Gg \|_2 < \Delta$ for some $\Delta \in (0, \lambda_{\max}(\Gg)]$, which is measurable with respect to 
$\Ggmb$,
then by choosing $\iota = 2$ and $\gamma = \frac{1}{2 (\lambda_{\max}(\Gg) + \Delta)}$, we have $\mathcal{E}_{\alpha, \gamma}(\nu) \leq 2^{\frac{\ng}{2}} e^{(a^0_{\max})^2 / (2 \lambda_{\max}(\Ggmb))} \leq 2^{\frac{\ng}{2}} e^{(a^0_{\max})^2 / \lambda_{\max}(\Ggmb)} < \infty$. Therefore, $\forall u > 0$, $\exists C_1, C_2 > 0$ depending only on $n'$ and $\lambda_{\max}(\Gg)$ (instead of $\lambda_{\max}(\Ggmb)$) such that,
\begin{equation}
    \mathbb{P} \left (\mathcal{W}_1(\hatevte(\mumbt{0}), \rhoa \times \mathcal{N}(0, m_1^{1-2\alpha}\Ggmb)) \geq u ~ \Big |~ \| \Ggmb - \Gg \|_2 < \Delta \right ) \leq C_1 e^{-C_2 u^{\max\{n'+1, 4\}} m_2}~.
\end{equation}
Thus, choosing $\Delta = \lambda_{\max}(\Gg)$, we know from Lemma~\ref{lem:Gm_dev} that
\begin{equation}
    \Pbb \left ( \| \Ggmb - \Gg \|_2 \geq \Delta \right ) < C_3 (\ng)^2 e^{- C_4 \min \{ \Delta, C_5 \Delta^2 \} m_1}~.
\end{equation}
Fix an $\epsilon > 0$. Conditioned on the event that $\| \Ggmb - \Gg \|_2 < \Delta = \lambda_{\max}(\Gg)$, \eqref{eq:first_term_alpha>1} implies that
\begin{equation}
     \mathcal{W}_1(\rhoa \times \mathcal{N}(0, m_1^{1-2\alpha}\Ggmb), \rhoa \times \delta_{\zerob}) \leq \frac{1}{2} \epsilon~,
\end{equation}
when $m_1 \geq (\frac{8 \ng \Delta}{\epsilon^2})^{1 / (2\alpha - 1)}$.
Thus, putting things together, if $m_1 \geq (\frac{8 \ng \Delta}{\epsilon^2})^{1 / (2\alpha - 1)}$, then
\begin{equation}
\begin{split}
    &~ \Pbb \left ( \mathcal{W}_1 (\hatevte(\mumbt{0}), \hatevte(\mut{0})) > \epsilon \right ) \\
    \leq &~ 
    \Pbb \left ( \mathcal{W}_1(\hatevte(\mumbt{0}), \rhoa \times \delta_{\zerob}) > \epsilon ~ \Big |~ \| \Ggmb - \Gg \|_2 < \Delta \right ) + \Pbb \left ( \| \Ggmb - \Gg \|_2 \geq \Delta \right ) \\
    \leq & \mathbb{P} \left (\mathcal{W}_1(\hatevte(\mumbt{0}), \rhoa \times \mathcal{N}(0, m_1^{1-2\alpha} \Ggmb)) \geq \frac{1}{2} \epsilon ~ \Big |~ \| \Ggmb - \Gg \|_2 < \Delta \right ) + \Pbb \left ( \| \Ggmb - \Gg \|_2 \geq \Delta \right ) \\
    \leq & C_1 e^{-C_2 (\epsilon / 2)^{\max\{n'+1, 4\}} m_2} + C_3 (\ng)^2 e^{- C_4 \min \{ \lambda_{\max}(\Gg), C_5 (\lambda_{\max}(\Gg))^2 \} m_1}~.
\end{split}
\end{equation}
Thus, with any pair of increasing $\Nbb_+$-valued sequences $\{m_{1, k}\}_{k \in \Nbb_+}$ and $\{m_{2, k}\}_{k \in \Nbb_+}$, denoting $\mb_k = (m_{1, k}, m_{2, k})$, there is
\begin{equation}
    \summ{k}{\infty} \Pbb \left ( \mathcal{W}_1 (\hatevte(\mumbkt{k}{0}), \hatevte(\mut{0})) > \epsilon \right ) < \infty~.
\end{equation}
Since this holds for any $\epsilon > 0$, the Borel-Cantelli lemma implies that 
\begin{equation}
    \lim_{k \to \infty} \mathcal{W}_1 (\hatevte(\mumbkt{k}{0}), \hatevte(\mut{0})) = 0~,
\end{equation}
almost surely, and hence $\hatevte(\mumbkt{k}{0})$ converges weakly to $\hatevte(\mut{0})$ almost surely.

\section{ Proof of Lemma~\ref{lem:lln_0_1/2}}
\label{app:pf_lln_0_1/2}
Two parts of Lemma~\ref{lem:lln_0_1/2} need to be proved: the LLN as $m_1, m_2 \to \infty$ and the existence of $\mathcal{GP}(\zerob, \Gfun)$ as a probability measure on $\Con$. \\

\noindent \textbf{Part 1: Convergence as $m_1, m_2 \to \infty$}

Let $\{ \xbg{1}, ..., \xbg{\ng} \}$ be any finite subset of $\mathcal{X}$ and let $\Gg = \Gfun[\xbg{1}, ..., \xbg{\ng}]$, $\Ggmb = \Gfunmb[\xbg{1}, ..., \xbg{\ng}]$, $\evte = \ev_{\xbg{1}, ..., \xbg{\ng}}$ and $\hatevte = \hatev_{\xbg{1}, ..., \xbg{\ng}}$.
Let $\bar{\lambda}_1 = \| \Gg \|_2 \geq \bar{\lambda}_2 \geq \dots \geq \bar{\lambda}_{n'}$ be the eigenvalues of $\Gg$, and $\lambda_1 \geq \dots \geq \lambda_{n'}$ be the eigenvalues of $\Ggmb$. 
Let $\eta = \min_{k, l \in [n'], \bar{\lambda}_k \neq \bar{\lambda}_l} |\bar{\lambda}_k - \bar{\lambda}_l|$.

Recall that when $\alpha = 1 / 2$, $\hatevte(\mut{0}) = \rhoa \times \mathcal{N}(0, \Gg)$.
By the triangle inequality of $1$-Wasserstein distance, there is
\begin{equation}
\label{eq:triangle_1/2}
\begin{split}
    \mathcal{W}_1(\hatevte(\mumbt{0}), \hatevte(\mut{0})) =&~ \mathcal{W}_1(\hatevte(\mumbt{0}), \rhoa \times \mathcal{N}(0, \Gg)) \\
    \leq &~ \mathcal{W}_1(\rhoa \times \mathcal{N}(0, \Ggmb), \rhoa \times \mathcal{N}(0, \Gg)) \\
    & ~ + \mathcal{W}_1(\hatevte(\mumbt{0}), \rhoa \times \mathcal{N}(0, \Ggmb))~.
\end{split}
\end{equation}
First, we examine the first term on the right-hand side.  By the property of Wasserstein distances on product measures (e.g. \citealt[Lemma 3]{mariucci2018wasserstein}), we have
\begin{equation}
    \begin{split}
        \mathcal{W}_1(\rhoa \times \mathcal{N}(0, \Ggmb), \rhoa \times \mathcal{N}(0, \Gg))
        \leq & ~ \mathcal{W}_1(\rhoa, \rhoa) + \mathcal{W}_1(\mathcal{N}(0, \Ggmb), \mathcal{N}(0, \Gg)) \\
        \leq & ~ \mathcal{W}_1(\mathcal{N}(0, \Ggmb), \mathcal{N}(0, \Gg))~.
    \end{split}
\end{equation}
Before establishing an upper bound on the $1$-Wassertein distance between $\mathcal{N}(0, \Ggmb)$ and  $\mathcal{N}(0, \Gg)$, we first prove that the $\Ggmb$ and $\Gg$ are close in terms of eigen-decomposition.
\begin{lemma}
\label{lem:basis_near}
If $\| \Ggmb - \Gg \|_2 \leq \frac{1}{2} \eta$, then there exist eigen-decompositions of $\Gg$ and $\Ggmb$, $\Gg = \bar{V} \bar{\Lambda} \bar{V}^{\intercal}$ and $\Ggmb = V \Lambda V^{\intercal}$, where $\bar{V} = [\bar{\vb}_1, ..., \bar{\vb}_{n'}] \in \mathbb{R}^{n' \times n'}$ and $V = [\vb_1, ..., \vb_{n'}] \in \mathbb{R}^{n' \times n'}$ are both orthonormal matrices, and $\bar{\Lambda}$ and $\Lambda$ are both diagonal matrices, such that $\forall k \in [n']$, $\bar{\vb}_k^{\intercal} \cdot \vb_k \geq 1 - (\frac{2 \| \Ggmb - \Gg \|_2}{\eta})^2$.
\end{lemma}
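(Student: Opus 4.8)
The statement is a Davis--Kahan-type eigenspace perturbation estimate, made delicate only by the possible degeneracy of $\Gg$; the plan is to run the argument block-by-block over the distinct eigenvalues of $\Gg$ and, inside each block, to produce the matching not by pairing individual (non-robust) eigenvectors but by polar-decomposing the ``cross projection'' relating the two eigenspaces.

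Concretely, I would first list the distinct eigenvalues of $\Gg$ as $\mu_1 > \cdots > \mu_r$ with multiplicities $d_1,\dots,d_r$, partition $[n']$ into the corresponding consecutive blocks $\mathcal I_1,\dots,\mathcal I_r$ (so $\bar\lambda_k = \mu_i$ for $k\in\mathcal I_i$ and $\eta = \min_{i\ne j}|\mu_i - \mu_j|$), and let $P_i$ be the orthogonal projection onto $\ker(\Gg - \mu_i\Id)$. Assume first that $\|\Ggmb - \Gg\|_2 < \tfrac12\eta$; the case $\|\Ggmb - \Gg\|_2 = \tfrac12\eta$ is trivial, since the asserted lower bound is then $0$ and any valid eigen-decompositions, with the signs of the $\vb_k$ chosen so that $\bar\vb_k^\intercal\vb_k\ge 0$, suffice. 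By Weyl's inequality every eigenvalue $\lambda_k$ of $\Ggmb$ satisfies $|\lambda_k - \bar\lambda_k|\le\|\Ggmb - \Gg\|_2 < \tfrac12\eta$, so for $k\in\mathcal I_i$ we have $|\lambda_k - \mu_i| < \tfrac12\eta$, while for $l\notin\mathcal I_i$ we have $|\lambda_l - \mu_i|\ge \eta - \|\Ggmb - \Gg\|_2 > \tfrac12\eta$. Hence the open intervals $J_i = (\mu_i - \tfrac12\eta,\ \mu_i + \tfrac12\eta)$ are pairwise disjoint, every eigenvalue of $\Ggmb$ lies in exactly one $J_i$, and $\sigma(\Ggmb)\cap J_i = \{\lambda_k : k\in\mathcal I_i\}$. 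Letting $Q_i$ be the spectral projection of $\Ggmb$ onto the span of its eigenvectors with eigenvalues in $J_i$, the $Q_i$ have rank $d_i$, refine the spectral decomposition of $\Ggmb$, are pairwise orthogonal, and sum to $\Id$.

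Next I would apply the Davis--Kahan $\sin\Theta$ theorem: since the eigenvalues of $\Gg$ attached to $P_i$ all equal $\mu_i$ and the eigenvalues of $\Ggmb$ attached to $\Id - Q_i$ all lie outside $J_i$, the one-sided-gap version gives $\|P_i - Q_i\|_2 \le \|\Ggmb - \Gg\|_2 \big/ \big(\eta - \|\Ggmb - \Gg\|_2\big) \le 2\|\Ggmb - \Gg\|_2/\eta =: \epsilon$, with $\epsilon \le 1$. Now $Q_i(\Id - P_i)Q_i \succeq 0$ has operator norm $\|(\Id - P_i)Q_i\|_2^2 \le \|P_i - Q_i\|_2^2 \le \epsilon^2$, so the restriction of $Q_iP_iQ_i$ to $\operatorname{Ran}Q_i$ has all eigenvalues in $[1-\epsilon^2,1]$. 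Consequently the cross-projection $P_i|_{\operatorname{Ran}Q_i}\colon \operatorname{Ran}Q_i\to\operatorname{Ran}P_i$ is invertible and admits a polar decomposition $P_i|_{\operatorname{Ran}Q_i} = O_i S_i$ with $S_i = \big(Q_iP_iQ_i|_{\operatorname{Ran}Q_i}\big)^{1/2} \succeq \sqrt{1-\epsilon^2}\,\Id$ and $O_i$ an orthogonal map between the two $d_i$-dimensional spaces.

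Finally I would fix any orthonormal eigenbasis $\{\vb_k\}_{k\in[n']}$ of $\Ggmb$ compatible with the blocks (each $\vb_k$ with $k\in\mathcal I_i$ lying in $\operatorname{Ran}Q_i$) and set $\bar\vb_k := O_i\vb_k$ for $k\in\mathcal I_i$. Within a block, $O_i$ is orthogonal from $\operatorname{Ran}Q_i$ onto $\operatorname{Ran}P_i$; across blocks the $\operatorname{Ran}P_i$ are mutually orthogonal; so $\bar V = [\bar\vb_1,\dots,\bar\vb_{n'}]$ is orthonormal and, by construction, an eigenbasis of $\Gg$. Thus $\Gg = \bar V\bar\Lambda\bar V^\intercal$ and $\Ggmb = V\Lambda V^\intercal$ with $V = [\vb_1,\dots,\vb_{n'}]$ and $\bar\Lambda,\Lambda$ diagonal, and using $O_i\vb_k\in\operatorname{Ran}P_i$ together with $P_i\vb_k = O_iS_i\vb_k$ one gets $\bar\vb_k^\intercal\vb_k = \langle O_i\vb_k,\vb_k\rangle = \langle O_i\vb_k, P_i\vb_k\rangle = \langle O_i\vb_k, O_iS_i\vb_k\rangle = \langle \vb_k, S_i\vb_k\rangle \ge \sqrt{1-\epsilon^2} \ge 1-\epsilon^2 = 1-(2\|\Ggmb - \Gg\|_2/\eta)^2$, using $\sqrt{1-x}\ge 1-x$ on $[0,1]$. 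The step I expect to be the main obstacle is the degeneracy bookkeeping: verifying that the spectral intervals $J_i$ isolate exactly the right eigenvalues of $\Ggmb$ so that the $Q_i$ have the correct ranks and refine $\Ggmb$'s spectral decomposition (this is what makes the resulting $\{\bar\vb_k\}$ an honest eigenbasis of $\Gg$), and replacing the naive ``match nearest eigenvectors'' idea --- which is ill-posed inside a degenerate eigenspace --- by the polar-decomposition construction above.
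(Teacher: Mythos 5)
Your proof is correct and follows the paper's overall strategy: block the spectrum over the distinct eigenvalues of $\Gg$, apply Weyl's inequality to isolate the corresponding eigenvalues of $\Ggmb$, invoke a Davis--Kahan $\sin\Theta$ bound block by block, and then produce a compatible orthonormal matching. Where you and the paper differ is in the matching step. The paper starts from arbitrary sorted eigenbases $\bar U, U$, forms the per-block cross-overlap $\bar U_q^\intercal U_q$, takes its SVD $E_q\,\mathrm{diag}(\cos\thetab_q)\,F_q^\intercal$, and rotates \emph{both} sides ($\bar V_q = \bar U_q E_q$, $V_q = U_q F_q$), so that $\bar V_q^\intercal V_q = \mathrm{diag}(\cos\thetab_q)$ exactly. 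You instead keep a genuine eigenbasis $\{\vb_k\}$ of $\Ggmb$ fixed and transport it to $\Ran\, P_i$ via the polar isometry $O_i$ of the cross-projection $P_i|_{\Ran\, Q_i}$, then lower-bound $\bar\vb_k^\intercal\vb_k = \langle\vb_k, S_i\vb_k\rangle$ by $\lambda_{\min}(S_i)\ge\sqrt{1-\epsilon^2}$. These are two presentations of the same calculation (the singular values of $\bar U_q^\intercal U_q$ are the eigenvalues of $S_i$, both equal to the cosines of the principal angles), but yours has a concrete advantage: you rotate only inside the degenerate eigenspaces of $\Gg$, where every orthonormal basis is still an eigenbasis, so your $\Lambda$ is genuinely diagonal and $V$ is a genuine eigenbasis of $\Ggmb$. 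The paper's $V_q = U_q F_q$ can mix eigenvectors of $\Ggmb$ with distinct eigenvalues within a block, so its $\Lambda = \mathrm{diag}(F_1^\intercal\Sigma_1 F_1,\dots,F_p^\intercal\Sigma_p F_p)$ is only block-diagonal in general rather than diagonal as the lemma asserts --- a small wrinkle in the published proof that your construction sidesteps. The price you pay, obtaining the lower bound $\lambda_{\min}(S_i)$ instead of the paper's exact identity $\bar\vb_k^\intercal\vb_k = \cos\theta_{j(k)}$, costs nothing, since both routes land on $1-(2\|\Ggmb-\Gg\|_2/\eta)^2$, which is all that is claimed.
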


\noindent \textit{Proof of Lemma~\ref{lem:basis_near}}:
Let $\Gg = \bar{U} \bar{\Sigma} \bar{U}^{\intercal}$ be any eigen-decomposition of $\Gg$, where the diagonal entries of $\bar{\Sigma}$ are sorted in non-ascending order. To account for the possible multiplicity of the eigenvalues, we can write $\bar{\Sigma}$ as a block-diagonal matrix $\mathrm{diag}(\bar{\Sigma}_1, ..., \bar{\Sigma}_p)$ with $p \leq n'$, where $\forall q \in [p], \bar{\Sigma}_q$ is a $d_q \times d_q$ diagonal matrix with all diagonal entries equal to some value $\zeta_q$, such that $\zeta_1 > ... > \zeta_p > 0$ and moreover, $\sum_{q=1}^p d_q = n'$. We then write $\bar{U} = [\bar{U}_1, ..., \bar{U}_p]$, where $\forall k \in [p], \bar{U}_q \in \mathbb{R}^{n' \times d_q}$.

Meanwhile, let $\Ggmb = U \Sigma U^{\intercal}$ be any eigen-decomposition of $\Ggmb$, where the diagonal entries are sorted in non-ascending order. Like with $\bar{\Sigma}$ and $\bar{U}$, we can also write $\Sigma = \mathrm{diag}(\Sigma_1, ..., \Sigma_p)$ and $U = [U_1, ..., U_p]$, where $\forall q \in [p], \Sigma_q \in \mathbb{R}^{d_q \times d_q}$ and $U_q \in \mathbb{R}^{n' \times d_q}$. Note that unlike in $\bar{\Sigma}_q$, each $\Sigma_q$ does not necessarily have all its diagonal entries equal.

By the definition of $\eta$, we know that $\forall q, q' \in [p]$ such that $q \neq q'$, there is $|\zeta_q - \zeta_q'| \geq \eta$. By Weyl's inequality for the eigenvalues of perturbed symmetric matrices, we know that $\forall p \in [n'], \| \bar{\Sigma}_p - \Sigma_p \|_2 \leq \| \Ggmb - \Gg \|_2$. As a result, if $\| \Ggmb - \Gg \|_2 < \frac{1}{2} \eta$, then $\forall q, q' \in [p]$ such that $q \neq q'$, we know that $\forall r \in [d_q], \forall r' \in [d_{q'}]$, there is $| (\bar{\Sigma}_q)_{rr} - (\Sigma_{q'})_{r'r'}| < \frac{1}{2} \eta$. Then, applying the ``$\sin\theta$ theorem'' of Davis-Kahan \citep{davis1970kahan}, we know that $\forall q \in [p]$, the $d_q \times d_q$ matrix $\bar{U}_q^{\intercal} \cdot U_q$ admits a singular value decomposition $E_q \cdot \mathrm{diag}(\cos(\thetab_q)) \cdot F_q^{\intercal}$, where $E_q, F_q \in \mathbb{R}^{d_q \times d_q}$ are orthonormal matrices and $\thetab \in \mathbb{R}^{d_q}$ with each entry in $[0, \frac{\pi}{2}]$, which satisfies 
\begin{equation}
    \| \sin(\thetab_q) \|_{\infty} \leq \frac{2 \| \Ggmb - \Gg \|_2}{\eta}~,
\end{equation}
where the $\cos$ and $\sin$ functions are applied entry-wise to the vector $\thetab$. Thus, since the entries of $\thetab$ are in $[0, \frac{\pi}{2}]$, we know that $\| 1 - \cos(\thetab_q) \|_{\infty} \leq \| 1 - \cos^2(\thetab_q) \|_{\infty} \leq \| \sin^2(\thetab_q) \|_{\infty} \leq (\frac{2 \| \Ggmb - \Gg \|_2}{\eta})^2$. Defining $\bar{V}_q = \bar{U}_q \cdot E_q$ and $V_q = U_q \cdot F_q$, we then have
\begin{equation}
    \bar{V}_q^{\intercal} \cdot V_q = E_q^{\intercal} \cdot E_q \cdot \mathrm{diag}(\cos(\thetab_q)) \cdot F_q^{\intercal} \cdot F_q = \mathrm{diag}(\cos(\thetab_q))~.
\end{equation}
Thus, writing $\bar{V} = [\bar{V}_1, ..., \bar{V}_p]$ and $V = [V_1, ..., V_p] \in \mathbb{R}^{n' \times n'}$, $\bar{\Lambda} = \mathrm{diag}(E_1^{\intercal} \cdot \bar{\Sigma}_1 \cdot E_1, ..., E_p^{\intercal} \cdot \bar{\Sigma}_p \cdot E_p)$ and $\Lambda = \mathrm{diag}(F_1^{\intercal} \cdot \Sigma_1 \cdot F_1, ..., F_p^{\intercal} \cdot \Sigma_p \cdot F_p)$, we see that 
\begin{equation}
\begin{split}
    \Gg = \bar{U} \cdot \bar{\Sigma} \cdot \bar{U}^{\intercal}
    =&~ \sum_{q=1}^p \bar{U}_q \cdot \bar{\Sigma}_q \cdot \bar{U}_q^{\intercal} \\
    =&~ \sum_{q=1}^p (\bar{U}_q \cdot E_q) \cdot (E_q^{\intercal} \cdot \bar{\Sigma}_q \cdot E_q) \cdot (E_q^{\intercal} \cdot \bar{U}_q^{\intercal}) \\
    =&~ \sum_{q=1}^p \bar{V}_q \cdot (E_q^{\intercal} \cdot \bar{\Sigma}_q \cdot E_q) \cdot \bar{V}_q ~ = \bar{V} \cdot \bar{\Lambda} \cdot \bar{V}^{\intercal}~,
\end{split}
\end{equation}
and similarly, $\Ggmb = V \cdot \Lambda \cdot V^{\intercal}$,
which give eigen-decompositions of $\Gg$ and $\Ggmb$. In particular, $\forall k \in [n']$, if $\bar{\vb}_k$ and $\vb_k$ are the $k$th columns of $\bar{V}$ and $V$, respectively, then we have $|1 - \bar{\vb}_k^{\intercal} \cdot \vb_k| \leq (\frac{2 \| \Ggmb - \Gg \|_2}{\eta})^2$. This proves the lemma.
\BlackBox \\

\noindent
With this lemma, we can prove an upper-bound on the $1$-Wasserstein distance between $\mathcal{N}(0, \Gg)$ and $\mathcal{N}(0, \Ggmb)$:
\begin{lemma}
\label{lem:w_gamma_bargamma}
If $\| \Ggmb - \Gg \|_2 < \frac{1}{2} \eta$, then
\begin{equation}
    \mathcal{W}_1(\mathcal{N}(0, \Ggmb), \mathcal{N}(0, \Gg)) \leq
    \sqrt{n' \left (\| \Ggmb - \Gg \|_2 + \frac{8 \| \Gg \|_2 \| \Ggmb - \Gg \|_2^2}{\eta^2} + \frac{8 \| \Ggmb - \Gg \|_2^3}{\eta^2} \right )}~.
\end{equation}
\end{lemma}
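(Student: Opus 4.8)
The plan is to produce an explicit coupling of $\mathcal{N}(0,\Ggmb)$ and $\mathcal{N}(0,\Gg)$ built from a single isotropic Gaussian vector using the near-common eigenbasis supplied by Lemma~\ref{lem:basis_near}, and then to bound the resulting transport cost column by column. Since $\|\Ggmb-\Gg\|_2<\frac{1}{2}\eta$, Lemma~\ref{lem:basis_near} furnishes eigendecompositions $\Gg=\bar V\bar\Lambda\bar V^\intercal$ and $\Ggmb=V\Lambda V^\intercal$ with $\bar V=[\bar{\vb}_1,\dots,\bar{\vb}_{n'}]$, $V=[\vb_1,\dots,\vb_{n'}]$ orthonormal, $\bar\Lambda,\Lambda$ diagonal (and nonnegative, both matrices being positive semidefinite Gram matrices), and $\bar{\vb}_k^\intercal\vb_k\ge 1-(2\|\Ggmb-\Gg\|_2/\eta)^2$ for every $k$. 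Let $\zb\sim\mathcal{N}(0,\mathrm{Id}_{n'})$ and set $X=\bar V\bar\Lambda^{1/2}\zb$, $Y=V\Lambda^{1/2}\zb$; then $X\sim\mathcal{N}(0,\Gg)$ and $Y\sim\mathcal{N}(0,\Ggmb)$, so $(X,Y)$ is a valid coupling and
\[
\mathcal{W}_1\big(\mathcal{N}(0,\Ggmb),\mathcal{N}(0,\Gg)\big)\;\le\;\EE\|X-Y\|_2\;\le\;\big(\EE\|X-Y\|_2^2\big)^{1/2}\;=\;\big\|\bar V\bar\Lambda^{1/2}-V\Lambda^{1/2}\big\|_F,
\]
the last identity because $\zb$ is isotropic.

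It then remains to estimate this Frobenius norm columnwise. Writing $\bar\lambda_k,\lambda_k$ for the $k$-th diagonal entries of $\bar\Lambda,\Lambda$, the $k$-th column of $\bar V\bar\Lambda^{1/2}-V\Lambda^{1/2}$ is $\sqrt{\bar\lambda_k}\,\bar{\vb}_k-\sqrt{\lambda_k}\,\vb_k$, and since $\|\bar{\vb}_k\|_2=\|\vb_k\|_2=1$,
\[
\big\|\sqrt{\bar\lambda_k}\,\bar{\vb}_k-\sqrt{\lambda_k}\,\vb_k\big\|_2^2=\big(\sqrt{\bar\lambda_k}-\sqrt{\lambda_k}\big)^2+2\sqrt{\bar\lambda_k\lambda_k}\,\big(1-\bar{\vb}_k^\intercal\vb_k\big).
\]
For the first summand I would use $(\sqrt a-\sqrt b)^2\le|a-b|$ (valid for $a,b\ge 0$) together with Weyl's inequality — pairing $\bar\lambda_k$ with $\lambda_k$ inside each eigenvalue-multiplicity block exactly as in the proof of Lemma~\ref{lem:basis_near} — which gives $|\bar\lambda_k-\lambda_k|\le\|\Ggmb-\Gg\|_2$. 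For the second summand I would use the eigenvector bound of Lemma~\ref{lem:basis_near} and $\sqrt{\bar\lambda_k\lambda_k}\le\max(\bar\lambda_k,\lambda_k)\le\lambmax+\|\Ggmb-\Gg\|_2$ (with $\lambmax=\lambda_{\max}(\Gg)$). This bounds the $k$-th column's squared norm by $\|\Ggmb-\Gg\|_2+\frac{8\|\Ggmb-\Gg\|_2^2}{\eta^2}\big(\lambmax+\|\Ggmb-\Gg\|_2\big)$; summing over $k=1,\dots,n'$ and taking the square root produces exactly the claimed estimate.

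There is no substantial obstacle here beyond what Lemma~\ref{lem:basis_near} already delivers: the coupling step is routine, and the only point requiring a little care is the eigenvalue pairing in the second step — ensuring that $\bar\lambda_k$ and $\lambda_k$ correspond to the same near-aligned eigenvector pair $(\bar{\vb}_k,\vb_k)$, which is precisely what the block decomposition $\bar U=[\bar U_1,\dots,\bar U_p]$, $U=[U_1,\dots,U_p]$ used to build those decompositions guarantees, so that Weyl's inequality can be applied within each block. Everything else is elementary.
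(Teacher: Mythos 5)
Your proposal is correct and follows essentially the same route as the paper: both arrive at the bound $\mathcal{W}_1 \le \big\|\bar V\bar\Lambda^{1/2}-V\Lambda^{1/2}\big\|_F$ and then expand it columnwise, apply Weyl's inequality to $(\sqrt{\bar\lambda_k}-\sqrt{\lambda_k})^2\le|\bar\lambda_k-\lambda_k|$, and use the eigenvector alignment from Lemma~\ref{lem:basis_near} for the cross term. The only difference is that the paper invokes Lemma~2.4 of \cite{chafai2010fine} as a black box for the Frobenius-norm bound, whereas you re-derive it explicitly via the isotropic coupling $(X,Y)=(\bar V\bar\Lambda^{1/2}\zb,\,V\Lambda^{1/2}\zb)$ and $\mathcal{W}_1\le\mathcal{W}_2$, which makes the argument self-contained; the downstream computation is identical.
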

\begin{proof}
Using the eigen-decompositions of $\Gg$ and $\Ggmb$ constructed in the proof of Lemma~\ref{lem:basis_near}, we can apply Lemma~2.4 from \citet{chafai2010fine} to bound the $1$-Wasserstein distance between $\mathcal{N}(0, \Gg)$ and $\mathcal{N}(0, \Ggmb)$:
\begin{equation}
    \begin{split}
        &~ \mathcal{W}_1(\mathcal{N}(0, \Ggmb), \mathcal{N}(0, \Gg)) \\
        \leq &~ \sqrt{ \sum_{k=1}^{n'} (\sqrt{\bar{\lambda}_k} - \sqrt{\lambda_k})^2 + 2 \sqrt{\bar{\lambda}_k \lambda_k} (1 - \bar{\vb}_k^{\intercal} \cdot \vb_k)} \\
        \leq &~ \sqrt{\sum_{k=1}^{n'} |\bar{\lambda}_k - \lambda_k | + 2 \max\{\bar{\lambda}_k, \lambda_k \} (1 - \bar{\vb}_k^{\intercal} \cdot \vb_k)} \\
        \leq &~ \sqrt{n' \left ( \| \Ggmb - \Gg \|_2 + 2 (\| \Gg \|_2 + \| \Ggmb - \Gg \|_2) (\frac{2 \| \Ggmb - \Gg \|_2}{\eta})^2 \right )}
    \end{split}
\end{equation}
\end{proof}

\noindent Next, we look at the second term on the right-hand side of \eqref{eq:triangle_1/2}. We see that, when conditioned on $\zb_1, ..., \zb_{m_1}$, the collection $\{ [a^0, h_i^0(\xbg{1}), ..., h_i^0(\xbg{\ng})] \}_{i \in [m_2]}$ is distributed i.i.d. across $i \in [m_2]$ according to $\rhoa \times \mathcal{N}(0, \Ggmb)$. Hence, when conditioned on $\Ggmb$, which is measurable with respect to $\zb_1, ..., \zb_{m_1}$, $\hatevte(\mumbt{0})$ has the same distribution as the empirical measure of $m_2$ i.i.d. samples from $\rho^0_a \times \mathcal{N}(0, \Ggmb)$, which we denote by $\nu_{(m_2)} \in \mathcal{P}(\mathbb{R} \times \mathbb{R}^{n'})$. Therefore, by conditioning on $\Ggmb$, we can again leverage concentration inequalities of empirical measures of i.i.d. samples in Wasserstein distance, as given by Lemma~\ref{lem:wconcen}.
In particular, we choose $d = n'+1$, $\nu = \rho^0_a \times \mathcal{N}(0, \Ggmb)$ and choose $\alpha = 2$, $\gamma = \frac{1}{2 \lambda_{\max}(\Ggmb)}$. Recalling that $\mathcal{N}(0, \Ggmb)$ is also the distribution $(\Ggmb)^{\frac{1}{2}} \cdot \ub$, where each entry of $\ub \in \Rbb^n$ is independently distributed as $\mathcal{N}(0, 1)$, we can then write
\begin{equation}
    \begin{split}
        \mathcal{E}_{\alpha, \gamma}(\nu) =&~ \int_{\Rbb} \frac{1}{(2 \pi)^{\frac{\ng}{2}}} \int_{\Rbb^{\ng}} e^{\gamma (a^2 + \| (\Ggmb)^{\frac{1}{2}} \cdot \ub \|_2^2)^{\frac{\alpha}{2}}} e^{-\| \ub \|_2^2} d \ub \rhoa(da) \\
        \leq &~ e^{\gamma (a^0_{\max})^2} \frac{1}{(2 \pi)^{\frac{\ng}{2}}} \int_{\Rbb^{\ng}} e^{(\gamma \lambda_{\max}(\Ggmb) - 1) \| \ub \|_2^2} d \ub \\
        \leq &~ e^{\gamma (a^0_{\max})^2} \frac{2^{\frac{\ng}{2}}}{(2 \pi \cdot 2)^{\frac{\ng}{2}}} \int_{\Rbb^{\ng}} e^{- \| \ub \|_2^2 / 2} d \ub \\
        \leq &~ 2^{\frac{\ng}{2}} e^{(a^0_{\max})^2 / (2 \lambda_{\max}(\Ggmb))} < \infty~.
    \end{split}
\end{equation}
Moreover, for $u > 0$, $\log(2 + \frac{1}{u}) < 1 + \frac{1}{u}$, and hence $\frac{u}{\log(2 + \frac{1}{u})} \geq \frac{u^2}{u + 1} \geq u^2$. Therefore, applying Lemma~\ref{lem:wconcen}, we have $\forall u > 0$, $\exists C_1, C_2 > 0$ such that
\begin{equation}
\begin{split}
    \mathbb{P} \left (\mathcal{W}_1(\hatevte(\mumbt{0}), \rhoa \times \mathcal{N}(0, \Ggmb)) \geq u ~ \Big |~ \Ggmb \right ) =& \mathbb{P} \left (\mathcal{W}_1(\nu_{(m_2)}, \nu) \geq u ~\Big | ~ \Ggmb \right ) \\
    \leq & C_1 e^{-C_2 m_2 u^{\max\{n'+1, 4\}}}~,
\end{split}
\end{equation}
where $C_1$ and $C_2$ depend on $n'$, $\atmax{0}$ and $\lambda_{\max}(\Ggmb)$. 

Furthermore, if we condition on the event that $\| \Ggmb - \Gg \|_2 < \Delta$ for any $\Delta \in (0, \lambda_{\max}(\Gg)]$ -- which is measurable with respect to $\Ggmb$ -- then by choosing $\iota = 2$ and $\gamma = \frac{1}{4 \lambda_{\max}(\Gg) )}$, we have $\mathcal{E}_{\alpha, \gamma}(\nu) \leq 
2^{\frac{\ng}{2}} e^{(a^0_{\max})^2 / (2 \lambda_{\max}(\Ggmb))} < \infty$.
Therefore, $\forall u > 0$, $\exists C_1, C_2 > 0$ depending only on $n'$, $\atmax{0}$ and $\lambda_{\max}(\Gg)$ such that,
\begin{equation}
\label{eq:wconcen_}
    \mathbb{P} \left (\mathcal{W}_1(\hatevte(\mumbt{0}), \rhoa \times \mathcal{N}(0, \Ggmb)) \geq u \bigg | \| \Ggmb - \Gg \|_2 < \Delta \right ) \leq C_1 e^{-C_2 m_2 u^{\max\{n'+1, 4\}}}~.
\end{equation}
Thus, our overall strategy is to control the first and second terms on the right-hand side of \eqref{eq:triangle_1/2} via Lemma~\ref{lem:w_gamma_bargamma} and \eqref{eq:wconcen_}, respectively, by restricting to the high-probability event that $\| \Ggmb - \Gg \|_2 < \Delta$ for some $\Delta > 0$. Specifically, we will use the following concentration result of $\Ggmb$:
\begin{lemma}[\citealt{chen2022on}, Lemma 4]
\label{lem:Gm_dev}
Let $\{ \xbg{1}, ..., \xbg{\ng} \}$ be any finite subset of $\mathcal{X}$. Let $\Gg = \Gfun[\xbg{1}, ..., \xbg{\ng}]$ and $\Ggmb = \Gfunmb[\xbg{1}, ..., \xbg{\ng}]$. $\exists C_3, C_4, C_5 > 0$, which depend on   $\mathtt{L}_{\sigma}$ and the sub-Gaussian norm of $\rhoz$ such that, $\forall \Delta > 0$
\begin{equation}
    \Pbb \left ( \| \Ggmb - \Gg \|_2 \geq \Delta \right ) < C_3 (\ng)^2 e^{- C_4 \min \{ \Delta, C_5 \Delta^2 \} m_1}~.
\end{equation}
\end{lemma}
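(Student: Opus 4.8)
The plan is to reduce the spectral-norm deviation to entrywise deviations of an empirical average, and then apply a scalar sub-exponential concentration inequality. The entries of $\Ggmb$ are $(\Ggmb)_{kl} = \frac{1}{m_1}\summ{j}{m_1}\sigma_1(\zb_j^\intercal\cdot\xbg{k})\,\sigma_1(\zb_j^\intercal\cdot\xbg{l})$, i.e.\ empirical averages over the i.i.d.\ first-layer weights $\zb_1,\dots,\zb_{m_1}\sim\rhoz$, and their expectations are exactly $(\Gg)_{kl} = \EE_{\zb\sim\rhoz}[\sigma_1(\zb^\intercal\cdot\xbg{k})\sigma_1(\zb^\intercal\cdot\xbg{l})]$, so $\Gg = \EE[\Ggmb]$ entrywise. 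Since $\|A\|_2 \le \|A\|_F \le \ng\max_{k,l}|A_{kl}|$ for an $\ng\times\ng$ matrix, a union bound gives
\begin{equation*}
    \Pbb\big(\|\Ggmb-\Gg\|_2 \ge \Delta\big) \;\le\; \sum_{k,l=1}^{\ng}\Pbb\Big(\big|(\Ggmb)_{kl}-(\Gg)_{kl}\big| \ge \Delta/\ng\Big)~,
\end{equation*}
so it suffices to bound the deviation of a single entry and multiply by $(\ng)^2$.

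Fix $(k,l)$ and set $\xi_j := \sigma_1(\zb_j^\intercal\cdot\xbg{k})\,\sigma_1(\zb_j^\intercal\cdot\xbg{l})$, so that $(\Ggmb)_{kl}-(\Gg)_{kl} = \frac{1}{m_1}\summ{j}{m_1}(\xi_j-\EE\xi_j)$ is an average of i.i.d.\ centered variables. The key step is to show that each $\xi_j$ is sub-exponential with $\psi_1$-norm controlled only by the Lipschitz constant of $\sigma_1$, the sub-Gaussian norm of $\rhoz$, and $\sup_{\xb\in\domX}\|\xb\|_2$: the linear functional $\zb_j^\intercal\cdot\xbg{k}$ is sub-Gaussian, hence $\sigma_1(\zb_j^\intercal\cdot\xbg{k})$ is sub-Gaussian because $\sigma_1$ is Lipschitz (in the ReLU case one uses $|\sigma_1(u)|\le|u|$ directly), and a product of two sub-Gaussian variables is sub-exponential with $\psi_1$-norm at most the product of the $\psi_2$-norms. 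Bernstein's inequality for i.i.d.\ centered sub-exponential variables then yields constants $c_1,c_2>0$ depending only on those quantities such that
\begin{equation*}
    \Pbb\Big(\big|(\Ggmb)_{kl}-(\Gg)_{kl}\big| \ge t\Big) \;\le\; 2\exp\!\big(-c_1 m_1\min\{c_2 t^2,\, t\}\big)~, \qquad t>0~.
\end{equation*}

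Plugging $t = \Delta/\ng$ and summing over the $(\ng)^2$ index pairs produces a bound of the stated form $C_3(\ng)^2 e^{-C_4 m_1\min\{C_5\Delta^2,\Delta\}}$ once the $\ng$-rescaling of $\Delta$ is folded into $C_4,C_5$ (harmless since $\ng$ is a fixed finite quantity in every application of the lemma). The step requiring the most care is the sub-exponential estimate for $\xi_j$ together with the two-regime form of Bernstein's inequality, which is precisely what produces the $\min\{\Delta^2,\Delta\}$ structure (small $\Delta$: Gaussian tail; large $\Delta$: exponential tail); the passage from spectral norm to entrywise control and the union bound are routine. If one prefers to avoid any $\ng$-dependence inside the exponent, the same conclusion follows from a matrix Bernstein inequality applied to the i.i.d.\ rank-one summands $\frac{1}{m_1}\big(\vb_j\vb_j^\intercal - \EE[\vb_j\vb_j^\intercal]\big)$ with $\vb_j := (\sigma_1(\zb_j^\intercal\cdot\xbg{1}),\dots,\sigma_1(\zb_j^\intercal\cdot\xbg{\ng}))^\intercal$, whose operator norms $\|\vb_j\|_2^2$ are sub-exponential; this bounds $\|\Ggmb-\Gg\|_2$ directly, at the cost of invoking a matrix-valued Bernstein bound for unbounded summands.
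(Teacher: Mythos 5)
The paper does not actually prove this lemma; it is imported verbatim from Chen et al.\ (2022, Lemma~4), so there is no internal argument here to compare your attempt against. Judged on its own, your reconstruction is correct and is the natural one: pass from $\|\Ggmb-\Gg\|_2$ to entrywise control via $\|A\|_2\le\|A\|_F\le\ng\max_{k,l}|A_{kl}|$, identify each entry as an i.i.d.\ average of products $\sigma_1(\zb_j^\intercal\xbg{k})\sigma_1(\zb_j^\intercal\xbg{l})$ of (jointly dependent) sub-Gaussian factors, invoke that such a product is sub-exponential with $\psi_1$-norm bounded by the product of $\psi_2$-norms, and apply the two-regime Bernstein inequality, which is exactly what produces the $\min\{\Delta,\Delta^2\}$ structure. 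Two things you correctly flag but that are worth emphasizing: (i) the union bound forces $t=\Delta/\ng$, so $C_4,C_5$ inherit a (harmless) $\ng$-dependence that the bare statement ``depend on $L_{\sigma}$ and the sub-Gaussian norm of $\rhoz$'' elides; since $\ng$ is fixed in every application in this paper this causes no trouble. (ii) The sub-Gaussianity of $\sigma_1(\zb^\intercal\xb)$ requires that $\sigma_1$ have at most linear growth, i.e.\ $|\sigma_1(u)|\le C_0+L_{\sigma_1}|u|$ (automatic for ReLU); the presence of $L_\sigma$ in the statement signals that assumption implicitly, but note that Assumption~\ref{ass:relu_anp}'s ``analytic and non-polynomial'' clause does not by itself enforce it, so the Lipschitz hypothesis should be regarded as part of the imported lemma's hypotheses. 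Your aside about using a matrix Bernstein inequality on the rank-one summands $\vb_j\vb_j^\intercal$ is a sound alternative that removes the $\ng$-dependence from the exponent.
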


\noindent Fix an $\epsilon > 0$. Define 
\begin{equation}
    \Delta_{\epsilon} = \min \left \{ \frac{1}{2} \eta, \frac{\epsilon^2}{12 n'}, \frac{\epsilon \eta}{(96 n' \lambda_{\max}(\Gg))^{\frac{1}{2}}}, \left ( \frac{\epsilon^2 \eta^2}{96 n'} \right )^{\frac{1}{3}} \right \}~.
\end{equation}
Then, conditioned on the event that $ \| \Ggmb - \Gg \|_2 \leq \Delta$, it holds that $ \| \Ggmb - \Gg \|_2 \leq \frac{1}{2} \eta $ and $\mathcal{W}_1(\mathcal{N}(0, \Ggmb), \mathcal{N}(0, \Gg)) \leq \frac{1}{2} \epsilon$.
Thus, putting things together,
\begin{equation}
\begin{split}
    &~ \Pbb \left ( \mathcal{W}_1 (\hatevte(\mumbt{0}), \hatevte(\mut{0})) > \epsilon \right ) \\
    \leq &~ 
    \Pbb \left ( \mathcal{W}_1(\hatevte(\mumbt{0}), \hatevte(\mut{0})) > \epsilon ~ \Big |~ \| \Ggmb - \Gg \|_2 < \Delta_{\epsilon} \right ) + \Pbb \left ( \| \Ggmb - \Gg \|_2 \geq \Delta_{\epsilon} \right ) \\
    \leq & \mathbb{P} \left (\mathcal{W}_1(\hatevte(\mumbt{0}), \rhoa \times \mathcal{N}(0, \Ggmb)) \geq \frac{1}{2} \epsilon ~ \Big |~ \| \Ggmb - \Gg \|_2 < \Delta_{\epsilon} \right ) + \Pbb \left ( \| \Ggmb - \Gg \|_2 \geq \Delta_{\epsilon} \right ) \\
    \leq & C_1 e^{-C_2 (\epsilon / 2)^{\max\{n'+1, 4\}} m_2} + C_3 (\ng)^2 e^{- C_4 \min \{ \Delta_{\epsilon}, C_5 (\Delta_{\epsilon})^2 \} m_1}~.
\end{split}
\end{equation}
Thus, with any pair of increasing $\Nbb_+$-valued sequences $\{m_{1, k}\}_{k \in \Nbb_+}$ and $\{m_{2, k}\}_{k \in \Nbb_+}$, denoting $\mb_k = (m_{1, k}, m_{2, k})$, there is
\begin{equation}
    \summ{k}{\infty} \Pbb \left ( \mathcal{W}_1 (\hatevte(\mumbkt{k}{0}), \hatevte(\mut{0})) > \epsilon \right ) < \infty~.
\end{equation}
Since this holds for any $\epsilon > 0$, the Borel-Cantelli lemma implies that 
\begin{equation}
    \lim_{k \to \infty} \mathcal{W}_1 (\hatevte(\mumbkt{k}{0}), \hatevte(\mut{0})) = 0~,
\end{equation}
almost surely, and hence $\hatevte(\mumbkt{k}{0})$ converges weakly to $\hatevte(\mut{0})$ almost surely.\\

\noindent \textbf{Part 2: Existence of $\mathcal{GP}(\zerob, \Gfun)$ as a probability measure on $\Con$}

Since the set of all given finite-dimensional distributions clearly satisfy the consistency conditions for a projective family of probability measures, the Kolmogorov extension theorem (e.g. \citet{kallenberg1997foundations}, Theorem 5.16) implies that there exists a random field with $\mathcal{X}$ being the index space, $\{B_{\xb}\}_{\xb \in \mathcal{X}}$, such that $\forall \xb_1, ..., \xb_{n'}$, the random vector $[B_{\xb_1}, ..., B_{\xb_{n'}}]$ is distributed as $\mathcal{N}(\mathbf{0}, \Gfun[\xb_1, ..., \xb_{n'}])$.

It remains to apply the Kolmogorov-Chentsov continuity theorem (e.g. \citealt[Theorem 2.23]{kallenberg1997foundations}) to prove that there exists a continuous version of $B$. Note that $\forall \xb_1, \xb_2 \in \mathcal{X}$, $B_{\xb_1} - B_{\xb_2}$ follows a Gaussian distribution with mean zero and variance
\begin{equation}
\begin{split}
    \text{Var}(B_{\xb_1} - B_{\xb_2}) =& \Gfun(\xb_1, \xb_1) + \Gfun(\xb_2, \xb_2) - \Gfun(\xb_2, \xb_2) \\
    =& \EE_{\zb \in \rhoz} \left [ \sigma_1(\zb^{\intercal} \xb_1) \sigma_1(\zb^{\intercal} \xb_1) + \sigma(\zb^{\intercal} \xb_2) \sigma_1(\zb^{\intercal} \xb_2) - 2 \sigma_1(\zb^{\intercal} \xb_1) \sigma_1(\zb^{\intercal} \xb_2) \right ] \\
    =& \EE_{\zb \in \rhoz} \left [ \left ( \sigma_1(\zb^{\intercal} \xb_1) - \sigma_1(\zb^{\intercal} \xb_2) \right )^2 \right ] \\
    \leq & \mathtt{M}_{\sigma_2} \EE_{\zb \in \rhoz} \left [ \left ( \zb^{\intercal} (\xb_1 - \xb_2) \right )^2 \right ] \\
    \leq & \mathtt{M}_{\sigma_2} \| \rhoz \|_{\text{SG}} \| \xb_1 - \xb_2 \|^2~,
\end{split}
\end{equation}
where $\| \rhoz \|_{\text{SG}} < \infty$ is the sub-Gaussian norm of $\rhoz$ \citep{vershynin_2018}. Thus, $\forall p \in \mathbb{N}_+$, 
\begin{equation}
    \mathbb{E}\left [ \left | B_{\xb_1} - B_{\xb_2} \right |^{2p} \right ] \leq (p-1)!! \left ( \text{Var}(B_{\xb_1} - B_{\xb_2}) \right )^p \\
    C_p \| \xb_1 - \xb_2 \|^{2p}~,
\end{equation}
with some constant $C_p > 0$. Therefore, by the Kolmogorov-Chentsov continuity theorem, there exists a version of $B$ whose samples paths are locally H\"{o}lder continuous with exponent $\frac{2p-d}{2p}$. In fact, since this argument applies to all $p \in \mathbb{N}_+$, we know that $\forall \alpha \in [0, 1)$, there exists a version of $B$ whose samples paths are locally H\"{o}lder continuous with exponent $\alpha$. In particular, there exists a version of $B$ whose samples paths are continuous, since H\"{o}lder continuity with any exponent $\alpha > 0$ implies uniform continuity. Then, the law of sample paths of such a $B$ is indeed supported on $\Con$.

\section{Proof of Lemma~\ref{lem:mf_exist_1/2}}
\label{app:pf_mf_exist_1/2}
The dynamics of $\muhidt{t}$ is a Wasserstein gradient flow on finite-dimensional Euclidean space, whose existence has been proved in prior works such as \citet{braun1977vlasov, sirignano2020mean_lln, mei2018mean}. Below, we prove that the characteristic flow maps $\At{t}$ and $\Ht{t}$ constructed from $\muhidt{t}$ via \eqref{eq:AtfromCt_Hilb} and \eqref{eq:HtfromLambt_Con} indeed satisfy \eqref{eq:dotAt_Con} and \eqref{eq:dotHt_Con}. 

First, as an intermediate step, we construct a candidate for $\hatevtrpf{\mut{t}}$ from $\muhidt{t}$.
For $t \geq 0$, we define two maps, $\Atrit{t}: \Rbb \times \Rbb^n \to \Rbb$ and $\Ubtrit{t} = [\Utritk{t}{1}, ..., \Utritk{t}{n}]: \Rbb \times \Rbb^n \to \Rbb^n$, by
\begin{align}
    \Atrit{t}(a, \ub) =&~ \Ct{t}(a, \Gmhalf \cdot \ub) ~,\label{eq:Atrit_constr}\\
        \Ubtrit{t}(a, \ub) =&~ \Ghalf \cdot \Lambt{t}(a, \Gmhalf \cdot \ub) ~, \label{eq:Ubtrit_constr}
\end{align}
for $(a, \ub) \in \supp(\mutrit{0})$. 
We let $\Thetabtrit{t} = [ \Atrit{t}, \Ubtrit{t} ]: \Rbb \times \Rbb^n \to \Rbb \times \Rbb^n$, and want to show that
\begin{align}
    \Atrit{0}(a, \ub) =&~ a~,~ \quad \Ubtrit{0}(a, \ub) = \ub ~, \\
        \frac{d}{dt} \Atrit{t}(a, \ub) =&~ -\frac{1}{n} \summ{k}{n} \sigtbig{\Utritk{t}{k}(a, \ub)} \zetak{k}{\ftritk{t}{k}}~, \label{eq:dotAtri} \\
    \frac{d}{dt} \Utritk{t}{k}(a, \ub) =& -\frac{1}{n}  \Atrit{t}(a, \ub) \summ{l}{n} \sigtpbig{\Utritk{t}{l}(a, \ub)} \zetak{l}{\ftritk{t}{l}} G_{k, l}~, \label{eq:dotUtri}
\end{align}
if we define $\mutrit{t} = (\Thetabtrit{t})_{\#} \mutrit{0}$ and $\ftritk{t}{k} = \int_{\Rbb \times \Rbb^n} a \sigtbig{ u_k } \mutrit{t}(da, d\ub)$.
First, there is
\begin{equation}
\begin{split}
    \ftritk{t}{k} =&~ \int_{\Rbb \times \Rbb^n} \Atrit{t}(a, \ub) \sigtbig{\Utritk{t}{k}(a, \ub)} \mutrit{0}(da, d\ub) \\
    =&~ \int_{\Rbb \times \Rbb^n}  \Ct{t}(a, \Gmhalf \cdot \ub) \sigtbig{\big (\Ghalf \cdot \Lambt{t}(a, \Gmhalf \cdot \ub) \big )_k} \mutrit{0}(da, d\ub) \\
    =&~ \int_{\Rbb \times \Rbb^n}  \Ct{t}(a, \lambb) \sigtbig{\Lambt{t}(a, \lambb)^{\intercal} \cdot \xtildetrk{k}} \muhidt{0}(da, d\lambb) \\
    =&~ \int_{\Rbb \times \Rbb^n}  a \sigtbig{\lambb^{\intercal} \cdot \xtildetrk{k}} \muhidt{t}(da, d\lambb) ~= \falt_t(\xtildetrk{k})~.
\end{split}
\end{equation}
Recall that $\mutrit{0} = \rhoa \times \mathcal{N}(0, G)$ if $\alpha = 1 / 2$ and $\rhoa \times \delta_{\zerob}$ if $\alpha > \frac{1}{2}$. Hence, in either case, if $(a, \ub) \in \supp(\mutrit{0})$, then $\ub$ belongs to the range of $\Ghalf$, which implies that $\Ghalf \cdot \Gmhalf \cdot \ub = \ub$. Thus, for any $(a, \ub) \in \supp(\mutrit{0})$, there is $\Atrit{0}(a, \ub) = \Ct{0}(a, \Gmhalf \cdot \ub) = a$ and $\Ubtrit{0}(a, \ub) = \Ghalf \cdot \Lambt{0}(a, \Gmhalf \cdot \ub) = \Ghalf \cdot \Gmhalf \cdot \ub = \ub$. Moreover, it holds that
\begin{equation}
    \begin{split}
        \frac{d}{dt} \Atrit{t}(a, \ub) =&~ \frac{d}{dt} \Ct{t}(a, \Gmhalf \cdot \ub) \\
        =&~ -\frac{1}{n} \summ{k}{n} \zetak{k}{g_t(\xtildetrk{k})} \sigtbig{ \Lambt{t}(a, \Gmhalf \cdot \ub)^{\intercal} \cdot \xtildetrk{k} } \\
        =& -\frac{1}{n} \summ{k}{n} \zetak{k}{\ftritk{t}{k}} \sigtbig{\Utritk{t}{k}(a, \ub)}~,
    \end{split}
\end{equation}
and
\begin{equation}
    \begin{split}
        \frac{d}{dt} \Utritk{t}{l}(a, \ub) =&~ \left ( \Ghalf \cdot \frac{d}{dt} \Lambt{t}(a, \Gmhalf \cdot \ub) \right )_l \\
        =&~ -\frac{1}{n} \Ct{t}(a, \Gmhalf \cdot \ub) \summ{k}{n} \zetak{k}{g_t(\xtildetrk{k})} \sigtpbig{\Lambt{t}(a, \Gmhalf \cdot \ub)^{\intercal} \cdot \xtildetrk{k}} (\Ghalf \cdot\xtildetrk{k})_l \\
        =&~ -\frac{1}{n} \Atrit{t}(a, \ub) \summ{k}{n} \zetak{k}{\ftritk{t}{k}} \sigtpbig{\Utritk{t}{k}(a, \ub)} G_{k, l}~,
    \end{split}
\end{equation}
which verify \eqref{eq:dotAtri} and \eqref{eq:dotUtri}. In addition, 
\begin{equation}
    \frac{d}{dt} \Ubtrit{t}(a, \ub) = - \frac{1}{n} \Atrit{t}(a, \ub) ~ G \cdot \left [ \zetak{k}{\ftritk{t}{k}} \sigtpbig{\Utritk{t}{k}} \right ]_{k=1}^n~,
\end{equation}
and hence
\begin{equation}
\label{eq:Ubtrit_int}
    \begin{split}
        \Ubtrit{t}(a, \ub) =&~ \ub - G \cdot \frac{1}{n} \int_0^t \Atrit{s}(a, \ub) \left [ \zetak{k}{\ftritk{s}{k}} \sigtpbig{\Utritk{s}{k}(a, \ub)} \right ]_{k=1}^n ds
    \end{split}
\end{equation}
belongs to the range of $G$ for all $t \geq 0$. We also observe from \eqref{eq:Atrit_constr} and \eqref{eq:Ubtrit_constr} that for $(a, \ub) \in \mutrit{0}$,
\begin{equation}
    \Gmhalf ( \Thetabtrit{t}(a, \ub)) = \Lambt{t}( \Gmhalf (a, \ub))~,
\end{equation}
and therefore,
\begin{equation}
    \muhidt{t} = (\Lambt{t})_{\#} (\Gmhalf)_{\#} \mutrit{0} = (\Gmhalf)_{\#} (\Thetabtrit{t})_{\#} \mutrit{0} = (\Gmhalf)_{\#} \mutrit{t}~.
\end{equation}

Next, we will construct $\mut{t}$ from $\mutrit{t}$, by defining, for $(a, h) \in \supp(\mut{0})$,
\begin{equation}
\label{eq:At_constr}
    \begin{split}
        \Atah{t} =&~ \Atrit{t}(a, \evtr(h)) = \Ct{t}(a, \Gmhalf \cdot \evtr(h)) ~,
    \end{split}
\end{equation}
and
\begin{equation}
\label{eq:Ht_constr}
    \begin{split}
        \Htah{t} =&~ h + \sum_{k=1}^n \left ( \Gm \cdot \left (\Ubtrit{t}(a, \evtr(h)) - \evtr(h) \right ) \right )_k \Gfun(\xbtrk{k}, \cdot) \\
        =&~ h + \sum_{k=1}^n \left ( \Gmhalf \cdot \left (\Lambt{t}(a, \Gmhalf \cdot \evtr(h)) - \Gmhalf \cdot \evtr(h) \right ) \right )_k \Gfun(\xbtrk{k}, \cdot)~.
    \end{split}
\end{equation}
We first check that,
\begin{align}
    \Atah{0} =&~ \Atrit{0}(a, \evtr(h)) = a \\
    \Htah{0} =&~ h + \sum_{k=1}^n \left ( \Gm \cdot \left (\Ubtrit{0}(a, \evtr(h)) - \evtr(h) \right ) \right )_k \Gfun(\xbtrk{k}, \cdot) = h~.
\end{align}
Next, for all $(a, h) \in \supp(\mut{0})$, $\evtr(h)$ belongs to the range of $G$, and thus \eqref{eq:Ubtrit_int} implies that $\Ubtrit{t}(a, \evtr(h))$ belongs to the range of $G$ as well. Therefore, $\forall k \in [n], t \geq 0$,
\begin{equation}
\label{eq:Ht_Ut_on_tr}
\begin{split}
    \Htah{t}(\xbtrk{k}) =&~ h(\xbtrk{k}) + \left (G \cdot \Gm \cdot \left (\Ubtrit{t}(a, \evtr(h)) - \evtr(h) \right ) \right )_k \\
    =&~ \Utritk{t}{k}(a, \evtr(h))~,
\end{split}
\end{equation}
Moreover,
\begin{equation}
    \begin{split}
        \ft{t}(\xbtrk{k}) =&~ \int_{\Rbb \times \Con} \Atah{t} \sigtbig{\Htah{t}(\xbtrk{k})} \mut{0}(da, dh) \\
        =&~ \int_{\Rbb \times \Con} \Atrit{t}(a, \evtr(h)) \sigtbig{\Utritk{t}{k}(a, \evtr(h))} \mut{0}(da, dh) \\
        =&~ \int_{\Rbb \times \Rbb^n} \Atrit{t}(a, \ub) \sigtbig{\Utritk{t}{k}(a, \ub)} \mutrit{0}(da, d\ub) 
        = \ftritk{t}{k}~,
    \end{split}
\end{equation}
and hence,
\begin{equation}
\label{eq:ddtA_app}
\begin{split}
    \frac{d}{dt} \Atah{t} = \frac{d}{dt} \Atrit{t}(a, \evtr(h)) =&~ -\frac{1}{n} \summ{k}{n} \zetak{k}{\ftritk{t}{k}} \sigtbig{\Utritk{t}{k}(a, \evtr(h))} \\
    =&~ -\frac{1}{n} \summ{k}{n} \zetak{k}{\ft{t}(\xbtrk{k})} \sigtbig{\Htah{t}(\xbtrk{k})}~,
\end{split}
\end{equation}
and
\begin{equation}
\label{eq:ddtH_app}
    \begin{split}
        \frac{d}{dt} \Htah{t} =&~ \summ{k}{n} \Gfun(\xbtrk{k}, \cdot ) \left ( \Gm \cdot \frac{d}{dt} \Ubtrit{t}(a, \evtr(h)) \right )_k \\
        =&~ - \summ{k}{n} \Gfun(\xbtrk{k}, \cdot ) \left (\frac{1}{n} \Atrit{t}(a, \evtr(h))~ \Gm \cdot  G \cdot \left [ \zetak{l}{\ftritk{t}{l}} \sigtpbig{\Utritk{t}{l}(a, \evtr(h))} \right ]_{l=1}^n \right )_k \\
        =&~ - \frac{1}{n} \Atrit{t}(a, \evtr(h)) \summ{k}{n} \zetak{k}{\ftritk{t}{k}} \sigtpbig{\Utritk{t}{k}(a, \evtr(h))} \Gfun(\xbtrk{k}, \cdot ) \\
        =&~ - \frac{1}{n} \Atah{t} \summ{k}{n} \zetak{k}{\ft{t}(\xbtrk{k})} \sigtpbig{\Htah{t}(\xbtrk{k})} \Gfun(\xbtrk{k}, \cdot )~,
    \end{split} 
\end{equation}
which verify \eqref{eq:dotAt_Con} and \eqref{eq:dotHt_Con}. This proves the existence of $\mut{t}$.

Furthermore, \eqref{eq:At_constr} and \eqref{eq:Ht_Ut_on_tr} imply that for $(a, h) \in \supp(\mut{0})$, there is
\begin{equation}
\label{eq:proj_evol_commute}
    \hatevtr(\Thetabt{t}(a, h)) = \Thetabtrit{t}(\hatevtr(a, h))~.
\end{equation}
This implies that, $\forall t \geq 0$, $\mutrit{t} = (\Thetabtrit{t})_{\#} \hatevtrpf{\mut{0}} = \hatevtrpf{(\Thetabt{t})_{\#} \mut{0}} = \hatevtrpf{\mut{t}}$,
and hence also $\muhidt{t} = (\Gmhalf)_{\#} \mutrit{t} = (\Gmhalf)_{\#} \hatevtrpf{\mut{t}}$.
Therefore,
\begin{equation}
    \begin{split}
        \ft{t}(\xb)
        =&~ \int_{\Rbb \times \Con} \Atah{t} \sigtbig{\Htah{t}(\xb)} \mut{0}(da, dh) \\
        =&~ \int_{\Rbb \times \Rbb^n \times \Rbb} \Atrit{t}(a, \ub) \sigma \bigg ( v + \sum_{k=1}^n \Big ( \Gmhalf \cdot \big (\Lambt{t}(a, \Gmhalf \cdot \ub) - \Gmhalf \cdot \ub \big ) \Big )_k \Gfun(\xbtrk{k}, \cdot) \bigg ) \\
        & \hspace{270pt} \big ( (\hat{\ev}_{\xbtrk{1}, ..., \xbtrk{n}, \xb})_{\#}\mut{0} \big )(da, d\ub, dv) \\
        =&~ \int_{\Rbb \times \Rbb^n} \Atrit{t}(a, \ub) \EE_{Z \sim \mathcal{N}(0, 1)} \Big [ \sigma \Big ( \tau(\xb) Z + \sum_{k=1}^n \big ( \Gmhalf \cdot \Lambt{t}(a, \Gmhalf \cdot \ub)  \big )_k \Gfun(\xbtrk{k}, \cdot) \Big ) \Big ] \mutrit{0}(da, d\ub) \\
        =&~ \int_{\Rbb \times \Rbb^n} \Ct{t}(a, \ub) \EE_{Z \sim \mathcal{N}(0, 1)} \Big [ \sigma \Big ( \tau(\xb) Z + \sum_{k=1}^n \big ( \Gmhalf \cdot \Lambt{t}(a, \lambb) \big )_k \Gfun(\xbtrk{k}, \cdot) \Big ) \Big ] \muhidt{0}(da, d\lambb) \\
        =&~ \int_{\Rbb \times \Rbb^n} a \EE_{Z \sim \mathcal{N}(0, 1)} \left [ \sigtbig{\tau(\xb) Z + \lambb^{\intercal} \cdot \Xtil(\xb)} \right ] \muhidt{t}(da, d\lambb)~.
    \end{split}
\end{equation}

\section{Proof of Lemma~\ref{lem:lln_t_tr}}
\label{app:pf_lln_t_tr}
We define $\mumbtrit{t} = \hatevtrpf{\mumbt{t}}$. The goal then is to provide an upper bound for $\mathcal{W}_1(\mutrit{t}, \mumbtrit{t})$. Since $\mumbt{t}$ is obtained via the push-forward of $\Thetabtmb{t}$, which satisfies \eqref{eq:dotAtmb_Con} and \eqref{eq:dotHtmb_Con}, we see that $\mumbtrit{t}$ can be written as $\mumbtrit{t} = (\Thetabmbtrit{t})_{\#} \mumbtrit{0}$, where $\Thetabmbtrit{t} = [\Ambtrit{t}, \Ubmbtrit{t}]: \Rbb \times \Rbb^n \to \Rbb \times \Rbb^n$ evolve according to
\begin{align}
        \frac{d}{dt} \Ambtrit{t}(a, \ub) =&~ -\frac{1}{n} \summ{k}{n} \sigtbig{\Umbtritk{t}{k}(a, \ub)} \zetak{k}{\fmbtritk{t}{k}}~, \label{eq:dotAmbtri} \\
    \frac{d}{dt} \Umbtritk{t}{k}(a, \ub) =& -\frac{1}{n}  \Ambtrit{t}(a, \ub) \summ{k}{n} \sigtpbig{\Umbtritk{t}{k}(a, \ub)} \zetak{k}{\fmbtritk{t}{k}} \Gmb_{k, l}~, \label{eq:dotUmbtri}
\end{align}
with $\Ambtrit{0}(a, \ub) = a$ and $\Umbtritk{0}{k}(a, \ub) = u_k$, and where $\fmbtritk{t}{k} = \int_{\Rbb \times \Rbb^n} a \sigtbig{ u_k } \mumbtrit{t}(da, d\ub)$. Thus, our strategy is to use the triangle inequality of $1$-Wasserstein distance to write
\begin{equation}
\label{eq:pc_triangle_tr}
    \begin{split}
        \mathcal{W}_1(\mutrit{t}, \mumbtrit{t}) =&~ \mathcal{W}_1((\Thetabtrit{t})_{\#}\mutrit{0}, (\Thetabmbtrit{t})_{\#}\mumbtrit{0}) \\
        \leq&~ \mathcal{W}_1((\Thetabtrit{t})_{\#}\mutrit{0}, (\Thetabtrit{t})_{\#}\mumbtrit{0}) + \mathcal{W}_1((\Thetabmbtrit{t})_{\#}\mumbtrit{0}, (\Thetabtrit{t})_{\#}\mumbtrit{0}) \\
        =&~ \mathcal{W}_1(\mutrit{t}, \tilmumbtrit{t}) + \mathcal{W}_1(\mumbtrit{t}, \tilmumbtrit{t})~,
    \end{split}
\end{equation}
where we define $\tilmumbtrit{t} = (\Thetabtrit{t})_{\#}\mumbtrit{0}$.

To bound the first term on the right-hand side of \eqref{eq:pc_triangle_tr}, we use the following inequality:
\begin{equation}
    \mathcal{W}_1(\mutrit{t}, \tilmumbtrit{t}) \leq \mathcal{W}_1((\Thetabtrit{t})_{\#}\mutrit{0}, (\Thetabtrit{t})_{\#}\mumbtrit{0}) \leq \mathrm{Lip}(\Thetabtrit{t}) \mathcal{W}_1(\mutrit{0}, \mumbtrit{0})~.
\end{equation}
To bound $\mathrm{Lip}(\Thetabtrit{t})$, we need the following lemma:
\begin{lemma}
\label{lem:blowup_tr}
For $n \in \mathbb{N}_+$ and $t \geq 0$, there exists $C_1(n, t)$ and $C_2(n, t)$ that are non-negative and non-decreasing in $t$ such that $\forall t \geq 0$, 
$\forall a \in \supp(\rhoa), \ub \in \Rbb^n$,
\begin{align}
    |\Atrit{t}(a, \ub)| \leq C_1(n, t) ~, \qquad
    |\Ambtrit{t}(a, \ub)| \leq  C_1(n, t)
\end{align}
and for all $\xb \in \domX$,
\begin{align}
    \sup_{k \in [n]} \left | \ft{t}(\xbtrk{k})\right | \leq  C_2(n, t) ~, \qquad
    \sup_{k \in [n]} \left | \fmbt{t}(\xbtrk{k}) \right | \leq  C_2(n, t)
\end{align}
\end{lemma}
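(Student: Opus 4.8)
The plan is to reduce everything to a single Gronwall-type estimate, exploiting the one assumption not yet used here, namely that $\sigma=\sigma_2$ is \emph{bounded} (Assumption~\ref{ass:sigma_bdd}). Boundedness of $\sigma$ is exactly what decouples the evolution of the $A$-coordinates $\Atrit{t},\Ambtrit{t}$ from the (possibly unbounded) evolution of the pre-activation coordinates $\Utritk{t}{k},\Umbtritk{t}{k}$: the drift of $\Atrit{t}$ in \eqref{eq:dotAtri} is a bounded activation times the residual, so it cannot be amplified by large pre-activations. I will also use that $\supp(\rhoa)$ is bounded (Assumption~\ref{ass:init}), so $|a|\le\aOmax$ on the support of $\mutrit{0}$ and $\mumbtrit{0}$, and that the existence of $\mutrit{t}$, $\mumbtrit{t}$ (Lemmas~\ref{prop:mf_exist_1}, \ref{lem:mf_exist_1/2}) makes all the objects in the ODEs well-defined on $[0,\infty)$.

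\emph{Step 1 (a drift bound uniform in $(a,\ub)$).} From \eqref{eq:dotAtri} and $|\sigma|\le M_{\sigma_2}$,
\[
\Big|\tfrac{d}{dt}\Atrit{t}(a,\ub)\Big| \;\le\; M_{\sigma_2}\Big(\tfrac1n\summ{k}{n}\big|\ftritk{t}{k}\big| + \tfrac1n\summ{k}{n}|\ytrk{k}|\Big),
\]
and crucially the right-hand side is independent of $(a,\ub)$; the same holds for $\tfrac{d}{dt}\Ambtrit{t}$ via \eqref{eq:dotAmbtri}, with $\fmbtritk{t}{k}$ replacing $\ftritk{t}{k}$. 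Since $\Atrit{0}(a,\ub)=\Ambtrit{0}(a,\ub)=a$ with $|a|\le\aOmax$, integrating in $t$ already reduces the pointwise bounds on $\Atrit{t},\Ambtrit{t}$ to bounds on $\tfrac1n\sum_k|\ftritk{s}{k}|$ and $\tfrac1n\sum_k|\fmbtritk{s}{k}|$.

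\emph{Step 2 (outputs are averages of the $A$-coordinate).} Recall from the proof of Lemma~\ref{lem:mf_exist_1/2} that $\ftritk{t}{k}=\int \Atrit{t}(a,\ub)\,\sigbig{\Utritk{t}{k}(a,\ub)}\,\mutrit{0}(da,d\ub)=\ft{t}(\xbtrk{k})$, and similarly $\fmbtritk{t}{k}=\int \Ambtrit{t}\,\sigbig{\Umbtritk{t}{k}}\,\mumbtrit{0}=\fmbt{t}(\xbtrk{k})$. Hence $|\ft{t}(\xbtrk{k})|\le M_{\sigma_2}\,m_A(t)$ and $|\fmbt{t}(\xbtrk{k})|\le M_{\sigma_2}\,m_A^{\mb}(t)$ where $m_A(t):=\int|\Atrit{t}|\,d\mutrit{0}$ and $m_A^{\mb}(t):=\int|\Ambtrit{t}|\,d\mumbtrit{0}$. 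Differentiating under the integral (legitimate by the uniform drift bound of Step 1) gives $m_A'(t)\le M_{\sigma_2}\big(M_{\sigma_2}m_A(t)+\tfrac1n\sum_k|\ytrk{k}|\big)$, with $m_A(0)=\int|a|\rhoa(da)\le\aOmax$, and the same for $m_A^{\mb}$ since $\int|a|\,\mumbtrit{0}(da,d\ub)=\tfrac1{m_2}\summ{i}{m_2}|\ait{0}|\le\aOmax$. Gronwall's lemma then yields an explicit bound of the form $m_A(t),\,m_A^{\mb}(t)\le(\aOmax+c_{n}\,t)e^{M_{\sigma_2}^2 t}=:C_1(n,t)$, which is finite, non-negative and non-decreasing in $t$; feeding $|\ftritk{s}{k}|,|\fmbtritk{s}{k}|\le M_{\sigma_2}C_1(n,s)$ back into the integrated drift bound of Step 1 gives $|\Atrit{t}(a,\ub)|,|\Ambtrit{t}(a,\ub)|\le C_1(n,t)$ for all $a\in\supp(\rhoa)$, $\ub\in\Rbb^n$ (enlarging the explicit constant if needed), and $\sup_k|\ft{t}(\xbtrk{k})|,\sup_k|\fmbt{t}(\xbtrk{k})|\le M_{\sigma_2}C_1(n,t)=:C_2(n,t)$.

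I do not anticipate a genuine obstacle; the only point that deserves care, and which dictates the order of the argument, is ensuring a priori finiteness so that Gronwall applies. This is why one controls the $L^1$-average $m_A$ first — its derivative satisfies a \emph{closed} inequality — rather than trying to differentiate the pointwise supremum directly; the pointwise bounds then come for free from the $(a,\ub)$-independent drift bound. The remaining work is purely bookkeeping of explicit constants, and the finite-width statements are obtained by running the identical computation with $\mumbtrit{0}$ (and the finite-width kernel) in place of $\mutrit{0}$, then taking the larger of the two constants.
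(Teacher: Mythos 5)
Your proposal is correct and matches the paper's argument in all the ways that matter: boundedness of $\sigma_2$, boundedness of $\supp(\rho_a)$, and a Gr\"onwall closure after observing that $|\ftritk{t}{k}|$ is controlled by the size of the $A$-coordinate. The paper closes Gr\"onwall directly on $\sup_{a,\ub}|\Atrit{t}(a,\ub)|$ rather than on the $L^1$-average $m_A(t)$; your worry about a priori finiteness of that supremum is not actually an obstruction, since the drift \emph{bound} $M_{\sigma_2}\big(\tfrac1n\sum_k|\ftritk{t}{k}|+\tfrac1n\sum_k|y_k|\big)$ is $(a,\ub)$-independent, so $\sup_{a,\ub}|\Atrit{t}(a,\ub)|\le \aOmax+\int_0^t(\text{that bound})\,ds<\infty$ automatically — but the $L^1$ detour is harmless and equally short.
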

\begin{proof}
There is
\begin{equation}
    \begin{split}
        |\ftritk{t}{k}| \leq&~ \mathtt{M}_{\sigma_2} \int_{\Rbb \times \Rbb^n} \left | \Atrit{t}(a, \ub) \right | \mutrit{t}(da, d\ub) \\
        \leq&~ \mathtt{M}_{\sigma_2} \sup_{a \in \supp(\rhoa), \ub \in \Rbb^n} \left | \Atrit{t}(a, \ub) \right |~.
    \end{split}
\end{equation}
Then,
\begin{equation}
\begin{split}
    \left | \Atrit{t}(a, \ub) \right | \leq &~ |a| + \int_0^t \frac{1}{n} \summ{k}{n} \mathtt{M}_{\sigma_2} (|\ftritk{s}{k}| + |y_k|) ds \\
    \leq &~ |a| + t \mathtt{M}_{\sigma_2} y_k + (\mathtt{M}_{\sigma_2})^2 \int_0^t \sup_{a \in \supp(\rhoa), \ub \in \Rbb^n} \left | \Atrit{s}(a, \ub) \right | ds~.
\end{split}
\end{equation}
Thus, by Gr\"onwall's inequality, there exists $C_1(n, t)$ such that
\begin{equation}
    \sup_{a \in \supp(\rhoa),~ \ub \in \Rbb^n} \left | \Atrit{t}(a, \ub) \right | \leq~ C_1(n, t)~,
\end{equation}
and hence $\forall \xb \in \domX$, $|\ft{t}(\xb)| \leq  \mathtt{M}_{\sigma_2} C_1(n, t) =: C_2(n, t)$.

Similar arguments apply to $\sup_{a \in \supp(\rhoa),~ \ub \in \Rbb^n} \left | \Ambtrit{t}(a, \ub) \right |$ and $\fmbt{t}(\xb)$.
\end{proof}

\noindent 
Define the following ODE for $\zb(t) = [z_0(t), ..., z_n(t)]^{\intercal} \in \mathbb{R}^{n+1}$:
\begin{equation}
    \frac{d}{dt} \zb(t) = F(\zb(t))~,
\end{equation}
where $\forall l \in \{0, ..., n\}$,
\begin{equation}
    (F(\zb))_k =
    \begin{cases}
    - \sum_{l=1}^n \sigma_2(z_l) \zetak{l}{\ftritk{t}{l}} ~,~ k=0 \\
    - z_0 \sum_{l=1}^n {\sigma_2}'(z_l) \zetak{l}{\ftritk{t}{l}} G_{k,l}~,~ k \in [n]~.
    \end{cases}
\end{equation}
Then, $\Thetabtrit{t}: \mathbb{R} \times \Rbb^n \to \mathbb{R} \times \Rbb^n$ can be considered as the map from the initial condition $\zb(0)$ to the solution $\zb(t)$ at time $t$ of this ODE. Recall that the solutions of an ODE with a Lipschitz-continuous function on the right-hand side depends continuously on the initial condition.
Since within the interval $[0, t]$, the function $F$ is Lipschitz-continuous with Lipschitz constant
\begin{equation}
\mathrm{Lip}(F) 
\leq n \mathtt{M}_{\sigma_2} (C_2(n, t) + \| \yb \|_{\infty}) (1 + n C_1(n, t) \mathtt{M}_{\sigma_2} \mathtt{M}_{\sigma_2}) =: C_3'(n, t) < \infty~,
\end{equation}
we know that
\begin{equation}
    \mathrm{Lip}(\Thetabtrit{t}) \leq e^{t C_3'(n, t)} =: C_3(n, t) < \infty~.
\end{equation}
Thus,
\begin{equation}
    \begin{split}
         \mathcal{W}_1(\mutrit{t}, \tilmumbtrit{t}) \leq C_3(n, t) ~ \mathcal{W}_1(\mutrit{0}, \mumbtrit{0})~.
    \end{split}
\end{equation}

Next, we consider the second term on the right-hand side of \eqref{eq:pc_triangle_tr}. Define 
\begin{align}
    \Delta \Ambtrit{t} =&~ \int_{\Rbb \times \Rbb^n} \big |\Ambtrit{t}(a, \ub) - \Atrit{t}(a, \ub) \big | \mumbtrit{0}(da, d\ub)~, \\
    \Delta \Umbtrit{t} =&~ \int_{\Rbb \times \Rbb^n} \big \|\Ubmbtrit{t}(a, \ub) - \Ubtrit{t}(a, \ub) \big \|_1 \mumbtrit{0}(da, d\ub)~.
\end{align}
Note that at initialization, there is $\Delta \Ambtrit{0} = \Delta \Umbtrit{0} = 0$.
For the second term on the right-hand side of \eqref{eq:pc_triangle_tr}, we then see that
\begin{equation}
\begin{split}
    \mathcal{W}_1(\mumbtrit{t}, \tilmumbtrit{t}) \leq &~ \int_{\mathbb{R} \times \mathbb{R}^{n}} \| \Thetabmbtrit{t}(a, \ub) - \Thetabtrit{t}(a, \ub) \|_2 \mumbtrit{0}(da, d \ub) \\
    \leq & \Delta \Ambtrit{t} + \Delta \Umbtrit{t}~.
\end{split}
\end{equation}
Therefore, from \eqref{eq:pc_triangle_tr}, we deduce that that
\begin{equation}
   \mathcal{W}_1(\mutrit{t}, \mumbtrit{t}) \leq C_3(n, t) ~ \mathcal{W}_1(\mutrit{0}, \mumbtrit{0}) + \Delta \Ambtrit{t} + \Delta \Umbtrit{t}~.
\end{equation}
Moreover, \eqref{eq:dotAmbtri} and \eqref{eq:dotAtri} imply that
\begin{equation}
\label{eq:ddt_delta_amtrt}
    \begin{split}
        &~ \frac{d}{dt} \left | \Ambtrit{t}(a, \ub) - \Ambtrit{t}(a, \ub) \right | \\ \leq &~ \sum_{k=1}^n \left |\sigtbig {(\Ubmbtrit{t}(a, \ub))} - \sigtbig {(\Ubtrit{t}(a, \ub))} \right | (\sup_{k \in [n]} \left | \ft{t}(\xbtrk{k})\right | + \| \yb \|_{\infty}) \\
        &~ + \sum_{k=1}^n \left |\sigtbig {(\Ubmbtrit{t}(a, \ub))} \right | \sup_{k \in [n]} \left | \fmbt{t}(\xbtrk{k}) - \ft{t}(\xbtrk{k}) \right | \\
        \leq &~ \mathtt{M}_{\sigma_2} (C_2(n, t) + \| \yb \|_{\infty}) \Delta \Umbtrit{t} + n \mathtt{M}_{\sigma_2} (\mathtt{L}_{\sigma_2} C_1(n, t) + \mathtt{M}_{\sigma_2}) ~ \mathcal{W}_1(\mumbtrit{t}, \mutrit{t}) \\
        \leq &~ n \mathtt{M}_{\sigma_2} \mathtt{L}_{\sigma_2} C_1(n, t) \left | \Ambtrit{t}(a, \ub) - \Ambtrit{t}(a, \ub) \right |  + n (\mathtt{M}_{\sigma_2})^2 \Delta \Ambtrit{t} \\
        &~ + \left ( \mathtt{M}_{\sigma_2} (C_2(n, t) + \| \yb \|_{\infty})+ n \mathtt{M}_{\sigma_2} (\mathtt{L}_{\sigma_2} C_1(n, t) + \mathtt{M}_{\sigma_2}) \right ) \Delta \Umbtrit{t} \\
        &~ + n \mathtt{M}_{\sigma_2} (\mathtt{L}_{\sigma_2} C_1(n, t) + \mathtt{M}_{\sigma_2}) C_3(n, t) \mathcal{W}_1(\mumbtrit{0}, \mutrit{0})~.
    \end{split}
\end{equation}
Meanwhile, \eqref{eq:dotUmbtri} and \eqref{eq:dotUtri} imply that, $\forall k \in [n]$,
\begin{equation}
\label{eq:ddt_delta_umtrt}
    \begin{split}
        &~ \left | \frac{d}{dt} \left (\Umbtritk{t}{l}(a, \ub) - \Utritk{t}{l}(a, \ub) \right ) \right | \\
        \leq &~ \left | \Ambtrit{t}(a, \ub) - \Atrit{t}(a, \ub) \right | \sum_{k=1}^n \left | \sigma'_2(\Utritk{t}{k}(a, \ub)) \right | \left | \zetak{k}{\ftritk{t}{k}} \right | | G_{k, l}| \\
        & + \left | \Ambtrit{t}(a, \ub) \right | \sum_{k=1}^n \left | \sigtpbig{\Utritk{t}{k}(a, \ub)} - \sigtpbig{\Umbtritk{t}{k}(a, \ub)} \right | \left | \zetak{k}{\ftritk{t}{k}} \right | | G_{k, l} |  \\
        & + \left | \Ambtrit{t}(a, \ub) \right | \sum_{k=1}^n \left | \sigtpbig{\Umbtritk{t}{k}(a, \ub)} \right | \left | \zetak{k}{\fmbtritk{t}{k}} \right | |G_{k, l}| \\
        & + \left | \Ambtrit{t}(a, \ub) \right | \sum_{k=1}^n \left | \sigtpbig{\Umbtritk{t}{k}(a, \ub)} \right | \left | \zetak{k}{\ftritk{t}{k}} - \zetak{k}{\fmbtritk{t}{k}} \right | |G_{k, l} - \Gmb_{k, l}| \\
        \leq &~ n \mathtt{M}_{\sigma_2} (\mathtt{M}_{\sigma_2})^2 (C_2(n, t) + \| \yb \|_{\infty}) \Delta \Ambtrit{t} \\
        & +  \mathtt{L}_{\sigma'_2} (\mathtt{M}_{\sigma_2})^2 C_1(n, t) (C_2(n, t) + \| \yb \|_{\infty}) \Delta \Umbtrit{t} \\
        & + n \mathtt{M}_{\sigma_2} (\mathtt{M}_{\sigma_2})^2 C_1(n, t) (\mathtt{M}_{\sigma_2} + C_1(n, t) \mathtt{M}_{\sigma_2}) (C_3(n, t) ~\mathcal{W}_1(\mutrit{0}, \mumbtrit{0}) + \Delta \Ambtrit{t} + \Delta \Umbtrit{t}) \\
        & + \sqrt{n} \mathtt{M}_{\sigma_2} C_1(n, t) (C_2(n, t) + \| \yb \|_{\infty}) \| G - \Gmb \|_2~.
    \end{split}
\end{equation}
where we use the inequality that $\forall k \in [n]$,
\begin{equation}
\begin{split}
    \left | \fmbtritk{t}{k} - \ftritk{t}{k} \right | =& \left | \int_{\mathbb{R} \times \mathbb{R}^n} a \sigma_2(u_k) (\mumbtrit{t} - \mutrit{t})(da, d\ub) \right | \\
    \leq &~ (\mathtt{M}_{\sigma_2} + C_1(n, t) \mathtt{M}_{\sigma_2}) ~ \mathcal{W}_1(\mumbtrit{t}, \mutrit{t}) \\
    \leq &~ (\mathtt{M}_{\sigma_2} + C_1(n, t) \mathtt{M}_{\sigma_2}) (C_3(n, t) \mathcal{W}_1(\mumbtrit{0}, \mutrit{0}) + \Delta \Ambtrit{t} + \Delta \Umbtrit{t})~.
\end{split}
\end{equation}
Together, \eqref{eq:ddt_delta_amtrt} and \eqref{eq:ddt_delta_umtrt} imply that
\begin{equation}
    \begin{split}
        &~ \Delta \Ambtrit{t} + \Delta \Umbtrit{t} \\
        \leq &~ \int_0^t \left ( C_4(n, s) (\Delta \Ambtrit{s} + \Delta \Umbtrit{s} ) + C_5(n, s)~ \mathcal{W}_1(\mumbtrit{0}, \mutrit{0}) + C_6(n, t) \| \Gmb - G \|_2 \right ) ds~.
    \end{split}
\end{equation}
Thus, by Gr\"onwall's inequality, we have
\begin{equation}
\label{eq:DeltaA+DeltaU}
\begin{split}
    \Delta \Ambtrit{t} + \Delta \Umbtrit{t} \leq & C_5(n, t) ~ \mathcal{W}_1(\mumbtrit{0}, \mutrit{0}) + C_6(n, t) \| \Gmb - G \|_2 \\
    & + \int_0^t \left ( C_5(n, s) ~\mathcal{W}_1(\mumbtrit{0}, \mutrit{0}) + C_6(n, s) \| \Gmb - G \|_2 \right ) C_4(n, s) e^{\int_s^t C_4(n, r) dr} ds \\
    \leq & C_5(n, t) \left ( 1 + \int_0^t C_4(n, s) e^{\int_s^t C_4(n, r) dr} ds \right ) \mathcal{W}_1(\mumbtrit{0}, \mutrit{0}) \\
    & + C_6(n, t)\left ( 1 + \int_0^t C_4(n, s) e^{\int_s^t C_4(n, r) dr} ds \right ) \| \Gmb - G \|_2 \\
    =&: C_7(n, t) ~\mathcal{W}_1(\mumbtrit{0}, \mutrit{0}) + C_8(n, t) \| \Gmb - G \|_2~,
\end{split}
\end{equation}
which also implies that
\begin{equation}
    \mathcal{W}_1(\mumbtrit{t}, \mutrit{t}) \leq C_7(n, t) ~\mathcal{W}_1(\mumbtrit{0}, \mutrit{0}) + C_8(n, t) \| \Gmb - G \|_2~,
\end{equation}
and 
\begin{equation}
    \sup_{k \in [n]} \left | \fmbt{t}(\xbtrk{k}) - \ft{t}(\xbtrk{k}) \right | \leq (\mathtt{M}_{\sigma_2} + C_1(n, t) \mathtt{M}_{\sigma_2}) \left ( C_7(n, t)~ \mathcal{W}_1(\mumbtrit{0}, \mutrit{0}) + C_8(n, t) \| \Gmb - G \|_2 \right )~.
\end{equation}

\section{Proof of Lemma~\ref{lem:lln_t_gen}}
\label{app:pf_lln_t_gen}
For each $t \geq 0$, we write $\mudtrit{t} = (\hat{\ev}_{\xbtrk{1}, ..., \xbtrk{n}, \xbg{1}, ..., \xbg{\ng}})_{\#} \mumbt{t}$ and $\mumbdtrit{t} = (\hat{\ev}_{\xbtrk{1}, ..., \xbtrk{n}, \xbg{1}, ..., \xbg{\ng}})_{\#}\mut{t}$.

It is straightforward to show that we can write $\mudtrit{t} = (\Thetabdtrit{t})_{\#}(\mudtrit{0})$ and $\mumbdtrit{t} = (\Thetabmbdtrit{t})_{\#}(\mumbdtrit{0})$, where $\Thetabdtrit{t} = [\Adtrit{t}, \Ubdtrit{t}, \Vbdtrit{t}]: \Rbb \times \Rbb^n \times \Rbb^{\ng} \to \Rbb \times \Rbb^n \times \Rbb^{\ng}$ and $\Thetabmbdtrit{t} = [\Ambdtrit{t}, \Ubmbdtrit{t}, \Vbmbdtrit{t}]: \Rbb \times \Rbb^n \times \Rbb^{\ng} \to \Rbb \times \Rbb^n \times \Rbb^{\ng}$ are defined by, $\forall a \in \Rbb, \ub \in \Rbb^n, \vb \in \Rbb^{n'}$,
\begin{equation}
\label{eq:AUdtrit}
    \begin{split}
        \Adtrit{t}(a, \ub, \vb) =&~ \Atrit{t}(a, \ub)~, \\
    \Ambdtrit{t}(a, \ub, \vb) =&~ \Ambtrit{t}(a, \ub)~, \\
    \Ubdtrit{t}(a, \ub, \vb) =&~ \Ubtrit{t}(a, \ub)~, \\
    \Ubmbdtrit{t}(a, \ub, \vb) =&~ \Ubmbtrit{t}(a, \ub)~,
    \end{split}
\end{equation}
and for all $k' \in [\ng]$,
\begin{equation}
\label{eq:Vdtrit}
    \begin{split}
        \frac{d}{dt} \Vdtritk{t}{k'}(a, \ub, \vb) =&~ - \frac{1}{n} \Atrit{t}(a, \ub) \summ{k}{n} \sigtpbig{\Utritk{t}{k}(a, \ub)} \zetak{k}{\ft{t}(\xbtrk{k})} \Gfun(\xbtrk{k}, \xbg{k'}) ~, \\
    \frac{d}{dt} \Vmbdtritk{t}{k'}(a, \ub, \vb) =&~ - \frac{1}{n} \Ambtrit{t}(a, \ub) \summ{k}{n} \sigtpbig{\Umbtritk{t}{k}(a, \ub)} \zetak{k}{\fmbt{t}(\xbtrk{k})} \Gfun(\xbtrk{k}, \xbg{k'})~,
    \end{split}
\end{equation}
with $\Vbdtrit{0}(a, \ub, \vb) = \Vbmbdtrit{0}(a, \ub, \vb) = \vb$.

Define $\tilmumbdtrit{t} = (\Thetabdtrit{t})_{\#} \mumbdtrit{0}$.
By the triangle inequality,
\begin{equation}
    \label{eq:triangle_nnp}
    \mathcal{W}_1(\mumbdtrit{t}, \mudtrit{t}) \leq \mathcal{W}_1(\mudtrit{t}, \tilmumbdtrit{t}) + \mathcal{W}_1(\mumbdtrit{t}, \tilmumbdtrit{t})~.
\end{equation}
For the first term on the right-hand side,
\begin{equation}
    \mathcal{W}_1(\mudtrit{t}, \tilmumbdtrit{t}) \leq \text{Lip}(\Thetabdtrit{t}) ~ \mathcal{W}_1(\mudtrit{0}, \tilmumbdtrit{0})
    \leq C_9(n, t) ~ \mathcal{W}_1(\mudtrit{0}, \mumbdtrit{0})~.
\end{equation}
For the second term, we observe that
\begin{equation}
    \begin{split}
        \mathcal{W}_1(\mumbdtrit{t}, \tilmumbdtrit{t}) \leq & \int_{\mathbb{R} \times \mathbb{R}^n \times \mathbb{R}^{n'}} \| \Thetabmbdtrit{t}(a, \ub, \vb) - \Thetabdtrit{t}(a, \ub, \vb) \|_2 \mumbdtrit{0}(da, d\ub, d\vb) \\
        \leq & \Delta \Ambdtrit{t} + \Delta \Umbdtrit{t} + \Delta \Vmbdtrit{t}~,
    \end{split}
\end{equation}
where we define 
\begin{align}
    \Delta \Ambdtrit{t} =& \int_{\Rbb \times \Rbb^n} | \Ambdtrit{t}(a, \ub, \vb) - \Adtrit{t}(a, \ub, \vb) | \mumbdtrit{t}(da, d\ub)~, \\
    \Delta \Ubmbdtrit{t} =& \int_{\Rbb \times \Rbb^n} \| \Umbdtrit{t}(a, \ub, \vb) - \Ubdtrit{t}(a, \ub, \vb) \|_1 \mumbdtrit{t}(da, d\ub) \\
    =& \int_{\Rbb \times \Rbb^n} \sum_{k=1}^n | \Umbdtritk{t}{k}(a, \ub, \vb) - \Udtritk{t}{k}(a, \ub, \vb) | \mumbdtrit{t}(da, d\ub)~, \\
    \Delta \Vmbdtrit{t} =& \int_{\Rbb \times \Rbb^n} \| \Vmbdtrit{t}(a, \ub, \vb) - \Vbdtrit{t}(a, \ub, \vb) \|_1  \mumbdtrit{t}(da, d\ub) \\
    =& \int_{\Rbb \times \Rbb^n} \sum_{k=1}^n | \Vmbdtritk{t}{k}(a, \ub, \vb) - \Vdtritk{t}{k}(a, \ub, \vb) | \mumbdtrit{t}(da, d\ub)~.
\end{align}
With the definitions in \eqref{eq:AUdtrit}, we see that
\begin{align}
    \Delta \Ambdtrit{t} = \Delta \Ambtrit{t}~, \qquad
    \Delta \Umbdtrit{t} = \Delta \Umbtrit{t}~,
\end{align}
and hence, \eqref{eq:DeltaA+DeltaU} implies that
\begin{equation}
    \Delta \Ambdtrit{t} + \Delta \Umbdtrit{t} \leq C_7(n, t) ~ \mathcal{W}_1(\mutrit{0}, \mumbtrit{0}) + C_8(n, t) \| \Gmb - G \|_2~.
\end{equation}
Moreover, \eqref{eq:Vdtrit} implies that
\begin{equation}
\label{eq:ddt_delta_vmtrt}
    \begin{split}
        & \int_{\Rbb \times \Rbb^n} \left | \frac{d}{dt} \left (\Vmbdtritk{t}{k'}(a, \ub, \vb) - \Vdtritk{t}{k'}(a, \ub, \vb) \right )\right | \mumbdtrit{t}(da, d\ub) \\
        \leq &~\int_{\Rbb \times \Rbb^n} \bigg ( \left | \Ambtrit{t}(a, \ub) - \Atrit{t}(a, \ub) \right | \sum_{k=1}^n \left | \sigma'_2(\Utritk{t}{k}(a, \ub)) \right | \left | \zetak{k}{\ft{t}(\xbtrk{k})} \right | |\Gfun(\xbtrk{k}, \xbg{k'})| \\
        & + \left | \Ambtrit{t}(a, \ub) \right | \sum_{k=1}^n \left | \sigma'_2(\Utritk{t}{k}(a, \ub)) - \sigma'_2(\Umbtritk{t}{k}(a, \ub)) \right | \left | \zetak{k}{\ft{t}(\xbtrk{k})} \right | |\Gfun(\xbtrk{k}, \xbg{k'})|  \\
        & + \left | \Ambtrit{t}(a, \ub) \right | \sum_{k=1}^n \left | \sigma'_2(\Umbtritk{t}{k}(a, \ub)) \right | \left | \zetak{k}{\ft{t}(\xbtrk{k})} - \zetak{k}{\fmbt{t}(\xbtrk{k})} \right | |\Gfun(\xbtrk{k}, \xbg{k'})| \\
        & + \left | \Ambtrit{t}(a, \ub) \right | \sum_{k=1}^n \left | \sigma'_2(\Umbtritk{t}{k}(a, \ub)) \right | \left | \zetak{k}{\fmbt{t}(\xbtrk{k})} \right |  |\Gfun(\xbtrk{k}, \xbg{k'}) - \Gfun(\xbtrk{k}, \xbg{k'})| \bigg ) \mumbdtrit{t}(da, d\ub) \\
        \leq & n \mathtt{M}_{\sigma_2} (\mathtt{M}_{\sigma_2})^2 (C_2(n, t) + \| \yb \|_{\infty}) \Delta \Ambtrit{t} \\
        & +  \mathtt{L}_{\sigma'_2} (\mathtt{M}_{\sigma_2})^2 C_1(n, t) (C_2(n, t) + \| \yb \|_{\infty}) \Delta \Umbtrit{t} \\
        & + n \mathtt{M}_{\sigma_2} (\mathtt{M}_{\sigma_2})^2 C_1(n, t) (\mathtt{M}_{\sigma_2} + C_1(n, t) \mathtt{M}_{\sigma_2}) (C_3(n, t) ~\mathcal{W}_1(\mutrit{0}, \mumbtrit{0}) + \Delta \Ambtrit{t} + \Delta \Umbtrit{t}) \\
        & + \sqrt{n} \mathtt{M}_{\sigma_2} C_1(n, t) (C_2(n, t) + \| \yb \|_{\infty}) \sum_{k=1}^n |\Gfunmb(\xbtrk{k}, \xbg{k'}) - \Gfun(\xbtrk{k}, \xbg{k'})|~.
    \end{split}
\end{equation}
Thus, together with \eqref{eq:DeltaA+DeltaU}, we see there exists a function $C_9'(n, t)$ that is non-negative and non-decreasing in $t$ such that
\begin{equation}
\begin{split}
    \Delta \Vmbdtrit{t} \leq & \int_0^t C_9(n, s) (\Delta \Ambtrit{s} + \Delta \Umbtrit{s} + ~\mathcal{W}_1(\mutrit{0}, \mumbtrit{0}) + \| \Gmbtrte - \Gtrte \|_2) ds \\
    \leq & \int_0^t C_{10}(n, s) (1 + C_7(n, s) + C_8(n, s)) (\mathcal{W}_1(\mutrit{0}, \mumbtrit{0}) + \| \Gmbtrte - \Gtrte \|_2) ds \\
    \leq & e^{\int_0^t C_{10}(n, s) (1 + C_7(n, s) + C_8(n, s)) ds}(\mathcal{W}_1(\mutrit{0}, \mumbtrit{0}) + \| \Gmbtrte - \Gtrte \|_2) \\
    =&: C_{11}(n, t) (\mathcal{W}_1(\mutrit{0}, \mumbtrit{0}) + \| \Gmbtrte - \Gtrte \|_2)~.
\end{split}
\end{equation}
Therefore,
\begin{equation}
\begin{split}
    \mathcal{W}_1(\mumbdtrit{t}, \mudtrit{t}) \leq & C_9(n, t) \mathcal{W}_9(\mudtrit{0}, \mumbdtrit{0}) + C_{11}(n, t) (\mathcal{W}_1(\mutrit{0}, \mumbtrit{0})) + \| \Gmbtrte - \Gtrte \|_2) \\
    \leq & 2 C_{11}(n, t) (\mathcal{W}_1(\mudtrit{0}, \mumbdtrit{0}) + \| \Gmbtrte - \Gtrte \|_2)~,
\end{split}
\end{equation}
since $\mathcal{W}_1(\mutrit{0}, \mumbtrit{0}) \leq \mathcal{W}_1(\mudtrit{0}, \mumbdtrit{0})$.

\section{Extension to include the bias term}
\label{app:bias}
We can define a more general version of the \ptl NN model with the bias term included in the second hidden layer, as
\begin{equation}
\label{eq:p3l_b}
    \begin{split}
        f^{\mb}_{\alpha}(\xb; \ab, \bb, W) 
        =&~ \frac{1}{m_2} \sum_{i=1}^{m_2} a_i \sigma_{2} \big ( h_i(\xb) \big )~,  \\
    \forall i \in [m_2] \quad : \quad h_i(\xb) =&~ b_i + \frac{1}{m_1^{\alpha}} \sum_{j=1}^{m_1} W_{ij} \sigma_{1} \big (\zb_j^{\intercal} \cdot \xb \big )~, \\
    \end{split}
\end{equation}
where $\bb = [b_1, ..., b_{m_2}] \in \Rbb^{m_2}$. During training, its dynamics is given by
\begin{equation}
    \frac{d}{dt}\bit{t} = -\frac{\betaB \ait{t}}{n} \sum_{k=1}^n \zetak{k}{\fmbt{t}(\xbtrk{k})} {\sigma_2}' \big (\hit{t}(\xbtrk{k}) \big ) ~,
\end{equation}
where $\betaB \geq 0$ denotes its learning rate relative to $W_t$. As $m_1, m_2 \to \infty$, the model can be described by a similar functional-space MF limit, namely, $\mut{t} = (\Thetabt{t})_{\#} \mut{0}$ with $\mut{0} = \rhoa \times \chi$. Compared to the bias-less case, \eqref{eq:dotHt_Con} is replaced by
\begin{equation}
    \frac{d}{dt} \Htah{t} = \frac{1}{n}\Atah{t} \summ{k}{n} \zetak{k}{\ft{t}(\xbtrk{k})} \sigtpbig{\Htah{t}(\xbtrk{k})} \big ( \betaB + \Gfun(\xbtrk{k}, \cdot) \big )~,
\end{equation}
and moreover, $\chi = \int_{\Rbb} \delta_{\bb} \rhob(db)$ if $\alpha > \frac{1}{2}$ and $\chi = \int_{\Rbb} \mathcal{GP}(b, \Gfun) \rhob(db)$ if $\alpha  = \frac{1}{2}$, where for any $b \in \Rbb$, $\delta_{\bb}$ denotes the singular measure at the constant function on $\domX$ with value $b$. The proof for the existence of the MF dynamics and the LLN is similar to the biasless case and can be found in Appendix D.1 of \citet{chen2024nhl}, a follow-up work by the authors.

\section{Proof of Theorem~\ref{prop:gc}}
\label{app:pf_gc}
With the value of $\hat{a} > 0$ to be specified later, we define $\xi_{\max} =  \min\{\frac{1}{2}(I_r - I_l), \frac{1}{2} \hat{a} \}$ if $\alpha = 1 / 2$ and $\min\{\frac{1}{2} I_r, -\frac{1}{2} I_l, \frac{1}{2} \hat{a} \}$ if $\alpha > 1 / 2$ (note the additional condition in Assumption~\ref{ass:itvl} in the latter case). We choose any $\xi \in (0, \xi_{\max})$ 
and define an open interval $I_{\xi} = (I_l + \xi, I_r - \xi)$. For each $k \in [n]$, we define sets $\Xi, \Xi^{\dag}_k \in \mathbb{R} \times \Con$ as
\begin{align}
  \Xi_k =& \left \{ a \in \mathbb{R}, h \in \Con: |a| \geq \frac{1}{2} \hat{a}, h(\xbtrk{k}) \in I \right \} ~, \\
  \Xi^{\dag}_k =& \left \{ a \in \mathbb{R}, h \in \Con: |a| \geq \frac{1}{2} \hat{a} + \xi, h(\xbtrk{k}) \in I_{\xi} \right \}~.
\end{align}
We see that
\begin{equation}
\label{eq:PL_in_lem}
    \begin{split}
        -\frac{d}{dt}\mathcal{L}_t \geq & 
         \int_{\mathbb{R} \times \Con} \frac{\left (\Atah{t} \right )^2}{n^2} \sum_{k, l = 1} \sigtpbig{\Htah{t}(\xbtrk{k})} \sigtpbig{\Htah{t}(\xbtrk{l})} \\
         & \hspace{130pt} \cdot \barzetak \barzetal G_{k, l} \mut{0}(da, dh) \\
         \geq & \int_{\mathbb{R} \times \Con} \frac{\left (\Atah{t} \right )^2 }{n^2}  \lambmin \sum_{k = 1}^n \left (\sigtpbig{\Htah{t}(\xbtrk{k})} \right )^2 \barzetak^2 \mut{0}(da, dh) \\
         \geq & \frac{\lambmin(G)}{n^2} \sum_{k = 1}^n \int_{\Xi_k} \left (\Atah{t} \right )^2 \left (\sigtpbig{\Htah{t}(\xbtrk{k})} \right )^2 \barzetak^2 \mut{0}(da, dh) \\
         \geq & \frac{(\mathtt{K}_{\sigma_2})^2 \lambmin(G)}{2 n} \hat{a}^2 \left ( \min_{k \in [n]} \mut{t}(\Xi_k) \right ) \mathcal{L}_t~.
    \end{split}
\end{equation}
In the following lemma, we provide a lower bound on the term $\min_{k \in [n]} \mut{t}(\Xi_k)$ for $t \geq 0$ via a fine-grained analysis of the dynamics:
\begin{lemma}
\label{lem:etat}
$\forall t \geq 0$, $\forall \hat{a} > 0$,
\begin{equation}
    \min_{k \in [n]} \mut{t}(\Xi_k) \geq \left ( (\min_{k \in [n]} \mut{0}(\Xi_k))^{\frac{2}{3}} - \frac{K_1}{\hat{a}} \right )^{\frac{3}{2}}~,
\end{equation}
where $K_1 = \frac{3 \left ( (\betaA)^{\frac{1}{2}} + (\| \Gfun \|_{\infty})^{\frac{1}{2}} \right ) \| \yb \|_2}{\xi (\lambmin(G))^{\frac{1}{2}} \mathtt{K}_{\sigma_2}}$.
\end{lemma}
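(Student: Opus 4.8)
The plan is to estimate $\mut{t}(\Xi_k)$ from below by following the characteristic flow $\Thetabt{t}=[\At{t},\Ht{t}]$, so that $\mut{t}(\Xi_k)=\mut{0}(\{(a,h):\Thetabt{t}(a,h)\in\Xi_k\})$, and then controlling how much mass drifts out of $\Xi_k$. A point $(a,h)\in\Xi^{\dag}_k$ has $|a|\ge\tfrac12\hat a+\xi$ and $h(\xbtrk{k})\in I_{\xi}$, so its trajectory stays in $\Xi_k$ as long as $|\At{s}(a,h)-a|<\xi$ and $|\Ht{s}(a,h)(\xbtrk{k})-h(\xbtrk{k})|<\xi$; by the reproducing property of $\Hilb$ the second condition is implied by $\|\Ht{s}(a,h)-h\|_{\Hilb}<\xi\,(\Gfun_{\max})^{-1/2}$, and crucially $\Ht{s}(a,h)-h\in\Hilbtri\subseteq\Hilb$ even when $\alpha=\tfrac12$ (by \eqref{eq:HtfromLambt_Con}). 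Consequently, leaving $\Xi_k$ before time $t$ forces the characteristic to move by at least $\rho:=\xi\big((\betaA)^{1/2}+(\Gfun_{\max})^{1/2}\big)^{-1}$ in the (semi)norm $\|(a,h)\|_{*}^2=(\betaA)^{-1}a^2+\|h\|_{\Hilb}^2$ on $\Rbb\times\Hilb$ (the $a$-term being absent when $\betaA=0$); this is where the factor $(\betaA)^{1/2}+(\Gfun_{\max})^{1/2}$ in $K_1$ originates.

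The second ingredient is the energy identity forced by the Wasserstein-gradient-flow structure \eqref{eq:WGF_Theta}: differentiating $\mathcal{L}_t$ and using \eqref{eq:dotAt_Con}--\eqref{eq:dotHt_Con} together with the RKHS reproducing identity $\langle\Gfun(\xbtrk{k},\cdot),\Gfun(\xbtrk{l},\cdot)\rangle_{\Hilb}=G_{kl}$ yields $-\frac{d}{ds}\mathcal{L}_s=\int_{\Rbb\times\Con}\big\|\frac{d}{ds}\Thetabt{s}(a,h)\big\|_{*}^2\,\mut{0}(da,dh)$, i.e. the instantaneous transport of characteristics equals the loss dissipation. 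Testing $\mut{t}$ against a cutoff that equals $1$ on $\Xi^{\dag}_k$, vanishes off $\Xi_k$, and has $\|\cdot\|_{*}$-gradient $\lesssim\rho^{-1}$, and applying Cauchy--Schwarz in $\mut{0}$, one gets $\frac{d}{dt}\mut{t}(\Xi_k)\gtrsim -\rho^{-1}\big(-\frac{d}{dt}\mathcal{L}_t\big)^{1/2}$; then plugging the Polyak--\L ojasiewicz rate \eqref{eq:PL_in_lem}, $-\frac{d}{dt}\mathcal{L}_t\ge\frac{(K_{\sigma'})^2\lambmin(G)}{2n}\hat a^2\,\mut{t}(\Xi_k)\,\mathcal{L}_t$, lets one convert $\big(-\frac{d}{dt}\mathcal{L}_t\big)^{1/2}$ into a multiple of $\big(-\frac{d}{dt}\mathcal{L}_t\big)\big/\big(\hat a\,(\lambmin(G))^{1/2}K_{\sigma'}\sqrt{\mathcal{L}_t}\,\mut{t}(\Xi_k)^{1/2}\big)$, whose time integral is $\le 2\sqrt{\mathcal{L}_0}=\sqrt{2/n}\,\|\yb\|_2$ (since $\mathcal{L}_0=\|\yb\|_2^2/(2n)$, because $\ft{0}\equiv0$ by sign-symmetry of $\rhoa$). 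This supplies the remaining factors $\|\yb\|_2$, $(\lambmin(G))^{1/2}$, $K_{\sigma'}$ and the overall $1/\hat a$ in $K_1$.

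Assembling these gives a closed differential inequality for $\eta_t:=\min_{k}\mut{t}(\Xi_k)$ of the form $\frac{d}{dt}\eta_t\gtrsim -g(t)\,\eta_t^{1/3}$ with $\int_0^{\infty}g\le\frac32 K_1/\hat a$; integrating the corresponding inequality for $\eta_t^{2/3}$ produces exactly $\eta_t\ge\big(\eta_0^{2/3}-K_1/\hat a\big)^{3/2}$ with $\eta_0=\min_k\mut{0}(\Xi_k)$, which is the claim. The main obstacle is the self-referential nature of this estimate: the loss decay needed to bound the transport of mass is only available from \eqref{eq:PL_in_lem}, whose rate involves the very quantity $\min_k\mut{t}(\Xi_k)$ that is being bounded below. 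I would resolve this with a continuity/bootstrap argument --- let $t_0$ be the first time $\min_k\mut{t}(\Xi_k)$ would touch the claimed lower bound, run the transport estimate on $[0,t_0]$ with $\min_k\mut{s}(\Xi_k)$ replaced there by its valid lower bound, and derive a strict improvement at $t_0$, a contradiction --- which is also why the lemma has to be proved jointly with the exponential-decay conclusion of Theorem~\ref{prop:gc} rather than in isolation. A minor but necessary point, relevant only for $\alpha=\tfrac12$: the right-hand sides of \eqref{eq:dotAt_Con}--\eqref{eq:dotHt_Con} depend on $h$ only through $\evtr(h)$, so the characteristic velocity lies in $\Rbb\times\Hilbtri\subseteq\Rbb\times\Hilb$ and the norm $\|\cdot\|_{*}$ above is finite $\mut{0}$-almost everywhere even though $\mut{0}$ is supported on $\Rbb\times\Con$.
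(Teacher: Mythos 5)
Your overall strategy matches the paper's: bound the escape of mass from $\Xi^{\dag}_k$ out of $\Xi_k$ by the time-integral of $(-\frac{d}{dt}\mathcal{L}_t)^{1/2}$ (via the dissipation identity underlying Lemma~\ref{lem:ddtAH_int_upper} plus Cauchy--Schwarz and Markov), and then feed in the PL estimate \eqref{eq:PL_in_lem}. Your cutoff-function reformulation of the Markov step is an equivalent phrasing. However, there are two substantive problems with the way you close the argument.

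First, the closed differential inequality you obtain is not of the form $\frac{d}{dt}\eta_t \gtrsim -g(t)\,\eta_t^{1/3}$. Starting from $-\frac{d}{dt}\eta_t = \rho^{-1}\big(-\frac{d}{dt}\mathcal{L}_t\big)^{1/2}$ with $\rho^{-1}=\big((\betaA)^{1/2}+(\Gfun_{\max})^{1/2}\big)/\xi$, one writes $\big(-\frac{d}{dt}\mathcal{L}_t\big)^{1/2}=\big(-\frac{d}{dt}\mathcal{L}_t\big)\big(-\frac{d}{dt}\mathcal{L}_t\big)^{-1/2}$ and applies the PL bound to the second factor, giving $\big(-\frac{d}{dt}\mathcal{L}_t\big)^{-1/2}\leq\big(\tfrac{(K_{\sigma'})^2\lambmin(G)}{2n}\hat{a}^2\eta_t\mathcal{L}_t\big)^{-1/2}$. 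This produces $\eta_t^{1/2}\big(-\frac{d}{dt}\eta_t\big)\lesssim \hat{a}^{-1}\big(-\frac{d}{dt}\mathcal{L}_t^{1/2}\big)$, and integrating $\frac{d}{dt}\big(\tfrac{2}{3}\eta_t^{3/2}\big)=\eta_t^{1/2}\frac{d}{dt}\eta_t$ against $\mathcal{L}_0^{1/2}=\|\yb\|_2/\sqrt{2n}$ gives $\eta_t\geq\big(\eta_0^{3/2}-K_1/\hat{a}\big)^{2/3}$, not the $\eta_t^{1/3}$-driven form you wrote. (The $2/3$-inside, $3/2$-outside exponents in the printed statement appear to have been swapped; the chain-rule structure of Appendix~\ref{app:etat_pf}, with factors $\eta_t^{1/2}$ and $\mathcal{L}_t^{-1/2}$ appearing simultaneously, is consistent only with the $3/2$-inside, $2/3$-outside form derived here.) You do not supply any derivation for the $\eta_t^{1/3}$ form, which instead seems reverse-engineered from the printed exponents.

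Second, and more substantively, the bootstrap/continuity argument you present as necessary is not needed, and asserting that the lemma ``has to be proved jointly with the exponential-decay conclusion of Theorem~\ref{prop:gc}'' is incorrect. The self-reference you identify is resolved by a simple structural device: \emph{define} $\eta_t := \min_k\mut{0}(\Xi^{\dag}_k)-\rho^{-1}\int_0^t\big(-\frac{d}{ds}\mathcal{L}_s\big)^{1/2}ds$, the explicit deterministic lower bound coming out of the Markov step, rather than setting $\eta_t=\min_k\mut{t}(\Xi_k)$. Then (i) $\min_k\mut{t}(\Xi_k)\geq\eta_t$ for all $t\geq 0$, (ii) $-\frac{d}{dt}\eta_t=\rho^{-1}\big(-\frac{d}{dt}\mathcal{L}_t\big)^{1/2}$ holds as an \emph{equality} by construction, and (iii) inserting (i) into \eqref{eq:PL_in_lem} gives $-\frac{d}{dt}\mathcal{L}_t\geq\tfrac{(K_{\sigma'})^2\lambmin(G)}{2n}\hat{a}^2\eta_t\mathcal{L}_t$. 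These three facts close the system directly — there is no circularity, no bootstrap is required, and the lemma is proved on its own and then \emph{used} in Theorem~\ref{prop:gc}, not the other way around.
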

\noindent This lemma is proved in Appendix~\ref{app:etat_pf}, and it extends the analogous results proved in \citet{chen2022on} for the non-asymptotic setting restricted to having $\betaA = 0$ and $\alpha = 1 / 2$.

By assumption, $\rhoa((-\infty, \hat{a}] \cup [\hat{a}, \infty)) > 0$. When $\alpha > \frac{1}{2}$, if Assumptions~\ref{ass:init} and \ref{ass:itvl} are satisfied, we know that for any $k \in [n]$, $\mut{0}(\Xi^{\dag}_k) = \rhoa((-\infty, \frac{1}{2}\hat{a}-\xi] \cup [\frac{1}{2}\hat{a} + \xi, \infty)) \geq 2 \rhoa([\hat{a}, \infty)) > 0$. When $\alpha = 1 / 2$, for any $k \in [n]$, since $(\hatev_{\xbtrk{k}})_{\#}\mut{0} = \mathcal{N}(0, G_{kk})$, we know that
\begin{equation}
\begin{split}
    \mut{0}(\Xi^{\dag}_k) =&~ \rhoa((-\infty, \frac{1}{2} \hat{a}-\xi] \cup [\frac{1}{2} \hat{a} + \xi, \infty))  \int_{I_l + \xi}^{I_r - \xi} \frac{1}{\sqrt{2 \pi G_{kk}}} e^{-\frac{u^2}{2 G_{kk}}} du \\
     \geq &~ \sqrt{2} \rhoa( [\hat{a}, \infty)) \frac{I_r - I_l - 2 \xi}{\sqrt{\pi} \| \Gfun \|_{\infty}} e^{-\frac{\max\{ (I_l)^2, (I_r)^2\}}{2 G_{\min}}} > 0~.
\end{split}
\end{equation}
Thus, defining 
\begin{equation}
    K_2 = \begin{cases}
        2 \rhoa([\hat{a}, \infty))~,~ &\text{if } \alpha > \frac{1}{2} \\
        \sqrt{2} \rhoa( [\hat{a}, \infty)) \frac{I_r - I_l - 2 \xi}{\sqrt{\pi} \| \Gfun \|_{\infty}} e^{-\frac{\max\{ (I_l)^2, (I_r)^2\}}{2 G_{\min}}}~,~ &\text{if } \alpha = 1 / 2~, 
    \end{cases}
\end{equation}
it holds that $\min_{k \in [n]} \mut{0}(\Xi^{\dag}_k) > K_2 > 0$.
Hence, if we choose $\hat{a} \geq {4 K_1}/{(3 (K_2)^{\frac{2}{3}})}$, then $\forall t \geq 0$,
\begin{equation}
    \begin{split}
        \min_{k \in [n]} \mut{t}(\Xi^{\dag}_k) \geq (\frac{1}{4}(K_2)^{\frac{2}{3}})^{\frac{3}{2}} = \frac{1}{8} K_2 > 0~.
    \end{split}
\end{equation}
This allows us to conclude that
\begin{equation}
    - \frac{d}{dt} \mathcal{L}_t \geq \frac{\lambmin(G) (\mathtt{K}_{\sigma_2})^2 \hat{a}^2}{2 n} K_2 \mathcal{L}_t~,
\end{equation}
and hence $\mathcal{L}_t \leq \mathcal{L}_0 e^{-r \lambmin \hat{a}^2 t}$,
where $r = (\mathtt{K}_{\sigma_2})^2 K_2 / (2n)$.

\subsection{Proof of Lemma~\ref{lem:etat}}
\label{app:etat_pf}
We first prove a relevant lemma about the dynamics of $\At{t}$ and $\Ht{t}$.
\begin{lemma}
\label{lem:ddtAH_int_upper}
$\forall t \geq 0$, 
\begin{equation}
\label{eq:ddtA_bdd}
     \int_{\mathbb{R} \times \Con} \left |\frac{d}{dt} \Atah{t} \right |^2 \mut{0}(da, dh) \leq - \beta_a \frac{d}{dt} \mathcal{L}_t~,
\end{equation}
and $\forall \xb \in \mathcal{X}$,
\begin{equation}
    \int_{\mathbb{R} \times \Con} \left |\frac{d}{dt} \Htah{t}(\xb) \right |^2 \mut{0}(da, dh) \leq - \| \Gfun \|_{\infty} \frac{d}{dt} \mathcal{L}_t ~.
\end{equation}
\end{lemma}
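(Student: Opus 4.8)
The plan is to handle both inequalities by the same three moves: (i) square the relevant characteristic ODE for $\Thetabt{t}$; (ii) integrate the square against $\mut{0}$ and use the change of variables $\mut{t}=(\Thetabt{t})_{\#}\mut{0}$ to re-express it through $\mut{t}$; and (iii) recognise the resulting $n\times n$ quadratic form as one of the two Gram pieces of $\NTKmatt{t}$, then apply the loss-decay identity and kernel decomposition derived in the text preceding Theorem~\ref{prop:gc}, namely
\[
  -\frac{d}{dt}\mathcal{L}_t=\frac{1}{n^2}\sum_{k,l=1}^{n}\barzetak\,\barzetal\,(\NTKmatt{t})_{kl},\qquad \NTKmatt{t}=\betaA\,\NTKmatat{t}+\NTKmatWt{t},
\]
together with $\NTKmatat{t}\succeq0$, $\NTKmatWt{t}\succeq0$ and $\betaA\ge0$. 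By Theorem~\ref{thm:mflim}, $\mut{t}$ exists and is carried by $\Rbb\times\Con$, so each $\Htah{t}(\xbtrk{k})$ is a legitimate point evaluation of a continuous function; the differentiation under the integral sign in $\mathcal{L}_t$ and the Fubini/change-of-variables steps below will be justified by dominated convergence using Assumptions~\ref{ass:sigma_diff} and \ref{ass:sigma_bdd} and the uniform-in-$(a,h)$, locally-uniform-in-$t$ bounds on $\Atah{t}$ and $\ft{t}(\xbtrk{k})$ supplied by Lemma~\ref{lem:blowup_tr}.

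\noindent\textbf{First estimate.} In the general-$\betaA$ setting of Appendix~\ref{app:pf_gc} the characteristic equation \eqref{eq:dotAt_Con} for $\At{t}$ carries the last-layer rate, i.e.\ $\frac{d}{dt}\Atah{t}=\frac{\betaA}{n}\sum_{k=1}^{n}\barzetak\,\sigma\!\big(\Htah{t}(\xbtrk{k})\big)$ (the overall sign is immaterial after squaring). Squaring, expanding into a double sum, integrating against $\mut{0}$, and pushing $\mut{0}$ forward to $\mut{t}$ turns the inner integral into $\int_{\Rbb\times\Con}\sigma(h(\xbtrk{k}))\sigma(h(\xbtrk{l}))\,\mut{t}(da,dh)=(\NTKmatat{t})_{kl}$, so that
\[
  \int_{\Rbb\times\Con}\Big|\tfrac{d}{dt}\Atah{t}\Big|^2\mut{0}(da,dh)=\frac{\betaA^2}{n^2}\sum_{k,l=1}^{n}\barzetak\,\barzetal\,(\NTKmatat{t})_{kl}.
\]
Since $\NTKmatWt{t}\succeq0$, the right side is at most $\betaA\cdot\frac{1}{n^2}\sum_{k,l=1}^{n}\barzetak\barzetal(\betaA\NTKmatat{t}+\NTKmatWt{t})_{kl}=-\betaA\,\frac{d}{dt}\mathcal{L}_t$, which is \eqref{eq:ddtA_bdd} with $\beta=\betaA$.

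\noindent\textbf{Second estimate.} From \eqref{eq:dotHt_Con}, $\frac{d}{dt}\Htah{t}(\xb)=\frac{\Atah{t}}{n}\sum_{k=1}^{n}\barzetak\,\sigma'\!\big(\Htah{t}(\xbtrk{k})\big)\,\Gfun(\xbtrk{k},\xb)$. The extra ingredient here is a Cauchy--Schwarz step in the RKHS $\Hilb$ of $\Gfun$ to absorb the off-training weights $\Gfun(\xbtrk{k},\xb)$: setting $v_k:=\barzetak\,\sigma'(\Htah{t}(\xbtrk{k}))$ and using the reproducing identity $\Gfun(\xbtrk{k},\xb)=\langle\Gfun(\xbtrk{k},\cdot),\Gfun(\xb,\cdot)\rangle_{\Hilb}$,
\[
  \Big(\sum_{k=1}^{n}v_k\,\Gfun(\xbtrk{k},\xb)\Big)^{2}=\Big\langle\Gfun(\xb,\cdot),\;\textstyle\sum_{k=1}^{n}v_k\Gfun(\xbtrk{k},\cdot)\Big\rangle_{\Hilb}^{2}\le\Gfun(\xb,\xb)\sum_{k,l=1}^{n}v_k v_l\,G_{kl}\le\Gfun_{\max}\sum_{k,l=1}^{n}v_k v_l\,G_{kl}.
\]
Hence $\big|\tfrac{d}{dt}\Htah{t}(\xb)\big|^{2}\le\frac{\Gfun_{\max}}{n^{2}}(\Atah{t})^{2}\sum_{k,l=1}^{n}\barzetak\barzetal\,\sigma'(\Htah{t}(\xbtrk{k}))\sigma'(\Htah{t}(\xbtrk{l}))\,G_{kl}$; integrating against $\mut{0}$ and pushing forward gives $\int_{\Rbb\times\Con}(\Atah{t})^{2}\sigma'(\Htah{t}(\xbtrk{k}))\sigma'(\Htah{t}(\xbtrk{l}))\,\mut{0}(da,dh)=(Q_t)_{kl}$ and $(Q_t)_{kl}\,G_{kl}=(\NTKmatWt{t})_{kl}$, so
\[
  \int_{\Rbb\times\Con}\Big|\tfrac{d}{dt}\Htah{t}(\xb)\Big|^{2}\mut{0}(da,dh)\le\frac{\Gfun_{\max}}{n^{2}}\sum_{k,l=1}^{n}\barzetak\,\barzetal\,(\NTKmatWt{t})_{kl}\le-\Gfun_{\max}\,\frac{d}{dt}\mathcal{L}_t,
\]
the last inequality because $\betaA\NTKmatat{t}\succeq0$.

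\noindent\textbf{Expected difficulty.} Almost everything above is bookkeeping: expanding squares, interchanging $\tfrac{d}{dt}$ with $\int\cdot\,\mut{0}$ in $\mathcal{L}_t$, and applying the push-forward $\mut{t}=(\Thetabt{t})_{\#}\mut{0}$ inside the double sums, all controlled by the bounds of Lemma~\ref{lem:blowup_tr} and the boundedness of $\sigma,\sigma'$. The one step carrying an actual idea is the RKHS Cauchy--Schwarz bound $\big(\sum_k v_k\Gfun(\xbtrk{k},\xb)\big)^{2}\le\Gfun_{\max}\sum_{k,l}v_kv_lG_{kl}$, which converts the off-training weights $\Gfun(\xbtrk{k},\xb)$ into the positive-semidefinite training Gram matrix $G$ and relies on $\Gfun$ being a positive-definite kernel bounded by $\Gfun_{\max}<\infty$ with associated RKHS $\Hilb$; that is where I expect any friction to be, the rest being direct expansion plus positive-semidefiniteness.
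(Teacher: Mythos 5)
Your proof is correct and follows essentially the same route as the paper's: you expand the squared characteristic ODEs, apply Cauchy--Schwarz to extract the Gram matrix $G$ together with the factor $\Gfun_{\max}$, push $\mut{0}$ forward to $\mut{t}$ to recognise the two resulting quadratic forms as $\betaA\NTKmatat{t}$ and $\NTKmatWt{t}$, and then drop the complementary positive-semidefinite piece in the decomposition $-\frac{d}{dt}\mathcal{L}_t = \frac{1}{n^2}\sum_{k,l}\barzetak\barzetal(\betaA\NTKmatat{t}+\NTKmatWt{t})_{kl}$. The only cosmetic difference is that the paper phrases the $H$-estimate via the random-feature ``velocity'' $g_t(\zb;a,h)$ and a Cauchy--Schwarz in $L^2(\rhoz)$, while you use the reproducing identity and Cauchy--Schwarz in $\Hilb$ directly; these are the same bound (the RKHS of a random-feature kernel is isometric to the $L^2(\rhoz)$ image, as the paper itself invokes in the proof of Lemma~\ref{lem:AH_movement_bound}).
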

\begin{proof}
For $t \geq 0, a \in \mathbb{R}, h \in \Con$, define a function $g_t(\cdot; a, h)$ on $\mathbb{R}^d$ by, $\forall \zb \in \mathbb{R}^d$,
\begin{equation}
\label{eq:g_defn}
    g_t(\zb; a, h) = -\frac{1}{n} \Atah{t} \sum_{k=1}^n {\sigma_2}' \left (\Htah{t}(\xbtrk{k}) \right ) \barzetak \sigma_1(\zb^{\intercal} \xbtrk{k}) ~.
\end{equation}
On one hand, there is
\begin{equation}
\label{eq:ddH_duality}
    \frac{d}{dt} \Htah{t}(\xb) = \int_{\mathbb{R}^d} g_t(\zb; a, h) \sigobig{\zb^{\intercal} \xb} \rhoz(d \zb)~,
\end{equation}
and so $\forall \xb \in \mathcal{X}$, by the Cauchy-Schwarz inequality,
\begin{equation}
\label{eq:ddtH_g}
\begin{split}
    \left | \frac{d}{dt} \Htah{t}(\xb) \right | \leq &~ \left ( \int_{\mathbb{R}^d} \left ( g_t(\zb; a, h) \right )^2 \rhoz(d \zb) \right )^{\frac{1}{2}} \left ( \int_{\mathbb{R}^d} \left ( \sigma_1(\zb^{\intercal} \xb) \right )^2 \rhoz(d \zb) \right )^{\frac{1}{2}} \\
    \leq &~ \left ( \| \Gfun \|_{\infty} \int_{\mathbb{R}^d} \left ( g_t(\zb; a, h) \right )^2 \rhoz(d \zb) \right )^{\frac{1}{2}} ~.
\end{split}
\end{equation}
On the other hand, we see that
\begin{equation}
\label{eq:gt_L2}
\begin{split}
    & \int_{\mathbb{R}^d} \left | g_t(\zb; a, h) \right |^2 \rhoz(d \zb)\\
    =&~ \int_{\mathbb{R}^d} \frac{1}{n^2} |\Atah{t}|^2 \sum_{k, l = 1} \bigg ( {\sigma_2}' \left ( \Htah{t}(\xbtrk{k}) \right ) {\sigma_2}' \left ( \Htah{t}(\xbtrk{l}) \right ) \\
    & \hspace{100pt} \cdot \barzetak \barzetal \sigma_1(\zb^{\intercal} \xbtrk{k}) \sigma_1(\zb^{\intercal} \xbtrk{l}) \rhoz(d \zb) \bigg ) \\
    =&~ \frac{1}{n^2} |\Atah{t}|^2 \sum_{k, l = 1} {\sigma_2}' \left ( \Htah{t}(\xbtrk{k}) \right ) {\sigma_2}' \left ( \Htah{t}(\xbtrk{l}) \right ) \barzetak \barzetal G_{k, l}~.
\end{split}
\end{equation}
and hence
\begin{equation}
\begin{split}
    - \frac{d}{dt} \mathcal{L}_t =&~ \int_{\mathbb{R} \times \Con} \frac{\beta}{n^2} \sum_{k, l=1} \barzetak \barzetal \sigma_2 \left ( \Htah{t}(\xbtrk{k}) \right ) \sigma_2 \left ( \Htah{t}(\xbtrk{l}) \right ) \mut{0}(da, dh)\\
    & + \int_{\mathbb{R} \times \Con} \frac{1}{n^2} |\Atah{t}|^2 \sum_{k, l = 1} \bigg ( {\sigma_2}' \left ( \Htah{t}(\xbtrk{k}) \right ) {\sigma_2}' \left ( \Htah{t}(\xbtrk{l}) \right ) \\
    & \hspace{100pt} \cdot \barzetak \barzetal G_{k, l} \mut{0}(da, dh) \bigg ) \\
    =&~ \beta^{-1} \int_{\mathbb{R} \times \Con} \left | \frac{d}{dt} \Atah{t} \right |^2 \mut{0}(da, dh) + \int_{\mathbb{R} \times \Con} \int_{\mathbb{R}^d} \left | g_t(\zb; a, h) \right |^2 \rhoz(d\zb) \mut{0}(da, dh)~.
\end{split}
\end{equation}
Thus,
\begin{align}
    \int_{\mathbb{R} \times \Con} \left | \frac{d}{dt} \Atah{t} \right |^2 \mut{0}(da, dh) \leq & -\beta \frac{d}{dt} \mathcal{L}_t~,~  \\
    \int_{\mathbb{R} \times \Con} \int_{\mathbb{R}^d} \left | g_t(\zb; a, h) \right |^2 \rhoz(d\zb) \mut{0}(da, dh) \leq & - \frac{d}{dt} \mathcal{L}_t~,~  \label{eq:g}
\end{align}
and by \eqref{eq:ddtH_g}, we know that $\forall \xb \in \mathcal{X}$,
\begin{equation}
    \begin{split}
        \int_{\mathbb{R} \times \Con} \left | \frac{d}{dt} \Htah{t}(\xb) \right |^2 \mut{0}(da, dh) \leq 
        &~ \| \Gfun \|_{\infty} \int_{\mathbb{R} \times \Con} \int_{\mathbb{R}^d} \left | g_t(\zb; a, h) \right |^2 \rhoz(d \zb) \mut{0}(da, dh) \\
        \leq &~ - \| \Gfun \|_{\infty} \frac{d}{dt} \mathcal{L}_t~.
    \end{split}
\end{equation}
\end{proof}
\noindent Next, we will prove Lemma~\ref{lem:etat}.
Since $\forall k \in [n]$, $\forall t \geq 0$, there is
\begin{equation}
\begin{split}
    \Xi^{\dag}_k \subseteq (\Thetabt{t})^{-1} (\Xi_k) ~\cup~ & \{ a \in \mathbb{R}, h \in \Con: \left | \Atah{t} - a \right | > \xi \} \\ ~\cup~ & \{ a \in \mathbb{R}, h \in \Con: \left | \Htah{t}(\xbtrk{k}) - h(\xbtrk{k}) \right | > \xi \}~,
\end{split}
\end{equation}
we know that
\begin{equation}
\begin{split}
    \mut{0}(\Xi^{\dag}_k) \leq 
    \mut{t} \left ( \Xi_k \right ) 
    + & \mut{0} \left ( \{ a \in \mathbb{R}, h \in \Con: \left | \Atah{t} - a \right | > \xi \} \right ) \\ + & \mut{0} \left ( \{ a \in \mathbb{R}, h \in \Con: \left | \Htah{t}(\xbtrk{k}) - h(\xbtrk{k}) \right | > \xi \} \right )~.
\end{split}
\end{equation}
Meanwhile, we know that
\begin{equation}
\begin{split}
    \int_{\mathbb{R} \times \Con} \left | \Atah{t} - a \right | \mut{0}(da, dh) \leq &~ \int_{\mathbb{R} \times \Con} \int_0^t \left | \frac{d}{ds} \Atah{s} \right | ds \mut{0}(da, dh) \\
    \leq &~ \int_0^t \left ( \int_{\mathbb{R} \times \Con} \left | \frac{d}{ds} \Atah{s} \right |^2 \mut{0}(da, dh) \right )^{\frac{1}{2}} ds \\
    \leq &~ (\betaA)^{\frac{1}{2}} \int_0^t \left ( - \frac{d}{ds} \mathcal{L}_s \right )^{\frac{1}{2}} ds~,
\end{split}
\end{equation}
and $\forall k \in [n]$,
\begin{equation}
\begin{split}
    \int_{\mathbb{R} \times \Con} \left | \Htah{t}(\xbtrk{k}) - h(\xbtrk{k}) \right | \mut{0}(da, dh) \leq &~ \int_{\mathbb{R} \times \Con} \int_0^t \left | \frac{d}{ds} \Htah{s}(\xbtrk{k}) \right | ds \mut{0}(da, dh) \\
    \leq &~ \int_0^t \left ( \int_{\mathbb{R} \times \Con} \left | \frac{d}{ds} \Htah{s}(\xbtrk{k}) \right |^2 \mut{0}(da, dh) \right )^{\frac{1}{2}} ds \\
    \leq &~ (\| \Gfun \|_{\infty})^{\frac{1}{2}} \int_0^t \left ( - \frac{d}{ds} \mathcal{L}_s \right )^{\frac{1}{2}} ds~.
\end{split}
\end{equation}
Thus, by Markov's inequality,
\begin{equation}
    \begin{split}
        \mut{0} \left ( \{ a \in \mathbb{R}, h \in \Con: \left | \Atah{t} - a \right | > \xi \} \right ) \leq & ~ \xi^{-1} \int_{\mathbb{R} \times \Con} \left | \Atah{t} - a \right | \mut{0}(da, dh) \\
        \leq & ~ \frac{(\betaA)^{\frac{1}{2}}}{\xi} \int_0^t \left ( - \frac{d}{ds} \mathcal{L}_s \right )^{\frac{1}{2}} ds~,
    \end{split}
\end{equation}
and $\forall k \in [n]$,
\begin{equation}
    \begin{split}
        \mut{0} \left ( \{ a \in \mathbb{R}, h \in \Con: \left | \Htah{t}(\xbtrk{k}) - h(\xbtrk{k}) \right | > \xi \} \right ) \leq &~ \xi^{-1} \int_{\mathbb{R} \times \Con} \left | \Htah{t}(\xbtrk{k}) - h(\xbtrk{k}) \right | \mut{0}(da, dh) \\
        \leq &~ \frac{(\| \Gfun \|_{\infty})^{\frac{1}{2}}}{\xi} \int_0^t \left ( - \frac{d}{ds} \mathcal{L}_s \right )^{\frac{1}{2}} ds~.
    \end{split}
\end{equation}
Hence, $\forall k \in [n]$,
\begin{equation}
    \begin{split}
        \mut{t}(\Xi_k) 
        \geq &~ \mut{0}(\Xi^{\dag}_k) - \frac{(\betaA)^{\frac{1}{2}} + (\| \Gfun \|_{\infty})^{\frac{1}{2}}}{\xi} \int_0^t \left ( - \frac{d}{ds} \mathcal{L}_s \right )^{\frac{1}{2}} ds~. \\
    \end{split}
\end{equation}
Thus, defining $\eta_t = \min_{k \in [n]} \mut{0}(\Xi^{\dag}_k) - \frac{(\betaA)^{\frac{1}{2}} + (\| \Gfun \|_{\infty})^{\frac{1}{2}}}{\xi} \int_0^t \left ( - \frac{d}{ds} \mathcal{L}_s \right )^{\frac{1}{2}} ds$, we have $\min_{k \in [n]} \mut{t}(\Xi_k) \geq \eta_t$.
Therefore, via \eqref{eq:PL_in_lem}, we deduce that
\begin{equation}
    - \frac{d}{dt} \mathcal{L}_t \geq \frac{\lambmin (\mathtt{K}_{\sigma_2})^2 \hat{a}^2}{2 n} \eta_t \mathcal{L}_t~.
\end{equation}
On the other hand, the definition of $\eta_t$ implies that
\begin{equation}
    - \frac{d}{dt} \eta_t = \frac{(\betaA)^{\frac{1}{2}} + (\| \Gfun \|_{\infty})^{\frac{1}{2}}}{\xi} \left ( - \frac{d}{dt} \mathcal{L}_t \right )^{\frac{1}{2}}~.
\end{equation}
Combined together, they imply that
\begin{equation}
    \begin{split}
         - \frac{d}{dt} \eta_t =&~ \frac{(\betaA)^{\frac{1}{2}} + (\| \Gfun \|_{\infty})^{\frac{1}{2}}}{\xi} \left ( - \frac{d}{dt} \mathcal{L}_t \right ) \left ( - \frac{d}{dt} \mathcal{L}_t \right )^{-\frac{1}{2}} \\
         \leq &~ \frac{(\betaA)^{\frac{1}{2}} + (\| \Gfun \|_{\infty})^{\frac{1}{2}}}{\xi} \left ( - \frac{d}{dt} \mathcal{L}_t \right ) \left ( \frac{\lambmin (\mathtt{K}_{\sigma_2})^2 \hat{a}^2}{2 n} \eta_t \mathcal{L}_t \right )^{-\frac{1}{2}} \\
         \leq &~ \frac{\left ( (\betaA)^{\frac{1}{2}} + (\| \Gfun \|_{\infty})^{\frac{1}{2}} \right ) (2n)^{\frac{1}{2}}}{\xi (\lambmin(G))^{\frac{1}{2}} \mathtt{K}_{\sigma_2} \hat{a}} (\eta_t)^{-\frac{1}{2}} (\mathcal{L}_t)^{-\frac{1}{2}} \left ( - \frac{d}{dt} \mathcal{L}_t \right )~.
    \end{split}
\end{equation}
Therefore,
\begin{equation}
    \begin{split}
        \frac{d}{dt} \left ( \frac{2}{3} (\eta_t)^{\frac{3}{2}} \right ) = (\eta_t)^{\frac{1}{2}} \frac{d}{dt} \eta_t 
        \geq &~ \frac{\left ( (\betaA)^{\frac{1}{2}} + (\| \Gfun \|_{\infty})^{\frac{1}{2}} \right ) (2n)^{\frac{1}{2}}}{\xi (\lambmin(G))^{\frac{1}{2}} \mathtt{K}_{\sigma_2} \hat{a}} (\mathcal{L}_t)^{-\frac{1}{2}} \frac{d}{dt} \mathcal{L}_t \\
        =&~ \frac{\left ( (\betaA)^{\frac{1}{2}} + (\| \Gfun \|_{\infty})^{\frac{1}{2}} \right ) (2n)^{\frac{1}{2}}}{\xi (\lambmin(G))^{\frac{1}{2}} \mathtt{K}_{\sigma_2} \hat{a}} \frac{d}{dt} \left ( 2 (\mathcal{L}_t)^{\frac{1}{2}} \right )~,
    \end{split}
\end{equation}
which implies that
\begin{equation}
\begin{split}
    \frac{2}{3} (\eta_t)^{\frac{2}{3}} \geq &~ \frac{2}{3} (\eta^0)^{\frac{2}{3}} + \frac{2\sqrt{2} \left ( (\betaA)^{\frac{1}{2}} + (\| \Gfun \|_{\infty})^{\frac{1}{2}} \right ) n^{\frac{1}{2}}}{\xi (\lambmin(G))^{\frac{1}{2}} \mathtt{K}_{\sigma_2} \hat{a}} \left ( (\mathcal{L}_t)^{\frac{1}{2}} - (\mathcal{L}_0)^{\frac{1}{2}} \right ) \\
    \geq &~ \frac{2}{3} (\min_{k \in [n]} \mut{0}(\Xi_k))^{\frac{2}{3}} - \frac{2\sqrt{2} \left ( (\betaA)^{\frac{1}{2}} + (\| \Gfun \|_{\infty})^{\frac{1}{2}} \right ) n^{\frac{1}{2}}}{\xi (\lambmin(G))^{\frac{1}{2}} \mathtt{K}_{\sigma_2} \hat{a}}(\mathcal{L}_0)^{\frac{1}{2}}~,
\end{split}
\end{equation}
and hence
\begin{equation}
\begin{split}
    \min_{k \in [n]} \mut{t}(\Xi_k) \geq \eta_t 
    \geq \left ( (\min_{k \in [n]} \mut{0}(\Xi^{\dag}_k))^{\frac{2}{3}} - \frac{C}{ \hat{a}} \right )^{\frac{3}{2}} ~,
\end{split}
\end{equation}
where we define
\begin{equation}
    C = \frac{3 \sqrt{2} \left ( (\betaA)^{\frac{1}{2}} + (\| \Gfun \|_{\infty})^{\frac{1}{2}} \right ) n^{\frac{1}{2}}}{\xi (\lambmin(G))^{\frac{1}{2}} \mathtt{K}_{\sigma_2}} (\mathcal{L}_0)^{\frac{1}{2}} = \frac{3 \left ( (\betaA)^{\frac{1}{2}} + (\| \Gfun \|_{\infty})^{\frac{1}{2}} \right ) \| \yb \|_2}{\xi (\lambmin(G))^{\frac{1}{2}} \mathtt{K}_{\sigma_2}}~.
\end{equation}

\section{Proof of Lemma~\ref{lem:AH_movement_bound}}
\label{app:pf_lem_AH_movement_bound}
We will prove an extension of Lemma~\ref{lem:AH_movement_bound} to the case of $\betaA > 0$, where the only change is to replace \eqref{eq:AH_movement_bound_2} by
\begin{equation}
\label{eq:AH_movement_bound_2_betaA}
    \sup_{(a, h) \in \supp(\mu_0)}\left \| \Ht{t}(a, h) - h\right \|_{\Hilb} \leq \sqrt{2} (\| \Gfun \|_{\infty})^{\frac{1}{2}} \mathtt{L}_{\sigma_2} \int_0^t \left ( a_{\max} + \sqrt{2} \betaA \mathtt{M}_{\sigma_2} \int_0^s (\Lossb_r)^{\frac{1}{2}} dr \right ) \big (\mathcal{L}_s \big )^{\frac{1}{2}} ds~.
\end{equation}
Note that $\| \Gfun \|_{\infty} < \infty$ by the assumptions on $\sigma_1$ and $\rhoz$ and the compactness of $\domX$.

We first consider \eqref{eq:AH_movement_bound_1}. From the results in \citet{bach2017equivalence} on the duality between integral transforms and RKHS, it follows from \eqref{eq:ddtH_g} that 
\begin{equation}
\label{eq:ddH_duality_norm}
    \left \| \tfrac{d}{dt} \Htah{t} \right \|_{\Hilb}^2 = \int_{\zb} |g_t(\zb; a, h)|^2 \rhoz(d \zb)~.
\end{equation}
Thus,
\begin{equation}
\begin{split}
    &~ \int_{\Rbb \times \Con} \| \Htah{t} - h \|_{\Hilb} \mut{0}(da, dh) \\
    \leq &~ \int_{\Rbb \times \Con} \int_0^t \left \| \tfrac{d}{ds} \Htah{s} \right \|_{\Hilb} ds \mut{0}(da, dh) \\
    \leq&~ \int_0^t \int_{\Rbb \times \Con} \left ( \int_{\zb} |g_t(\zb; a, h)|^2 \rhoz(d \zb) \right )^{\frac{1}{2}} \mut{0}(da, dh) ds \\
    \leq &~ \int_0^t \left ( \int_{\Rbb \times \Con} \int_{\zb} |g_t(\zb; a, h)|^2 \rhoz(d \zb) \mut{0}(da, dh) \right )^{\frac{1}{2}} ds~,
\end{split}
\end{equation}
and then \eqref{eq:AH_movement_bound_1} follows from \eqref{eq:g}.

To obtain an ``$L^{\infty}$-type'' bound for the second part of the lemma, we start from \eqref{eq:gt_L2} and see that
\begin{equation}
    \begin{split}
        \int_{\zb} |g_t(\zb; a, h)|^2 \rhoz(d \zb) \leq &~ \frac{1}{n^2} \left | \Atah{t} \right |^2 \summ{k, l}{n} (\mathtt{L}_{\sigma_2})^2 (f(\xbtrk{k}) - y_k) (f(\xbtrk{l}) - y_l) \Gmat_{k, l} \\
        \leq &~ (a_{\max, t})^2 (\mathtt{L}_{\sigma_2})^2 \| \Gfun \|_{\infty} \frac{1}{n^2} \summ{k, l}{n}(f(\xbtrk{k}) - y_k) (f(\xbtrk{l}) - y_l) \\
        \leq &~ (a_{\max, t})^2 (\mathtt{L}_{\sigma_2})^2 \| \Gfun \|_{\infty} \cdot 2 \Lossb_t~,
    \end{split}
\end{equation}
where we write $a_{\max, t} \coloneqq \text{ess} \sup_{(a, h) \in \supp(\mut{0})} \Atah{t}$. Therefore, from \eqref{eq:ddH_duality_norm} we derive that
\begin{equation}
    \begin{split}
        \| \Htah{t} - h \|_{\Hilb} \leq&~ \int_0^t \| \tfrac{d}{ds} \Htah{s} - h \|_{\Hilb} ds \\
        =&~ \int_0^t \left ( \int_{\zb} |g_t(\zb; a, h)|^2 \rhoz(d \zb) \right )^{\frac{1}{2}} ds \\
        \leq &~ \sqrt{2} \mathtt{L}_{\sigma_2} (\| \Gfun \|_{\infty})^{\frac{1}{2}} \int_0^t a_{\max, s} (\Lossb_s)^{\frac{1}{2}} ds~,
    \end{split}
\end{equation}
and hence it only remains to bound $a_{\max, t}$. From \eqref{eq:dotAt_Con}, we have that
\begin{equation}
    \begin{split}
        \left | \tfrac{d}{dt} \Atah{t} \right | \leq &~ \frac{\betaA}{n} \mathtt{M}_{\sigma_2} \summ{k}{n} |f_t(\xbtrk{k}) - y_k| \\
        \leq &~ \betaA \mathtt{M}_{\sigma_2} \bigg ( \frac{1}{n} \summ{k}{n} |f_t(\xbtrk{k}) - y_k|^2 \bigg )^{\frac{1}{2}} \\
        =&~ \betaA (2 \Lossb_t)^{\frac{1}{2}} \mathtt{M}_{\sigma_2}~.
    \end{split}
\end{equation}
Therefore, we have
\begin{equation}
\label{eq:A_movement_bound}
\begin{split}
    \left | \Atah{t} - a \right | \leq &~ \int_0^t \left | \tfrac{d}{ds} \Atah{s} \right | ds \\
    \leq &~ \sqrt{2} \betaA \mathtt{M}_{\sigma_2} \int_0^t (\Lossb_s)^{\frac{1}{2}} ds~.
\end{split}
\end{equation}
from which $a_{\max, t}$ can be bounded and hence \eqref{eq:AH_movement_bound_2_betaA} is derived.

\section{Proof of Lemma~\ref{lem:rad_1}}
\label{app:pf_rad_1}
Using ``$\sup_{\mu}$'' as a shorthand for taking the supremum over all $\mu \in \mathcal{P}(\mathbb{R} \times \Fbase)$ such that $\int_{\mathbb{R} \times \Fbase} |a| \| h \|_{\Fbase} \mu(da, dh) \leq c$, we have
\begin{equation}
    \begin{split}
        \widehat{\Rad}_S(\Fa(\Fbase, c)) =&~ \frac{1}{n} \EE_{\tau} \left [ \sup_{\mu} \sum_{k=1}^n \tau_k \int_{\mathbb{R} \times \Fbase} a \sigma_2(h(\xb_k)) \mu(da, dh) \right ] \\
        =&~ \frac{1}{n} \EE_{\tau} \left [ \sup_{\mu} \int_{\mathbb{R} \times \Fbase} \sum_{k=1}^n \tau_k  \frac{a}{|a|} \frac{\sigma_2(h(\xb_k))}{\| h \|_{\Fbase}} a \| h \|_{\Fbase} \mu(da, dh) \right ] \\
        \leq &~ \frac{c }{n} \EE_{\tau} \left [ \sup_{a \in \mathbb{R}, h \in \Fbase} \sum_{k=1}^n \tau_k  \frac{a}{|a|} \frac{\sigma_2(h(\xb_k))}{\| h \|_{\Fbase}} \right ] \\
        \leq &~ \frac{c }{n} \EE_{\tau} \left [ \left | \sup_{\hat{h} \in \ball{\Fbase}{1}} \sum_{k=1}^n \tau_k  \sigma_2(\hat{h}(\xb_k)) \right | \right ]~,
    \end{split}
\end{equation}
where for the last line, we use the $1$-homogeneity of $\sigma$, which implies that for any $h \in \Fbase \setminus \{0\}$, $h / \| h \|_{\Fbase}$ belongs to $\ball{\Fbase}{1}$ and satisfies $\forall \xb \in \mathcal{X}$, $(h / \| h \|_{\Fbase})(\xb) = h(\xb) / \| h \|_{\Fbase}$.

Moreover, the $1$-homogeneity of $\sigma$ also implies that $\sigma_2(0) = 0$. Thus, since $0 \in \ball{\Fbase}{1}$, we have $\sup_{\hat{h} \in \ball{\Fbase}{1}} \sum_{k=1}^n \tau_k  \sigma_2(\hat{h}(\xb_k)) = \big |\sup_{\hat{h} \in \ball{\Fbase}{1}} \sum_{k=1}^n \tau_k  \sigma_2(\hat{h}(\xb_k)) \big | \geq 0$. Therefore,
\begin{equation}
    \begin{split}
        \widehat{\Rad}_S(\Fa(\Fbase, c)) =&~ \frac{c }{n} \EE_{\tau} \left [ \sup_{\hat{h} \in \ball{\Fbase}{1}} \sum_{k=1}^n \tau_k  \sigma_2(\hat{h}(\xb_k)) \right ] \\
        \leq &~ \frac{\mathtt{L}_{\sigma} c}{n} \EE_{\tau} \left [ \sup_{\hat{h} \in \ball{\Fbase}{1}} \sum_{k=1}^n \tau_k \hat{h}(\xb_k) \right ] \\
        =&~ \mathtt{L}_{\sigma} c \widehat{\Rad}_S(\ball{\Fbase}{1})~,
    \end{split}
\end{equation}
where for the second line, we use Lemma~\ref{lem:talagrand} with $\Phi_k(u) = \sigma_2(u)$, $\forall k \in [n]$.
\begin{lemma}[Ledoux-Talagrand contraction lemma]
\label{lem:talagrand}
Suppose $\mathcal{F}$ is any function class and for each $k \in [n]$, $\Phi_k$ is an $L$-Lipschitz function. Then
\begin{equation}
    \frac{1}{n} \EE_{\tau} \left [ \sup_{h \in \mathcal{F}} \sum_{k=1}^n \tau_k (\Phi_{k} \circ h)(\xb_k) \right ] \leq \frac{L}{n} \EE_{\tau} \left [ \sup_{h \in \mathcal{F}} \sum_{k=1}^n \tau_k h (\xb_k) \right ]~.
\end{equation}
\end{lemma}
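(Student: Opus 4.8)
The plan is to prove the contraction lemma by processing one Rademacher sign at a time, so that it suffices to establish the following single-coordinate inequality: for any fixed $\tau_1,\dots,\tau_{n-1}\in\{\pm1\}$, writing $u(h):=\sum_{k=1}^{n-1}\tau_k(\Phi_k\circ h)(\xb_k)$,
\begin{equation*}
\EE_{\tau_n}\Big[\sup_{h\in\mathcal F}\big(u(h)+\tau_n(\Phi_n\circ h)(\xb_n)\big)\Big]\ \le\ \EE_{\tau_n}\Big[\sup_{h\in\mathcal F}\big(u(h)+\tau_n L\,h(\xb_n)\big)\Big].
\end{equation*}
Taking $\EE_{\tau_1,\dots,\tau_{n-1}}$ of both sides of this replaces $\Phi_n$ by $L\cdot(\cdot)$; I would then repeat the same step for the coordinates $k=n-1,n-2,\dots,1$, noting that at each stage the already-processed terms can be absorbed into $u$, so only the $L$-Lipschitz hypothesis on the coordinate currently being processed is used. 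After $n$ applications,
\begin{equation*}
\EE_{\tau}\Big[\sup_{h\in\mathcal F}\sum_{k=1}^n\tau_k(\Phi_k\circ h)(\xb_k)\Big]\ \le\ \EE_{\tau}\Big[\sup_{h\in\mathcal F}\sum_{k=1}^n\tau_k L\,h(\xb_k)\Big]\ =\ L\,\EE_{\tau}\Big[\sup_{h\in\mathcal F}\sum_{k=1}^n\tau_k h(\xb_k)\Big],
\end{equation*}
where the last equality uses $L\ge 0$ (the case $L=0$ being trivial); dividing by $n$ gives the statement.

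To prove the single-coordinate inequality, I would use that $\tau_n$ is uniform on $\{\pm1\}$ to rewrite its left-hand side as $\tfrac12\sup_{h}[u(h)+\Phi_n(h(\xb_n))]+\tfrac12\sup_{h}[u(h)-\Phi_n(h(\xb_n))]$. Fixing $\delta>0$ and choosing near-maximizers $h,h'\in\mathcal F$ of the first and second supremum respectively, the sum of the two suprema is at most $u(h)+u(h')+\big(\Phi_n(h(\xb_n))-\Phi_n(h'(\xb_n))\big)+2\delta$. The Lipschitz hypothesis gives $\Phi_n(h(\xb_n))-\Phi_n(h'(\xb_n))\le L\,|h(\xb_n)-h'(\xb_n)|$; splitting into the cases $h(\xb_n)\ge h'(\xb_n)$ and $h(\xb_n)<h'(\xb_n)$, in either case this bounds the expression by $[u(h)\pm L\,h(\xb_n)]+[u(h')\mp L\,h'(\xb_n)]+2\delta\le\sup_{h''}[u(h'')+L\,h''(\xb_n)]+\sup_{h''}[u(h'')-L\,h''(\xb_n)]+2\delta$. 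Dividing by $2$ and letting $\delta\downarrow 0$ recognizes the right-hand side as $\EE_{\tau_n}[\sup_h(u(h)+\tau_n L\,h(\xb_n))]$, which is exactly the desired inequality.

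The argument is elementary; the only delicate point — which I would treat as the crux — is the sign case analysis that converts $|h(\xb_n)-h'(\xb_n)|$ into a linear expression matching one of the two terms $u(\cdot)\pm L(\cdot)(\xb_n)$. This is precisely what lets the Lipschitz constant transfer without requiring $\Phi_n(0)=0$ or absolute values around the sums. I would also remark once that the suprema over $\mathcal F$ are assumed measurable (equivalently, one may pass to a countable subclass dense in the relevant sense) so that all the Rademacher expectations are well defined; the paper works at this level of rigor throughout, so no further care is needed.
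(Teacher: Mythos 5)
Your proof is correct: the one-coordinate-at-a-time conditioning, the splitting of $\EE_{\tau_n}$ into the two sign terms, the near-maximizer argument with the case analysis removing the absolute value, and the final iteration are exactly the standard argument for this version of the contraction lemma (without absolute values and without requiring $\Phi_k(0)=0$), which is the proof in \citet{mohri2018foundations} that the paper cites instead of reproving. Since the paper's ``proof'' is precisely this citation, your write-up matches the intended argument and is complete, including the correct observations that only the Lipschitz property of the coordinate currently processed is used and that $L\geq 0$ is needed to pull the constant out of the supremum at the end.
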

A proof can be found in \citet{mohri2018foundations}, while a similar result appears in \citet{ledoux1991probability}. 

Thus, 
\begin{equation}
\begin{split}
    \Rad_n(\Fa(\Fbase, c)) = \EE_{S \sim \mathcal{D}^n} \left [ \widehat{\Rad}_S(\Fa(\Fbase, c)) \right ] \leq &~ \mathtt{L}_{\sigma} c \EE_{S \sim \mathcal{D}^n} \left [ \widehat{\Rad}_S(\ball{\Fbase}{1}) \right ] \\
    =&~ \mathtt{L}_{\sigma} c \Rad_n(\ball{\Fbase}{1})~.
\end{split}
\end{equation}

\section{Wasserstein-type metric with $p \in [1, \infty)$}
\label{app:Wp}
For $p \in [1, \infty)$, we can define
\begin{equation}
\label{eq:Wp}
    \mathcal{W}_p(\mu, \mu'; \Fbase, \mathcal{V}) = \left ( \inf_{\pi \in \joint(\mu, \mu')} |a| \| h - h' \|_{\mathcal{V}} \pi(da, dh, dh') \right )^{\frac{1}{p}}~,
\end{equation}
in place of \eqref{eq:Winfty}, and 
\begin{equation}
\label{eq:varn_dag_p}
    \varnp{p}{\Fbase}{\sigma}{\mubase}(f) :=~ \inf_{\mu} \mathcal{W}_{p}(\mu, \mubase; \Con, \rkhsh)~,
\end{equation}
in place of \eqref{eq:varn_dag}.
Then, for any $c \geq 0$, we can use $\FbpUsigc{p}{\Fbase}{\sigma}{\mubase}{c}$ to denote the space of all functions $f$ on $\domX$ such that $\varnp{p}{\Fbase}{\sigma}{\mubase}(f) \leq c$, and further define $\FbpUsig{p}{\Fbase}{\sigma}{\mubase} = \cup_{c > 0} \FbpUsigc{p}{\Fbase}{\sigma}{\mubase}{c}$. 

It is clear that for $1 \leq p \leq p' \leq \infty$, there is $\varnp{p}{\Fbase}{\sigma}{\mubase}(f) \leq \varnp{p'}{\Fbase}{\sigma}{\mubase}(f)$ for any function $f$. 

\section{Proof of Lemma~\ref{lem:rad_1/2}}
\label{app:pf_rad_1/2}
We will state and prove a more general version of Lemma~\ref{lem:rad_1/2} that is also applicable when $\betaA > 0$. First, we extend the definition of the norm $ \varninfty{\Fbase}{\sigma}{\mubase}$ to include the case $\betaA > 0$ as follows. For a Banach space $\Fbase$,
we define the following norm
on $\mathbb{R} \times \Fbase$: 
\begin{equation}
\label{eq:prod_norm}
    \| (a, h) \|_{\Fbase} = \max \{ C_{\betaA} |a|, \| h \|_{\Fbase} \}~,
\end{equation}
where $C_{\betaA} \in [0, \infty]$ is a constant to be specified that depends on $\betaA$.
This norm induces a metric on $\mathbb{R} \times \Fbase$: $\forall a_1, a_2 \in \mathbb{R}$ and $\forall h_1, h_2 \in \Fbase$, 
\begin{equation}
    d_{\Fbase}((a_1, h_1), (a_2, h_2)) = \| (a_1-a_2, h_1-h_2)\|_{\Fbase}~.
\end{equation}
Let $\Fbase$ and $\mathcal{V}$ be two Banach spaces  with norms $\| \cdot \|_{\Fbase}$ and $\| \cdot \|_{\mathcal{V}}$ such that $\Fbase \subseteq \mathcal{V}$.
Let $\mu, \mu'$ be two probability measures on $\mathbb{R} \times \Fbase$, and let $\tilde{\joint}(\mu, \mu')$ denote the space of probability measures on $(\mathbb{R} \times \mathcal{V}) \times (\mathbb{R} \times \mathcal{V})$ with marginals equal to $\mu$ and $\mu'$, respectively. 
For $p \in [1, \infty)$, we define
\begin{equation}
\label{eq:Wp_betaA}
    \mathcal{W}_p(\mu, \mu'; \Fbase, \mathcal{V}) = \left ( \inf_{\pi \in \tilde{\joint}(\mu, \mu')} d_{\Fbase}((a_1, h_1), (a_2, h_2))^p \pi(da_1, dh_1, da_2, dh_2) \right )^{\frac{1}{p}}~,
\end{equation}
and,
\begin{equation}
\label{eq:Winfty_betaA}
    \mathcal{W}_{\infty}(\mu, \mu'; \Fbase, \mathcal{V}) = \inf_{\pi \in \tilde{\joint}(\mu, \mu')} \hspace{10pt} \text{ess} \hspace{-22pt} \sup_{\pi(da, dh, da', dh') \hspace{15pt}} d_{\Fbase}((a_1, h_1), (a_2, h_2))~.
\end{equation}
When $\betaA = 0$, we set $C_{\betaA} = \infty$. Thus, under the convention ``$0 \cdot \infty = 0$'', we see that \eqref{eq:Wp_betaA} and \eqref{eq:Winfty_betaA} are equivalent to the definitions \eqref{eq:Wp} and \eqref{eq:Winfty}. We then also define $\varninfty{\Fbase}{\sigma}{\mubase}$ and $\varnp{p}{\Fbase}{\sigma}{\mubase}$ through \eqref{eq:varn_dag} and \eqref{eq:varn_dag_p}, as well as $\FbUsigc{\Fbase}{\sigma}{\mubase}{c}$ and $\FbpUsigc{p}{\Fbase}{\sigma}{\mubase}{c}$ in the same way as before.

Under the generalized definitions, we state the following lemma, which extends Lemma~\ref{lem:rad_1/2}:
\begin{lemma}
\label{lem:rad_1/2_betaA}
Assume that $\sigma$ is $\mathtt{L}_{\sigma}$-Lipschitz and $\int_{\mathbb{R} \times \Con} |a| \mubase(da, dh) = \bar{a} < \infty$. 
If $\betaA > 0$, we further assume that $|\sigma(u)| < \mathtt{M}_{\sigma_2}$, $\forall u \in \Rbb$. Then it holds that,
\begin{equation}
    \widehat{\Rad}_S(\FbUsigc{\Fbase}{\sigma}{\mubase}{c}) \leq \mathtt{L}_{\sigma} \left ( \bar{a} + \frac{c}{C_{\betaA}} \right )  \widehat{\Rad}_S(\ball{\Fbase}{c}) + \frac{\mathtt{M}_{\sigma} c}{\sqrt{n} C_{\betaA}}~.
\end{equation}
\end{lemma}
\begin{proof}
Given any $f \in \FbUsigc{\rkhsh}{\sigma}{\mubase}{c}$, let $\mu$ denote its corresponding measure. Define the function $f_{\text{base}}(\xb) = \int_{\Rbb \times \Con} a \sigbig{h(\xb)} \mubase(da, dh)$ on $\domX$. Since $\mathcal{W}_{\infty}(\mubase, \mu; \Con, \rkhsh) \leq c$, $\exists \pi \in \tilde{\joint}(\mubase, \mu)$ such that 
almost surely with respect to $\pi(da_1, dh_1, da_2, dh_2)$,
\begin{equation}
\label{eq:d_on_supp}
    d((a_1, h_1), (a_2, h_2) \leq c~.
\end{equation}
We then see that
\begin{equation}
    \begin{split}
        f(\xb) =& \int_{\mathbb{R} \times \Con} a_{\star} \sigbig {h_{\star}(\xb)} \mu(da_{\star}, dh_{\star}) \\
        =& \int_{\mathbb{R} \times \Con \times \mathbb{R} \times \Con} a_{\star} \sigbig { h_{\star}(\xb)} \pi(da, dh, da_{\star}, dh_{\star}) \\
        = & \int_{\mathbb{R} \times \Con \times \mathbb{R} \times \Con} (a+\tilde{a}) \sigbig {h(\xb) + \tilde{h}(\xb)} \tilde{\pi}(da, dh, d \tilde{a}, d \tilde{h})~,
    \end{split}
\end{equation}
where $\tilde{\pi}$ is the push-forward of $\pi$ under the map $(a, h, a', h') \mapsto (a, h, a'-a, h'-h)$. Let $\xi(d\tilde{a}, d\tilde{h}; a, h)$ denote the Radon-Nikodym derivative of $\tilde{\pi}$ with respect to $\mubase$ (or in other words, the conditional probability measure of $\tilde{a}$ and $\tilde{h}$ with respect to $a$ and $h$). Then, \eqref{eq:d_on_supp} implies that $\mubase(da, dh)$-almost surely, 
$\xi(\cdot, \cdot; a, h)$ has probability mass $0$ outside of $\ball{\Rbb \times \Fbase}{c} $.
Thus,
\begin{equation}
\begin{split}
    f(\xb) =& \int_{\mathbb{R} \times \Con} \int_{\mathbb{R} \times \Con}  (a+\tilde{a}) \sigbig { h(\xb) + \tilde{h}(\xb)} \xi(d\tilde{a}, d\tilde{h}; a, h) \mubase(da, dh) \\
    =& \int_{\mathbb{R} \times \Con} \int_{\ball{\Rbb \times \Fbase}{c}} (a+\tilde{a}) \sigbig { h(\xb) + \tilde{h}(\xb)} \xi(d\tilde{a}, d\tilde{h}; a, h) \mubase(da, dh) ~,
\end{split}
\end{equation}
and
\begin{equation}
\begin{split}
    &~ f(\xb) - f_{\text{base}}(\xb) \\
    =&~ \int_{\mathbb{R} \times \Con} \Big ( \int_{\ball{\Rbb \times \Fbase}{c}} (a+\tilde{a}) \sigma_2(h(\xb) + \tilde{h}(\xb)) \xi(d\tilde{a}, d\tilde{h}; a, h) - a \sigbig{h(\xb)} \Big ) \mubase(da, dh) \\
    =&~ \int_{\mathbb{R} \times \Con} \int_{\ball{\Rbb \times \Fbase}{c}} \Big ((a+\tilde{a}) \sigma_2(h(\xb) + \tilde{h}(\xb)) - a \sigbig{h(\xb)} \Big ) \xi(d\tilde{a}, d\tilde{h}; a, h)  \mubase(da, dh) ~.
\end{split}
\end{equation}
Given $S = \{ \xb_1, ..., \xb_n \} \subseteq \mathcal{X}$, the empirical Rademacher complexity of $\FbUsigc{\rkhsh}{\sigma}{\mubase}{c}$ is
\begin{equation}
    \begin{split}
        & \widehat{\Rad}_S(\FbUsigc{\rkhsh}{\sigma}{\mubase}{c}) \\
        =& \frac{1}{n} \EE_{\tau} \left [ \sup_{f \in \FbUsigc{\rkhsh}{\sigma}{\mubase}{c}} \sum_{k=1}^n \tau_k f(\xb_k) \right ] \\
        =& \frac{1}{n} \EE_{\tau} \left [ \sup_{f \in \FbUsigc{\rkhsh}{\sigma}{\mubase}{c}} \sum_{k=1}^n \tau_k \Big ( f(\xb_k) - f_{\text{base}}(\xbtrk{k}) \Big ) \right ] \\
        =& \frac{1}{n} \EE_{\tau} \left [ \sup_{\xi } \int_{\mathbb{R} \times \Con} \int_{\mathbb{R} \times \Fbase} \sum_{k=1}^n \tau_k \Big ((a+\tilde{a}) \sigbig{h(\xbtrk{k}) + \tilde{h}(\xbtrk{k})} - a \sigbig{h(\xbtrk{k})} \Big ) \xi(d\tilde{a}, d\tilde{h}; a, h) \mubase(da, dh)  \right ] \\
        =& \frac{1}{n} \EE_{\tau} \left [ \int_{\mathbb{R} \times \Con} \sup_{\xi(\cdot, \cdot; a, h)} \left ( \int_{\mathbb{R} \times \Fbase} \sum_{k=1}^n \tau_k \Big ((a+\tilde{a}) \sigbig{h(\xbtrk{k}) + \tilde{h}(\xbtrk{k})} - a \sigbig{h(\xbtrk{k})} \Big ) \xi(d\tilde{a}, d\tilde{h}; a, h) \right )\mubase(da, dh)  \right ] \\
        = & \int_{\mathbb{R} \times \Con}\frac{1}{n} \EE_{\tau} \left [ \sup_{\xi(\cdot, \cdot; a, h)} \int_{\mathbb{R} \times \Fbase} \sum_{k=1}^n \tau_k \Big ((a+\tilde{a}) \sigbig{h(\xbtrk{k}) + \tilde{h}(\xbtrk{k})} - a \sigbig{h(\xbtrk{k})} \Big ) \xi(d\tilde{a}, d\tilde{h}; a, h) \right ] \mubase(da, dh) \\
        \leq & \int_{\mathbb{R} \times \Con}\frac{1}{n} \EE_{\tau} \left [ \sup_{(\tilde{a}, \tilde{h}) \in \ball{\Rbb \times \Fbase}{c}} \sum_{k=1}^n \tau_k \Big ((a+\tilde{a}) \sigbig{h(\xbtrk{k}) + \tilde{h}(\xbtrk{k})} - a \sigbig{h(\xbtrk{k})} \Big ) \right ] \mubase(da, dh) ~,
    \end{split}
\end{equation}
where in lines $4$ - $6$, the supremum is taken over all $\xi$ such that $\mubase(da, dh)$-almost surely, $\xi(\cdot, \cdot~; a, h)$ has probability mass $0$ outside of $\mathcal{P}(\ball{\Rbb \times \Fbase}{c} )$.
For any $a \in \mathbb{R}$ and $h \in \Con$, we see that
\begin{equation}
\label{eq:3terms}
    \begin{split}
        & \frac{1}{n} \EE_{\tau} \left [ \sup_{(\tilde{a}, \tilde{h}) \in \ball{\Rbb \times \Fbase}{c}} \sum_{k=1}^n \tau_k \Big ((a+\tilde{a}) \sigbig{h(\xbtrk{k}) + \tilde{h}(\xbtrk{k})} - a \sigbig{h(\xbtrk{k})} \Big ) \right ] \\ 
        \leq & \frac{1}{n} \EE_{\tau} \left [ \sup_{\| \tilde{h} \|_{\Fbase} \leq c} \sum_{k=1}^n \tau_k a \Big ( \sigbig{h(\xb_k) + \tilde{h}(\xb_k)} - \sigbig{h(\xbtrk{k})} \Big )  \right ] \\
        & + \frac{1}{n} \EE_{\tau} \left [ \sup_{(\tilde{a}, \tilde{h}) \in \ball{\Rbb \times \Fbase}{c}} \sum_{k=1}^n \tau_k \tilde{a} \left (\sigbig{h(\xb_k) + \tilde{h}(\xb_k)} - \sigbig{h(\xb_k)} \right ) \right ] \\
        & + \frac{1}{n} \EE_{\tau} \left [ \sup_{|\tilde{a}| \leq c / C_{\betaA}} \sum_{k=1}^n \tau_k \tilde{a} \sigbig{h(\xb_k)} \right ]~.
    \end{split}
\end{equation}
We bound the three terms on the right-hand side separately.
For the first term,
\begin{equation}
    \begin{split}
        & \frac{1}{n} \EE_{\tau} \left [ \sup_{\| \tilde{h} \|_{\Fbase} \leq c} \sum_{k=1}^n \tau_k a \Big ( \sigma(h(\xb_k) + \tilde{h}(\xb_k)) - \sigbig{h(\xbtrk{k})} \Big ) \right ] \\
        \leq & \frac{|a|}{n} \EE_{\tau} \Bigg [ \sup_{\| \tilde{h} \|_{\Fbase} \leq c} \left | \sum_{k=1}^n \tau_k \Big ( \sigbig{h(\xb_k) + \tilde{h}(\xb_k)} - \sigma\big(h(\xbtrk{k}) \big ) \Big ) \right | \Bigg ] \\
        \leq & \frac{|a|}{n} \Bigg ( \EE_{\tau} \left [ \sup_{\| \tilde{h} \|_{\Fbase} \leq c} \sum_{k=1}^n \tau_k \Big ( \sigma  \big ( h(\xb_k) + \tilde{h}(\xb_k) \big ) - \sigbig{h(\xbtrk{k})} \Big ) \right ] \\
        & \hspace{15pt} + \EE_{\tau} \left [ \sup_{\| \tilde{h} \|_{\Fbase} \leq c} \sum_{k=1}^n (-\tau_k) \Big ( \sigbig{h(\xb_k) + \tilde{h}(\xb_k)} - \sigbig{h(\xbtrk{k})} \Big ) \right ] \Bigg )\\
        \leq & \frac{2 |a|}{n} \EE_{\tau} \left [ \sup_{\| \tilde{h} \|_{\Fbase} \leq c} \sum_{k=1}^n \tau_k \Big ( \sigbig{h(\xb_k) + \tilde{h}(\xb_k)} - \sigbig{h(\xbtrk{k})} \Big ) \right ] \\
        \leq & \frac{\mathtt{L}_{\sigma} |a|}{n} \EE_{\tau} \left [ \sup_{\| \tilde{h} \|_{\Fbase} \leq c} \sum_{k=1}^n \tau_k \tilde{h}(\xb_k) \right ] \\
        =& \mathtt{L}_{\sigma} |a| \widehat{\Rad}_S(\ball{\Fbase}{c})~,
    \end{split}
\end{equation}
where the second inequality uses the fact that $\ball{\Fbase}{c}$ contains the zero function for any $c \geq 0$, which implies that for any $\tau$, $\sup_{\| \tilde{h} \|_{\Fbase} \leq c} \sum_{k=1}^n \tau_k\left (\sigma(h(\xb_k) + \tilde{h}(\xb_k)) - \sigma(h(\xb_k)) \right )
    \geq \sum_{k=1}^n \tau_k\left (\sigma(h(\xb_k) + 0) - \sigma(h(\xb_k)) \right )
    = 0$;
the third inequality uses the symmetry of the Rademacher distribution; and the fourth inequality uses Lemma~\ref{lem:talagrand}, with each $\Phi_{k}(u)$ defined to be $\sigbig{h(\xb_k) + u} - \sigbig{h(\xbtrk{k})}$.

For the second term,
\begin{equation}
    \begin{split}
        & \frac{1}{n} \EE_{\tau} \left [ \sup_{(\tilde{a}, \tilde{h}) \in \ball{\Rbb \times \Fbase}{c}} \sum_{k=1}^n \tau_k \tilde{a} \Big ( \sigma(h(\xb_k) + \tilde{h}(\xb_k)) - \sigbig{h(\xbtrk{k})} \Big )  \right ] \\
        \leq & \frac{1}{n} \EE_{\tau} \left [ \sup_{|\tilde{a}| \leq c / C_{\betaA}} \sup_{\| \tilde{h} \|_{\Fbase} \leq c} \sum_{k=1}^n \tau_k \tilde{a} \Big ( \sigma(h(\xb_k) + \tilde{h}(\xb_k)) - \sigbig{h(\xbtrk{k})} \Big )  \right ]  \\
        \leq & \frac{c}{C_{\betaA} n} \EE_{\tau} \left [ \sup_{\| \tilde{h} \|_{\Fbase} \leq c} \left | \sum_{k=1}^n \tau_k \Big ( \sigma(h(\xb_k) + \tilde{h}(\xb_k)) - \sigbig{h(\xbtrk{k})} \Big )  \right | \right ] \\
        \leq & \frac{2 c}{C_{\betaA} n} \EE_{\tau} \left [ \sup_{\| \tilde{h} \|_{\Fbase} \leq c} \sum_{k=1}^n \tau_k \Big ( \sigma(h(\xb_k) + \tilde{h}(\xb_k)) - \sigbig{h(\xbtrk{k})} \Big )  \right ] \\
        \leq & \frac{\mathtt{L}_{\sigma} c}{C_{\betaA} n} \EE_{\tau} \left [ \sup_{\| \tilde{h} \|_{\Fbase} \leq c} \sum_{k=1}^n \tau_k \tilde{h}(\xb_k) \right ] \\
        \leq & \frac{\mathtt{L}_{\sigma} c}{C_{\betaA}} \widehat{\Rad}_S(\ball{\Fbase}{c})~,
    \end{split}
\end{equation}
where the third and fourth inequalities again use the fact that $\ball{\Fbase}{c}$ contains the zero function for any $c \geq 0$ and Lemma~\ref{lem:talagrand}, respectively.

For the third term,
\begin{equation}
    \begin{split}
        \frac{1}{n} \EE_{\tau} \left [ \sup_{|\tilde{a}| \leq c / C_{\betaA}} \sum_{k=1}^n \tau_k \tilde{a} \sigbig { h(\xb_k) } \right ] =&~ \frac{c}{n C_{\betaA}} \EE_{\tau} \left [ \left | \sum_{k=1}^n \tau_k \sigbig { h(\xb_k) } \right | \right ] \\
        \leq &~ \frac{c}{n C_{\betaA}} \left ( \EE_{\tau} \left [ \left | \sum_{k=1}^n \tau_k \sigbig { h(\xb_k) } \right |^2 \right ] \right )^{\frac{1}{2}} 
        \leq \frac{\mathtt{M}_{\sigma} c}{\sqrt{n} C_{\betaA}}~.
    \end{split}
\end{equation}

Therefore, from \eqref{eq:3terms} we deduce that
\begin{equation}
    \frac{1}{n} \EE_{\tau} \left [ \sup_{(\tilde{a}, \tilde{h}) \in \ball{\Rbb \times \Fbase}{c}} \sum_{k=1}^n \tau_k (a+\tilde{a}) \sigbig { h(\xb_k) + \tilde{h}(\xb_k) } \right ] \leq \left ( \mathtt{L}_{\sigma}|a| + \frac{\mathtt{L}_{\sigma} c}{C_{\betaA}} \right )  \widehat{\Rad}_S(\ball{\Fbase}{c}) + \frac{\mathtt{M}_{\sigma} c}{\sqrt{n} C_{\betaA}}~.
\end{equation}
Hence,
\begin{equation}
\label{eq:rad_bdd_betaA}
\begin{split}
    \widehat{\Rad}_S(\FbUsigc{\rkhsh}{\sigma}{\mubase}{c}) \leq &~ \int_{\mathbb{R} \times \Con} \left ( \mathtt{L}_{\sigma}|a| + \frac{\mathtt{L}_{\sigma} c}{C_{\betaA}} \right )  \widehat{\Rad}_S(\ball{\Fbase}{c}) + \frac{\mathtt{M}_{\sigma} c}{\sqrt{n} C_{\betaA}} ~\mubase(da, dh) \\
    \leq &~ \mathtt{L}_{\sigma} \left ( \bar{a} + \frac{c}{C_{\betaA}} \right )  \widehat{\Rad}_S(\ball{\Fbase}{c}) + \frac{\mathtt{M}_{\sigma} c}{\sqrt{n} C_{\betaA}}~.
\end{split}
\end{equation}
In particular, when $\betaA = 0$, the results above reduce to Lemma~\ref{lem:rad_1/2} and Corollary~\ref{cor:rad_1/2_hilb} (and does not require $\sigma$ to be bounded).
\end{proof}

\section{Additional Experiment Results}
\label{app:exp}
\begin{figure}[h]
    \centering
    \includegraphics[scale=0.35]{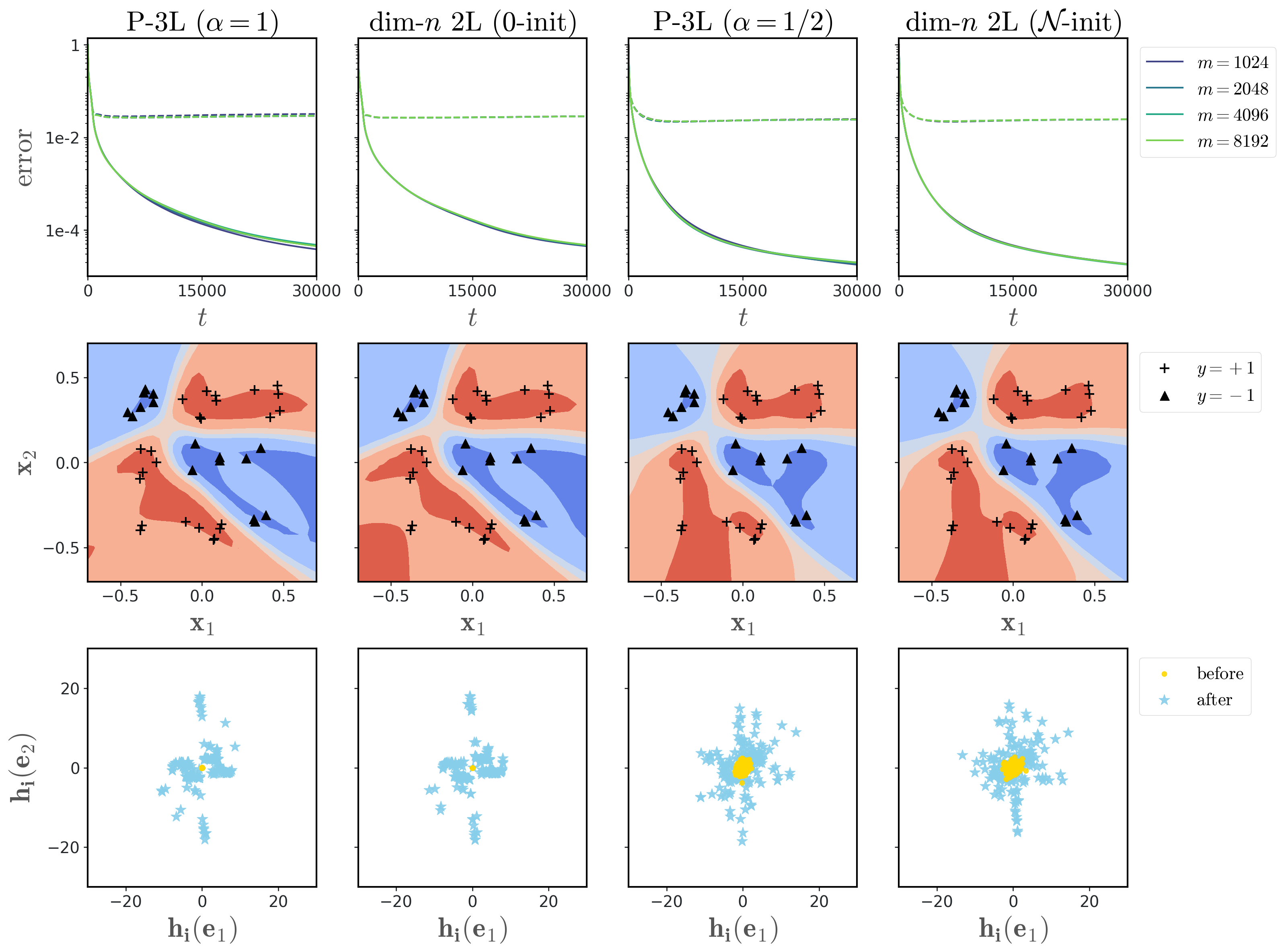}
    \caption{Comparison between \ptl NNs and their corresponding $n$-dimensional shallow NNs on Task I. The plots are defined in the same way as in Figure~\ref{fig:chizat2D}.}
    \label{fig:chizat_tanh_dim-n}
\end{figure}
\begin{figure}
    \centering
    \includegraphics[scale=0.35]{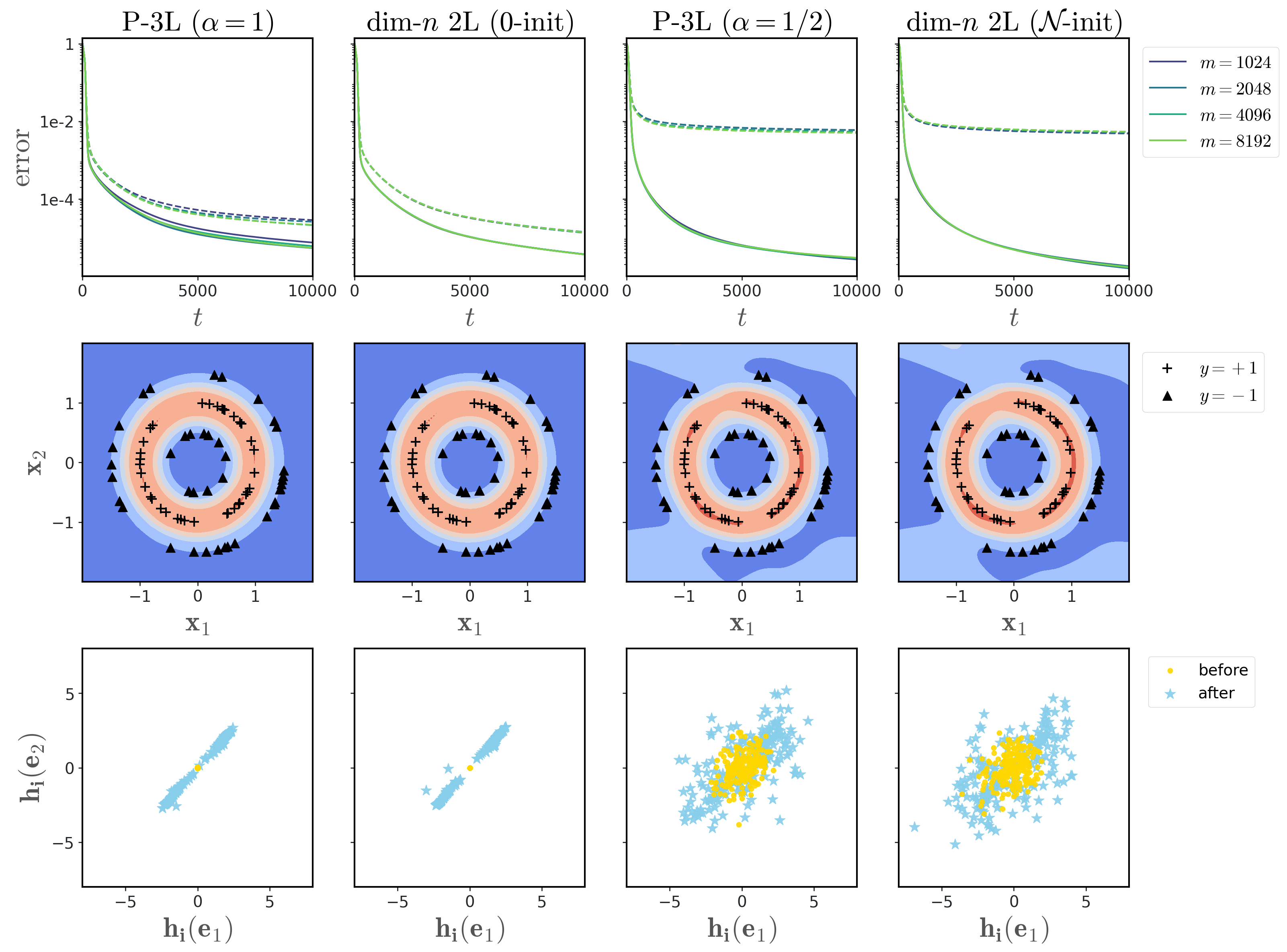}
    \caption{Comparison between \ptl NNs and their corresponding $n$-dimensional shallow NNs on Task II with $\sigma_2$ as tanh. The plots are defined in the same way as in Figure~\ref{fig:chizat2D}.}
    \label{fig:radial_tanh_dim-n}
\end{figure}
\begin{figure}
    \centering
    \includegraphics[scale=0.35]{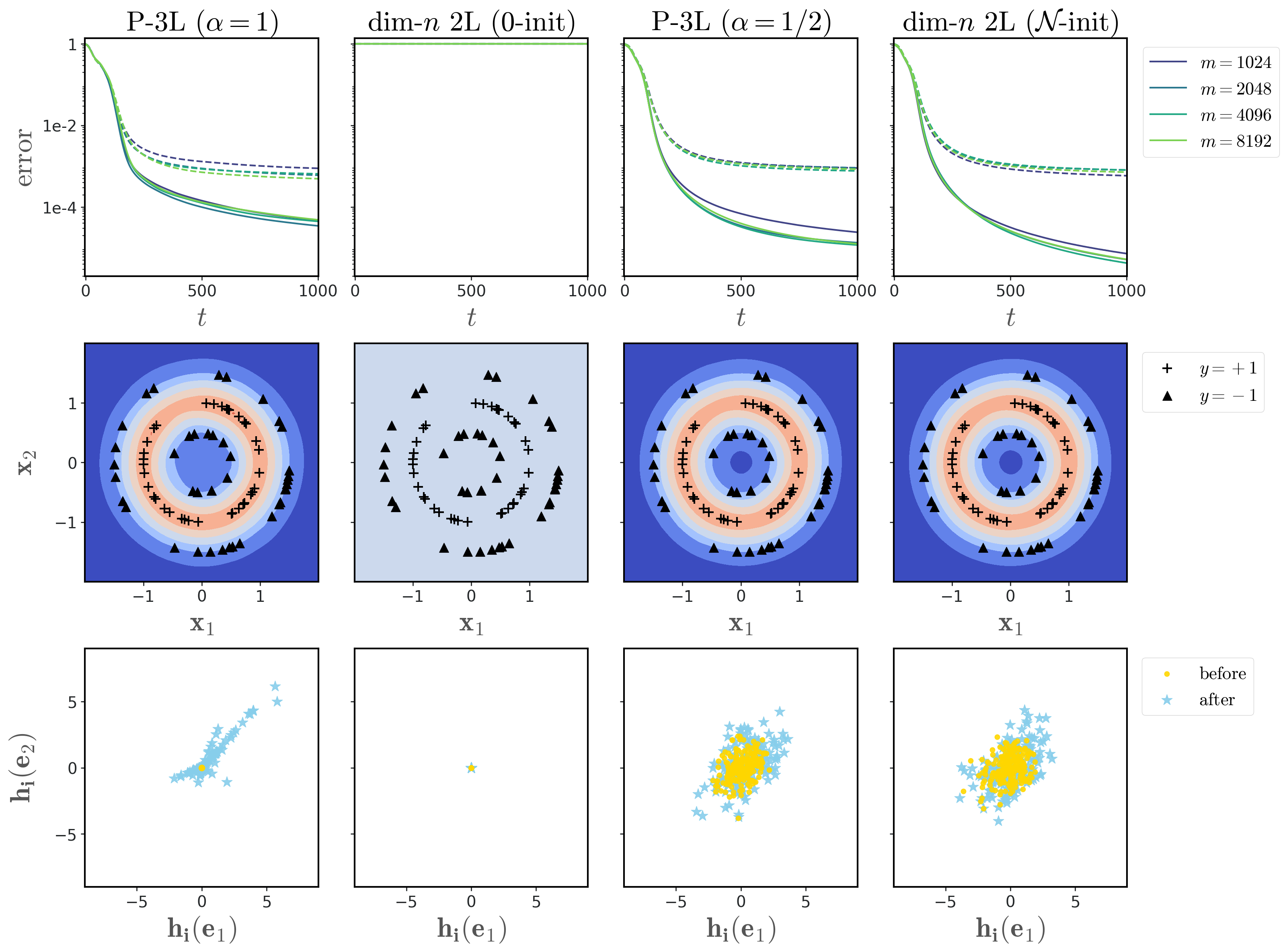}
    \caption{Comparison between \ptl NNs and their corresponding $n$-dimensional shallow NNs on Task II with $\sigma_2$ as ReLU. The plots are defined in the same way as in Figure~\ref{fig:chizat2D}.}
    \label{fig:radial_relu_dim-n}
\end{figure}
\begin{figure}
    \centering
    \includegraphics[scale=0.35]{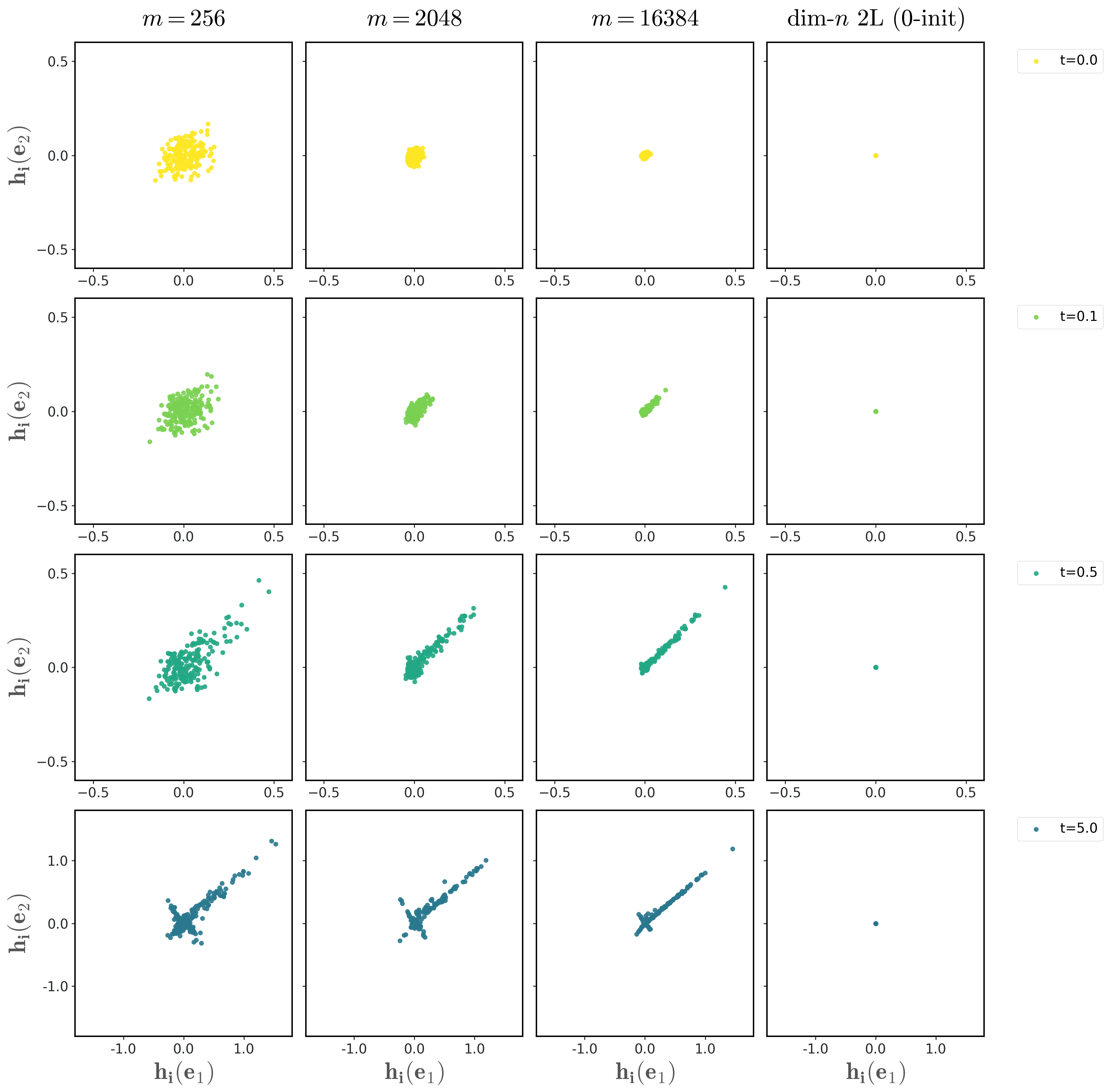}
    \caption{Comparison between \textbf{\ptbl ($\boldsymbol{\alpha = 1}$)} with various $m$ and \textbf{dim-$\boldsymbol{n}$ $\boldsymbol{2}$L ($\boldsymbol{0}$-init)} in terms of pre-activation values of second-hidden-layer neurons during early training.
    At initial time, as $m$ increases in \textbf{\ptbl ($\boldsymbol{\alpha = 1}$)}, the neurons' pre-activation values shrink in their magnitude and converge to the zero due to the LLN. But because they are not \emph{exactly} zero, gradients can be back-propagated through the ReLU activation and weights are able to evolve during training. In contrast, those in  \textbf{dim-$\boldsymbol{n}$ $\boldsymbol{2}$L ($\boldsymbol{0}$-init)} are exactly zero at initialization and therefore at all times as well due to ReLU being not differentiable at zero.}
    \label{fig:early}
\end{figure}
\begin{figure}
    \centering
    \includegraphics[scale=0.35]{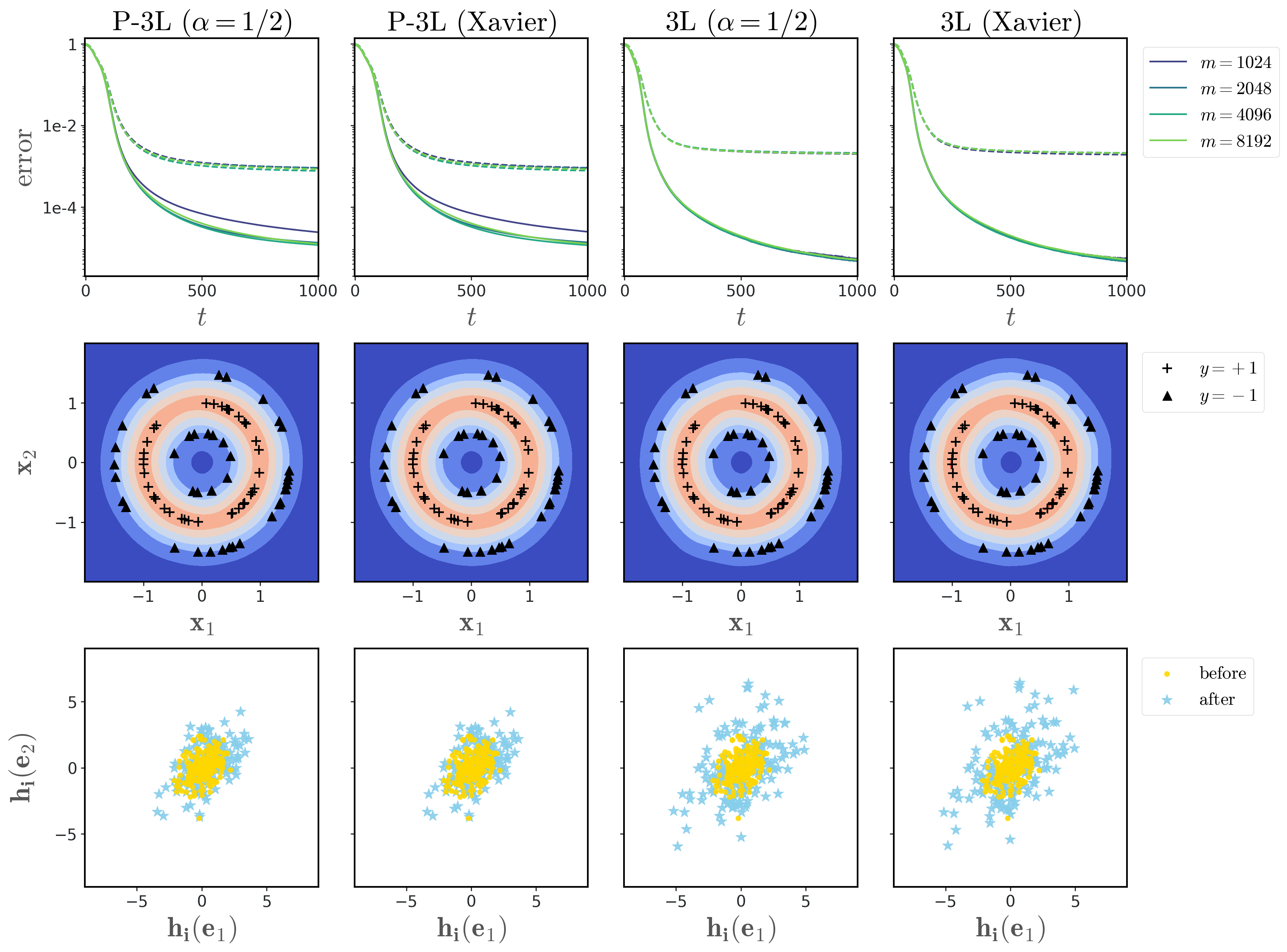}
    \caption{Comparison between $\alpha = 1 / 2$ and the Xavier-initialized standard parameterization for both \ptl and $3$-L NN on Task II with ReLU as $\sigma_2$.}
    \label{fig:xavier}
\end{figure}
\begin{figure}
    \centering
    \includegraphics[width=0.8\linewidth]{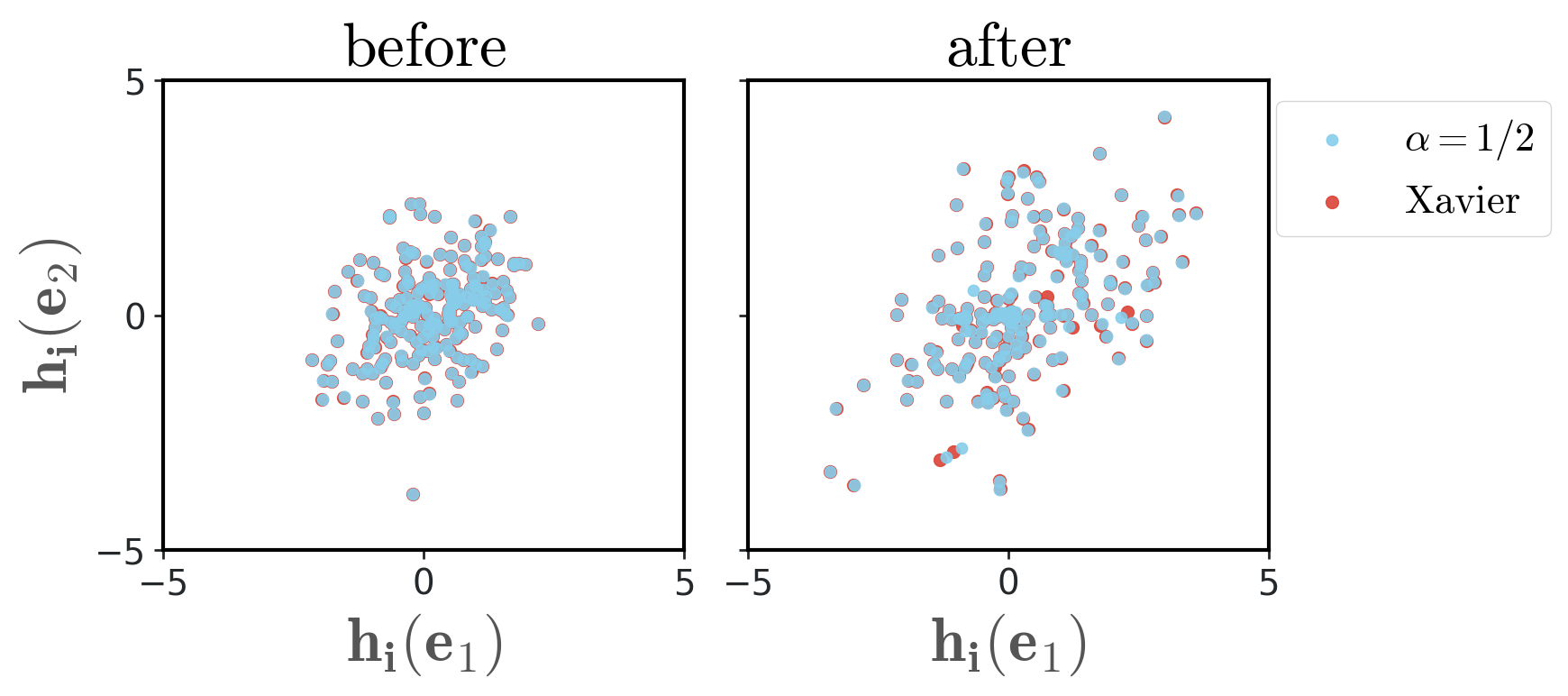}
    \caption{Finer comparison between $\alpha = 1 / 2$ and the Xavier-initialized standard parameterization for \ptl NN on Task II with ReLU as $\sigma_2$. We plot the pre-activation values of the second-hidden-layer neurons in the \ptl NN before and after training.}
    \label{fig:xavier_vs_1/2}
\end{figure}

\clearpage

\bibliography{ref}

\end{document}